\theoremstyle{plain}
\newtheorem{proposition}{Proposition}[section]
\newtheorem{lemma}{Lemma}[section]
\newtheorem{corollary}{Corollary}[section]
\newtheorem{definition}{Definition}[section]
\newtheorem{remark}{Remark}[section]
    \crefname{section}{Sec.}{Secs.}
    \crefname{section}{Section}{Sections}
    \crefname{table}{Table}{Tables}
    \crefname{table}{Tab.}{Tabs.}
    \crefname{equation}{Equation}{Equations}
    \crefname{equation}{Eq.}{Eqs.}
    \crefname{figure}{Figure}{Figures}
    \crefname{figure}{Fig.}{Figs.}
    \crefname{appendix}{Supplementary}{Supplementaries}
    \crefname{appendix}{Supp.}{Supps.}
    \crefname{definition}{Definition}{Definitions}
    \crefname{definition}{Def.}{Defs.}
    \crefname{proposition}{Proposition}{Propositions}
    \crefname{proposition}{Prop.}{Props.}
\newcolumntype{L}[1]{>{\raggedright\let\newline\\\arraybackslash\hspace{0pt}}m{#1}}
\newcolumntype{C}[1]{>{\centering\let\newline\\\arraybackslash\hspace{0pt}}m{#1}}
\newcolumntype{R}[1]{>{\raggedleft\let\newline\\\arraybackslash\hspace{0pt}}m{#1}}
\title{
Causality-Induced Positional Encoding for Transformer-Based Representation Learning of Non-Sequential Features
}
\author{%
  \textbf{Kaichen Xu$^1$\thanks{These authors contributed equally.} , Yihang Du$^2$, Mianpeng Liu$^2$, Zimu Yu$^2$, Xiaobo Sun$^{3*}$\thanks{Correspondence: \texttt{xsun28@emory.edu}}} \\
  $^1$ Department of Computer Science, Emory University \\
  $^2$ School of Statistics and Mathematics, Zhongnan University of Economics and Law \\
  $^3$ School of Medicine, Department of Human Genetics, Emory University
}
\begin{document}

\maketitle

\begin{abstract}
Positional encoding is essential for supplementing transformer with positional information of tokens. Existing positional encoding methods demand predefined token/feature order, rendering them unsuitable for real-world data with non-sequential yet causally-related features. To address this limitation, we propose \textbf{CAPE},  a novel method that identifies underlying causal structure over non-sequential features as a weighted directed acyclic graph (DAG) using generalized structural equation modeling. The DAG is then embedded in hyperbolic space where its geometric structure is well-preserved using a hyperboloid model-based approach that effectively captures two important causal graph properties (\textbf{causal strength} \& \textbf{causal specificity}). This step yields causality-aware positional encodings for the features, which are converted into their rotary form for integrating with transformer's self-attention mechanism. Theoretical analysis reveals that CAPE-generated rotary positional encodings possess three valuable properties for enhanced self-attention, including \textit{causal distance-induced attenuation, causal generality-induced attenuation}, and \textit{robustness to positional disturbances}.  We evaluate CAPE over both synthetic and real-word datasets, empirically demonstrating its theoretical properties and effectiveness in enhancing transformer for data with non-sequential features. Our code is available at \url{https://github.com/Catchxu/CAPE}.
\end{abstract}

\section{Introduction}
Transformer \cite{transformer} has become the cornerstone in modern deep learning models, powering advances in natural language processing (NLP) \cite{BERT,GPT,RoBERTa,ALBERT}, computer vision \cite{ViT, DETR, DALL-E}, speech and audio processing \cite{TransformerTTS,Whisper}, and multimodal learning \cite{CLIP, BLIP, LLava}. At the core of transformer is the self-attention mechanism, which effectively captures dependencies among sequential elements. However, this mechanism is inherently position-agonistic and permutation-invariant \cite{yun2019transformers}. 
% meaning it treats input sequences as unordered sets without additional structural guidance . 
Positional information is crucial in learning semantics because it encodes sequential dependencies analogous to directional \textit{causal structure} \cite{GPT}, as opposed to the undirected associations captured by self-attention. 
% For example, word positions represent the direction of information flow, where the occurrence of a word is probabilistically constrained by its preceding context \cite{GPT}. 
To inject positional information into the transformer architecture, a plethora of strategies have been proposed to generate positional encodings that are integrated with contextual embeddings. These approaches include fixed sinusoidal functions \cite{transformer},  trainable absolute or relative positional encodings \cite{BERT,ALBERT,huang2020improve,Deberta,ke2020rethinking}, and more recent rotary positional encodings (RoPE)
% which captures positional variation across multiple frequency bands
\cite{rope,MassiveValues}. Notably, these methods generally assume natural, inherent ordering in the data, such as the sequence of words in a sentence or the spatial arrangement of image patches.

However, this assumption breaks down in the context of many real-world datasets, where rows represent independent observations, columns represent non-sequential features, and entries capture the quantity or state of each feature for a given observation. For example, in biomedical sciences, multi-omics data, such as transcriptomics \cite{transcriptomics} and proteomics \cite{proteomics}, measure gene and protein expression levels within samples. The expressions of genes and proteins lack a predefined sequence, despite their intricate causal order. Similarly, economic studies often involve non-sequential but causally linked economic indicators collected across different regions or countries. Therefore, existing positional encoding methods struggle to capture these underlying causal structures, limiting the applicability of transformers to such data. In response, some preliminary efforts have emerged within specialized domains. For example, in single-cell transcriptomics, transformer-based foundational models seek to generate distributed gene representations from large corpus of scRNA-seq datasets. To impose a pseudo-order on genes, some of these models \cite{geneformer,genecompass} organize trainable gene embeddings into sequences based on their expression levels, while others \cite{scbert, scgpt} use pretrained, static gene embeddings as pseudo-positional encodings. However, a critical limitation of these methods is that they neglect the underlying causal structure among genes.

\begin{figure*}[!t]
\centering
\vspace{-2em}
\includegraphics[width=\linewidth]{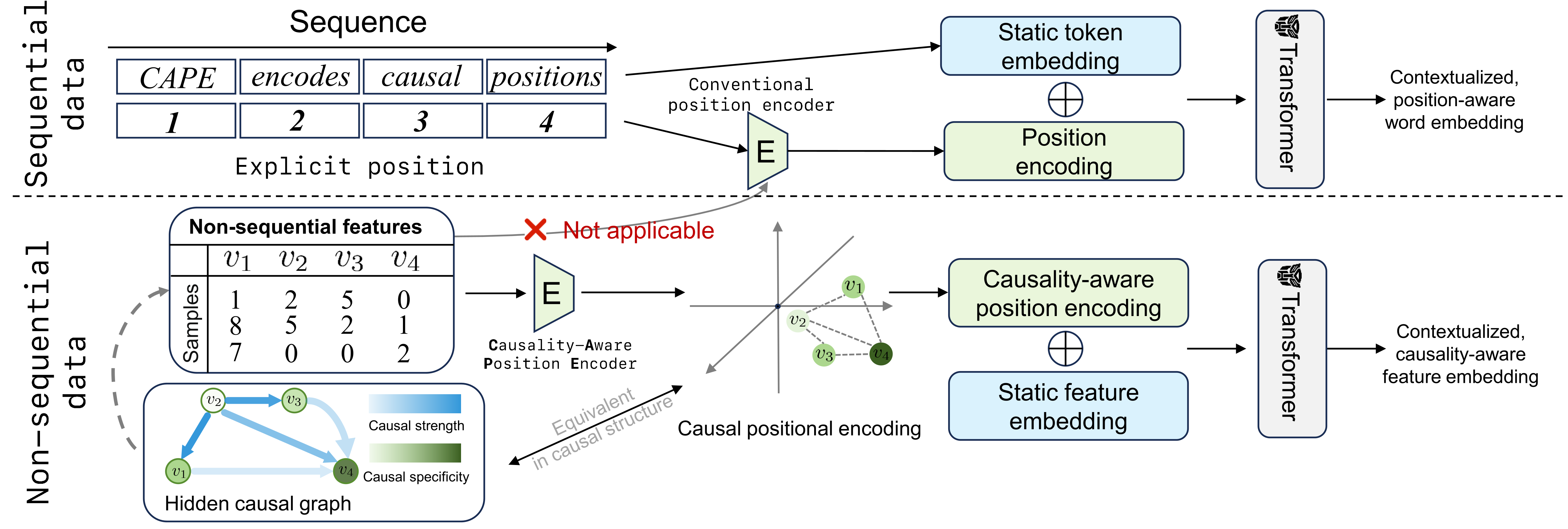}
\caption{Causal position information can be utilized in place of explicit position information for transformer-based representation learning of data with non-sequential yet causally-related features. }\label{fig1}
\end{figure*}

In this study, we propose \textbf{C}ausality-\textbf{A}ware \textbf{P}osition \textbf{E}ncoder (\textbf{CAPE}), a novel method for generating causality-aware positional encodings that extend the transformer architecture to data with non-sequential yet causally-related features. Initially, CAPE leverages a generalized structural equation model (SEM) to model the hidden causal structure among features as a weighted directed acyclic graph (DAG), which is efficiently identified through neural variational inference with a constraint-based, continuous optimization technique \cite{notears,DAG-GNN,DeepSEM}. Next, inspired by the theory of special relativity, which links causal connections between events to their relative positions in hyperbolic spacetime \cite{Minkowski,MinkowskiCausal},
% (e.g., the light cone in Minkowski spacetime \cite{Minkowski,MinkowskiCausal}), 
we utilize the hyperboloid model\footnote{\url{https://en.wikipedia.org/wiki/Hyperboloid_model}} to embed the DAG into the hyperbolic space, which is known for its ability to model tree-like networks commonly seen in DAGs \cite{krioukov2010hyperbolic}. Specifically, nodes in the DAG are represented as points on a Riemannian manifold, with their positions learned through regularized graph contrastive learning, optimized via Riemannian stochastic gradient descent (RSGD) \cite{RSGD}.
% the spatial positions of graph nodes are represented on Riemannian manifold, learned using regularized graph contrastive learning with Riemannian stochastic gradient descent (RSGD). 
This approach ensures that the learned embeddings capture two critical causal graph properties, including \textbf{causal strength} and \textbf{causal specificity} (see \cref{sec_hyperbolic}) \cite{nickel2018learning}, thus preserving the original causal structure of the DAG. Finally, CAPE converts the hyperbolic positional encodings into rotary form, a causality-induced version of RoPE \cite{rope}. This form offers several key benefits, including compatibility with linear self-attention \cite{rope} and enhanced understanding of contextual knowledge \cite{MassiveValues}. We further theoretically demonstrate that the causality-aware, rotary positional encodings offers three valuable properties in computing self-attention: Attention strength attenuates with increasing causal distance (\textit{causal distance-induced attention attenuation}, \cref{sec_caa} ) or decreasing causal specificity (\textit{causal generality-induced attention attenuation}, \cref{sec_css}), and attention scores exhibit \textit{robustness to positional disturbances} (\cref{sec_rme}). 
% The bias of the attention score caused by random measurement errors has an expected value of zero and asymptotically converges to zero (\textbf{robustness to measurement errors} \cref{prop_robustness}). 
% Given that Poincar{\'e} ball represents an origin-centered, spherical hyperbolic space, it is more naturally suited for the rotary form. Thus, node embeddings are first mapped into a unit Poincar{\'e} ball via diffeomorphism, followed by the transformation into their rotary form, a novel causality-aware version of RoPE \cite{rope}.
% We further theoretically demonstrate that this rotary form offers three valuable properties in computing self-attention: Attention strength attenuates with increasing causal distance (\textbf{causal distance attenuation} \cref{prop_distance} ) or decreasing causal specificity (\textbf{causal specificity sensitivity} \cref{prop_specificity}), and the attention score bias caused by random measurement errors has an expected value of zero and asymptotically converges to zero (\textbf{robustness to measurement errors} \cref{prop_robustness}). 
In summary, our main contributions include:
\begin{itemize}
    \item We propose CAPE, a novel method for generating causality-aware positional encodings for data with non-sequential yet causally-related features. It eliminates the need for predefined feature ordering required by conventional positional encoding methods, while incorporating causal structure information into transformer-based representation learning.   
    \item CAPE adopts a hyperboloid model-based approach to embed causal graphs, effectively capturing two fundamental causal graph properties: causal strength and causal specificity.
    % \item effectively preserving their causal structure in hyperbolic space.
% by capturing two critical causal properties: causal strength and causal specificity.
    \item We theoretically demonstrate that CAPE-generated rotary positional encodings possess several valuable properties that enhance the effectiveness of self-attention.
    \item We empirically validate CAPE's theoretical properties using synthetic data, and evaluate its effectiveness in enhancing representation learning of non-sequential data using various real-world multi-omics datasets.
\end{itemize}

\begin{figure*}[!h]
\centering
\includegraphics[width=0.85\linewidth]{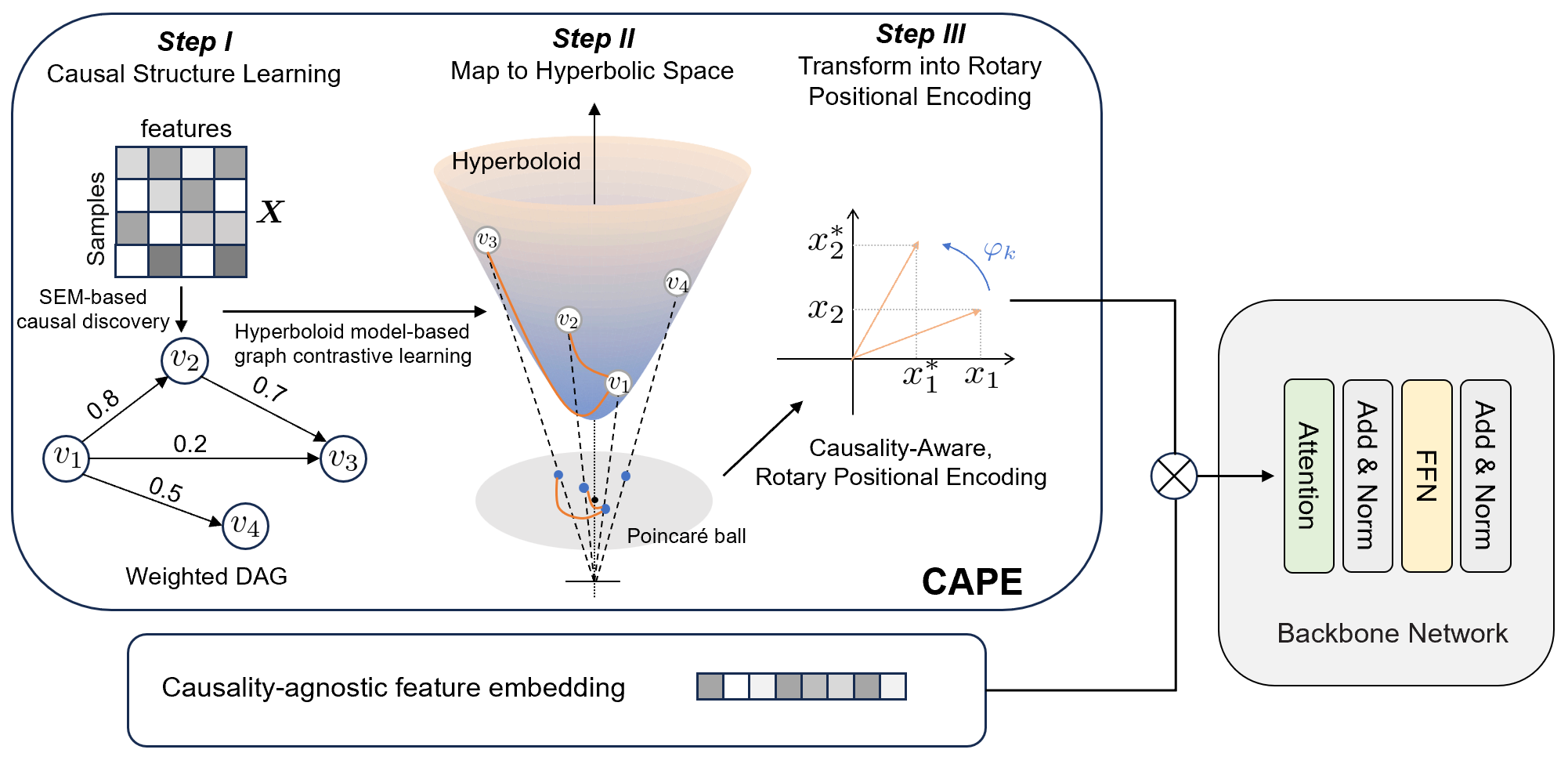}
\caption{Overview of CAPE.}\label{fig_framework}
\end{figure*}

\section{Related Work}
We postpone this section to \cref{supp_related} due to limited space.

%%%%%%%%%%%%%%%%%%%%%%%%%%%%%%%%%%%%%%%%%%%%%%%%%%%%%%%%%%%%%%%%%%%%%%%%%%%%%%%
%%%%%%%%%%%%%%%%%%%%%%%%%%%%%%%%%%%%%%%%%%%%%%%%%%%%%%%%%%%%%%%%%%%%%%%%%%%%%%%
% Methodology
%%%%%%%%%%%%%%%%%%%%%%%%%%%%%%%%%%%%%%%%%%%%%%%%%%%%%%%%%%%%%%%%%%%%%%%%%%%%%%%
%%%%%%%%%%%%%%%%%%%%%%%%%%%%%%%%%%%%%%%%%%%%%%%%%%%%%%%%%%%%%%%%%%%%%%%%%%%%%%%

\section{Methodology}
\subsection{Preliminary}\label{sec_pre}
Let $\mathcal{V}=\{v_j\}_{j=1}^{M}$ be a sequence of $M$ input tokens (e.g., words and image patches). The conventional procedure of applying a standard transformer \cite{transformer} to $\mathcal{V}$ can be described as: 
\begin{equation}
\widehat{\bm{v}}_1,\cdots,\widehat{\bm{v}}_M=\mathcal{T}(\mathcal{A}(\mathcal{F}(\bm{v}_1,\bm{\varphi}_{1}),\cdots,\mathcal{F}(\bm{v}_M,\bm{\varphi}_{M}))).
\end{equation}
Here, $\forall j \in [1\cdots M]$, $\widehat{\bm{v}}_j$ and $\bm{v}_j$ represent the position-aware, contextualized embedding and static, pretrained embedding of token $v_j$, respectively. $\bm{\varphi}_{j}$ is the positional encoding of the $j$-th position. $\mathcal{F}$ denotes the function for fusing $\bm{v}$ and $\bm{\varphi}$, $\mathcal{A}$ represents the self-attention function, and $\mathcal{T}$ is the transformer function. 

When $\mathcal{V}$ consists of non-sequential features, $\bm{\varphi}$ cannot be directly derived from a predefined sequential order. In such cases, we assume that $\{v_j\}_{j=1}^M$ are causally related and organized into a tabular measurement dataset $\bm{X}\in\mathbb{R}^{N\times M}$, where row $\bm{x}_i$ represents the $i$-th observation, the $j$-th column corresponds to $v_j$, and $X_{ij}$ denotes the quantity of $v_j$ measured in $\bm{x}_i$. For example, $v_j$ might represent gene $j$ and $X_{ij}$ the read counts of gene $j$ in cell $i$. We aim to derive causality-aware positional encodings from $\bm{X}$ as:
\begin{equation}
    \{\bm{\varphi}_{v_j}\}_{j=1}^M\coloneqq\mathcal{P}(\bm{X}),
\end{equation}
where $\mathcal{P}$ denotes the causality-aware positional encoding function. We then inject $\{\bm{\varphi}_{v_j}\}_{j=1}^M$ into the transformer architecture as:
\begin{equation}\label{preliminary}
\widehat{\bm{v}}_1^{i},\cdots,\widehat{\bm{v}}_M^{i}=\mathcal{T}(\mathcal{A}(\mathcal{F}(\mathcal{G}(\bm{v}_1,\bm{x}_i),\bm{\varphi}_{v_1}),\cdots,\mathcal{F}(\mathcal{G}(\bm{v}_M,\bm{x}_i),\bm{\varphi}_{v_M}))),\quad \forall i = 1,2,\cdots,N
\end{equation}
where $\widehat{\bm{v}}_j^{i}$ represents the contextualized, causality-aware embedding of $v_j$ within the $i$-th observation. $\mathcal{G}$ is a function to generate contextualized, causality-agonistic intermediate feature embeddings (see \cref{supp_model} for examples of $\mathcal{G}$). We can further obtain observation-level embeddings as:
\begin{equation}
    \bm{h}^{i} = \text{Agg}(\widehat{\bm{v}}_1^{i},\cdots,\widehat{\bm{v}}_M^{i}),
\end{equation}
where $\text{Agg}$ denotes an aggregate function (e.g., mean or max pooling), $\bm{h}^{i}$ the embedding of $i$-th observation. These contextualized, causality-aware feature embeddings and observation-level embeddings can be used in downstream tasks for improved performance, as shown in \cref{sec_real}.

%%%%%%%%%%%%%%%%%%%%%%%%%%%%%%%%%%%%%%%%%%%%%%%%%%%%%%%%%%%%%%%%%%%%%%%%%%%%%%%

\subsection{Methodology Overview}
\begin{wraptable}{r}{0.6\textwidth}
\centering
\vspace{-2em}
\renewcommand\arraystretch{1.2}
\caption{Summary of main notations.}\label{tab_notations}
\resizebox{0.95\linewidth}{!}{
\begin{tabular}{ll}
\toprule
\textbf{Notations} & \textbf{Descriptions} \\
\midrule
$N$ & Number of observations. \\
$M$ & Number of variables. \\
$\bm{A}\in\mathbb{R}^{M\times M}$ & Adjacency matrix of causal graph. \\
$d$ & Dimensionality of positional encoding. \\
$D$ & Dimensionality of feature embedding. \\
$\bm{p}_{v_j}\in\mathbb{R}^{d+1}$& Hyperbolic embedding. \\
$\bm{e}_{v_j}\in\mathbb{R}^{d}$& Poincar\'e ball embedding. \\
$\bm{\varphi}_{v_j}\in\mathbb{R}^{d}$& Rotary positional encoding. \\
$\bm{v}_j\in\mathbb{R}^D$ & Static feature embedding. \\
\bottomrule
\end{tabular}
}
\end{wraptable}

As stated in the previous section, given a set of non-sequential, causally-related features $\mathcal{V}$ and their associated measurement data $\bm{X}$, our goal is to generate contextualized, causality-aware positional encoding $\bm{\varphi}_{v_j}\in\mathbb{R}^d$ for each $v_j \in \mathcal{V}$.  To this end, CAPE introduces an integrated three-step framework. In \textit{Step I} (\cref{sec_DAG}), CAPE identifies the causal structure over $\mathcal{V}$ as a weighted DAG $G(\mathcal{V},\mathcal{E})$, where $\mathcal{V}$ is the node set representing features, $\mathcal{E}$ is the edge set representing causal relationships, and the edge weights quantify causal strengths. The presence and weights of edges are inferred using a non-linear SEM that captures complex, potentially non-linear causal dependencies. In \textit{Step II} (\cref{sec_hyperbolic}), the identified $G$ is embedded in hyperbolic space using the hyperboloid model, assigning each $v_j\in \mathcal{V}$ a positional embedding $\bm{p}_{v_j}\in\mathbb{R}^{d+1}$, where $d+1$ is the dimensionality of the hyperbolic space. These embeddings are optimized using a regularized graph contrastive loss that accounts for both causal strength and causal specificity, effectively preserving the original causal structure of $G$. In \textit{Step III} (\cref{sec_position}),  hyperboloid positional embeddings are mapped into a unit Poincar{\'e} ball via diffeomorphism before being transformed into their rotary form for modulating feature-wise attention scores in the transformer. We will elaborate each of these steps in the following sections.

%%%%%%%%%%%%%%%%%%%%%%%%%%%%%%%%%%%%%%%%%%%%%%%%%%%%%%%%%%%%%%%%%%%%%%%%%%%%%%%

\subsection{Causal Structure Learning (Step I)}\label{sec_DAG}
CAPE infers the causal structure over $\mathcal{V}$ using a generalized nonlinear SEM defined as: 
\begin{equation}\label{eq_linear}
    f(\bm{X}) = f(\bm{X})\bm{A} + g(\bm{\widetilde{Z}}),
\end{equation}
where $f:N\times M\rightarrow N\times M$ is a nonlinear function that models the functional relationships among observed features (endogenous variables), $\bm{A}\in\mathbb{R}^{M\times M}$ is a directed, weighted adjacency matrix representing the hidden causal graph $G(\mathcal{V},\mathcal{E})$, and $g:N\times M\rightarrow N\times M$  models the distribution of unseen noises $\bm{\widetilde{Z}}\in\mathbb{R}^{N\times M}$ (exogenous variables). Given the universal approximation theorem \cite{MLPuniversal}, we let $f$ be a multi-layer perceptron (MLP), and $g(\bm{\widetilde{Z}})=\bm{Z}\in\mathbb{R}^{N\times M}\sim \mathcal{N}(\bm{0},\bm{I})$ for simplicity, where $\mathcal{N}$ denotes the Gaussian distribution. Then \cref{eq_linear} can be reformed as:
\begin{equation}\label{eq_enc}
    \bm{Z}= \text{Enc}(\bm{X}|\bm{A},\bm{W_e})\coloneq f(\bm{X})(\bm{I}-\bm{A}),
\end{equation}
\begin{equation}\label{eq_dec}
    \bm{X}= \text{Dec}(\bm{Z}|\bm{A},\bm{W_d})\coloneq f^{-1}\left(\bm{Z}(\bm{I}-\bm{A})^{-1}\right),
\end{equation}
where $\text{Enc}$ acts as an encoder mapping $\bm{X}$ to Gaussian noises $\bm{Z}$, while $\text{Dec}$ serves as a decoder that recovers $\bm{X}$ from $\bm{Z}$, similar to the encoder and decoder of a variational autoencoder (VAE) \cite{VAE}. Here, $\bm{A},\bm{W_e},\bm{W_d}$ can be estimated using a variational inference approach,  with an evidence lower bound (ELBO) training objective \cite{DAG-GNN}: 
\begin{equation}
    \mathcal{L}_{\mathrm{ELBO}} = \underbrace{\mathbb{E}_{q(\bm{Z}|\bm{X})}\left[\mathrm{log}\;p(\bm{X}|\bm{Z})\right]}_{\text{Reconstruction Loss}} - \mathrm{KL}\left[q(\bm{Z}|\bm{X}) \Vert p(\bm{Z})\right],
\end{equation}
where $p(\bm{X}|\bm{Z})$ is the reconstruction distribution, $q(\bm{Z}|\bm{X})$ is the variational posterior, and $p(\bm{Z})$ is the prior distribution. We also impose a regularization term $\|\bm{A}\|_1$ to encourage sparsity, and a smooth constraint $h(\bm{A}) :\mathrm{tr}\left( e^{\bm{A} \odot \bm{A}} \right) - M = 0$ to ensure the acyclicity of $\bm{A}$ \cite{notears}, where $\odot$ denotes the element-wise product (see \cref{supp_DAG}). The overall loss function then reads:
\begin{equation}\label{eq_OptDAG}
    \min_{\bm{W_e}, \bm{W_d}, \bm{A}} \; -\mathcal{L}_{\mathrm{ELBO}} + \lambda_{\text{s}} \|\bm{A}\|_1 \quad \text{s.t.} \quad h(\bm{A}) = 0,
\end{equation}
where $\lambda_{\text{s}} \geq 0$ is the regularization coefficient. 

To solve \cref{eq_OptDAG} efficiently, we employ the augmented Lagrangian method \cite{program}, yielding the following unconstrained subproblem:
\begin{equation}\label{eq_lossdag}
    \min_{\bm{W_e}, \bm{W_d},\bm{A}}\; \mathcal{L}_{\text{DAG}}\coloneqq -\mathcal{L}_{\text{ELBO}} + \lambda_{\text{s}} \lVert\bm{A}\rVert _1 + \frac{\rho}{2}|h(\bm{A})|^2 + \alpha h(\bm{A}),
\end{equation}
where $\alpha$ is the Lagrange multiplier and $\rho > 0$ is the penalty parameter. The optimization proceeds by alternating updates \cite{DAG-GNN}. Finally, a threshold $\tau>0$ is applied to $\bm{A}$  to prune noisy, false-positive causal edges, as $\bm{A}\leftarrow\bm{A}\odot \mathbb{I}(|\bm{A}|>\tau)$. See \cref{supp_sensitivity} for sensitivity analysis of $\tau$.

%%%%%%%%%%%%%%%%%%%%%%%%%%%%%%%%%%%%%%%%%%%%%%%%%%%%%%%%%%%%%%%%%%%%%%%%%%%%%%%

\subsection{Mapping Causal Structure to Hyperbolic Space (Step II)}\label{sec_hyperbolic}
To translate the identified causal graph into spatial positions while preserving its geometric structure, we project $G(\mathcal{V},\mathcal{E})$ into a hyperbolic space using the hyperboloid model. Specifically, each node $v_j\in \mathcal{V}$ is assigned an embedding $\bm{p}_{v_j}\in\mathbb{R}^{d+1}$ in a Riemannian manifold $\mathcal{L}^d\coloneqq(\mathcal{H}^d,\bm{g}_l)$, where $\bm{g}_l\coloneqq\mathrm{diag}(-1,1,\cdots,1)\in\mathbb{R}^{(d+1)\times (d+1)}$ denotes the metric tensor and where 
  \begin{equation}
    \mathcal{H}^{d} \coloneqq \{\bm{p}:=(p^{(0)}, \widetilde{\bm{p}})\in\mathbb{R}^{d+1}: \left<\bm{p},\bm{p}\right>_l = -1, p^{(0)} > 0\},
\end{equation}
\begin{equation}
    \left<\bm{p}_{v_m},\bm{p}_{v_n}\right>_l \coloneqq \bm{p}_{v_m}^\top\,\bm{g}_l\,\bm{p}_{v_n} = - p_{v_m}^{(0)} p_{v_n}^{(0)} + \widetilde{\bm{p}}_{v_m}^\top\widetilde{\bm{p}}_{v_n},
\end{equation}
denotes the upper sheet of a two-sheeted hyperboloid with an origin $\bm{p_o}=(1,\bm{0}_d)^\top$  in a $(d+1)$-dimensional Minkowski space  \cite{lorentz}. The distance between two points $\bm{p}_{v_m},\bm{p}_{v_n}\in\mathcal{H}^d$ reads $    d_l(\bm{p}_{v_m},\bm{p}_{v_n}) = \mathrm{arcosh}(-\left<\bm{p}_{v_m},\bm{p}_{v_n}\right>_l)$ \cite{hyperbolic}, based on which two critical causal graph properties are defined as:
\begin{equation}\label{Causal_strength}
    \text{\textbf{Causal strength}}:\sigma(v_m,v_n) \varpropto \frac{1}{d_l(\bm{p}_{v_m},\bm{p}_{v_n})},
\end{equation}
  \begin{equation}\label{Causal_specificity}
    \text{\textbf{Causal specificity}}: \ell(v_m) \varpropto d_l(\bm{p}_{v_m},\bm{p_o})=p^{(0)}_{v_m}=\sqrt{1+\lVert \widetilde{\bm{p}}_{v_m}\rVert},
\end{equation}
where $\lVert\cdot\rVert$ denotes the Euclidean norm. Intuitively, as shown in \cref{fig_framework}, the strength of the causal relationship between $v_m$ and $v_n$ attenuates as their hyperbolic distance increases, reflecting their weaker connection in the causal graph. Meanwhile, since hyperbolic space can be thought of as a continuous analogue to discrete trees with roots near the origin \cite{yang2023hyperbolic}, a causally general feature (e.g., a root feature that is causally related to many its causal descendants in the causal graph) should poise close to the origin. To implement these properties into the positional encodings, we adopt a regularized graph contrastive learning framework, with the objective:
\begin{equation}\label{eq_OptH}
    \min_{\bm{p}_{v_1},\ldots,\bm{p}_{v_M} \in \mathcal{H}^{d}} \mathcal{L}_{\mathcal{H}}= \frac{1}{M} \sum_{j=1}^{M} \mathcal{L}_{\text{con}}(\bm{p}_{v_j}) + \lambda_{\text{g}} \Omega(\bm{p}_{v_j}),
\end{equation}
\begin{equation}\label{eq_losscon}
    \mathcal{L}_{\text{con}}(\bm{p}_{v_m}) =-\sum_{n\in N_k^+(v_m)} |\bm{A}_{mn}|\; \mathrm{log}\frac{e^{-d_l(\bm{p}_{v_m},\bm{p}_{v_n})}}{e^{-d_l(\bm{p}_{v_m},\bm{p}_{v_n})}+\sum_{n^\prime \in N_k^-(m)}e^{-d_l(\bm{p}_{v_m},\bm{p}_{v_{n^\prime}})}},
\end{equation}
\begin{equation}\label{eq_regularization}
    \Omega(\bm{p}_{v_m}) = \bm{\pi}_{v_m} d_l(\bm{p}_{v_m}, \bm{p_o}),
\end{equation}
where $\mathcal{L}_{\text{con}}$ is the contrastive term, $\Omega$ is the regularization term, and $\lambda_g$ is the regularization weight. In $\mathcal{L}_{\text{con}}$, $N_k^+(v_m)\subset\mathcal{V}$ denotes the set of positive samples of $v_m$, consisting of those connected to $v_m$ via $k$-hop ($k$ defaults to $2$, see sensitivity analysis in \cref{supp_sensitivity}) causal paths in $G$, while $N_k^-(v_m)=\mathcal{V} \setminus N_k^+(m)$ denotes its set of negative samples.  $|\bm{A}_{mn}|$ reflects the estimated causal strength between $v_m$ and $v_n$. By pulling closer features with strong causal relationships (positive pairs), while distancing those with weak causal relationships (negative pairs), $\mathcal{L}_{\text{con}}$ ensures the causal strength property. In $\Omega$, $\bm{\pi}_{v_m}\in \mathbb{R}^{+}$ is the $m$-th value in the \textit{PageRank} vector $\bm{\pi}\in(0,1)^M$ as:
\begin{equation}\label{eq_gregariousness}
\begin{gathered}
 \bm{P}^\top \coloneqq \bm{\lvert A\rvert}\bm{D}_{\text{in}}^{-1},\quad \widehat{\bm{P}}\coloneqq (1-w) \bm{P}+w\frac{\bm{1}}{M},\\
 \widehat{\bm{P}}^\top \bm{\pi} = \bm{\pi}\ \;\;\mathrm{s.t.}\;\; \bm{\pi}^\top\overline{\bm{1}}_M=1,
\end{gathered}
\end{equation}
where $\bm{D}_{\text{in}}$ denotes the diagonal in-degree matrix of $\lvert A\rvert$, $\bm{P} $ is the in-degree normalized transition matrix, and $\frac{\bm{1}}{M}\coloneq [\frac{1}{M}]^{M\times M}$ is the transition restart matrix to ensure $\widehat{\bm{P}}$ is strongly connected and an ergodic Markov chain\footnote{By the Perron-Frobenius theorem, the left eigenvector $\bm{\pi}$ of the largest eigenvalue ($\lambda_{max}=1$) of $\widehat{\bm{P}}$ is unique.}. $w\in (0,1)$ is the relative weight for the restart matrix. $\bm{\pi}$ is essentially a steady-state probability vector with larger values for more causally general nodes (e.g., those with more outgoing edges\footnote{See \cref{supp_omega} for a discussion of why we use $\bm{\pi}$ rather than out-degree}). Consequently, features with larger causal generality are more penalized by $\Omega$, forcing their positions to be close to the origin $\bm{p_o}$, thus ensuring the causal specificity property.    

To minimize $\mathcal{L}_{\mathcal{H}}$ in  \cref{eq_OptH}, we use RSGD to update $\bm{p}_{v_j}$ in iterations. Specifically, we first compute the Euclidean gradient $\nabla_{\bm{p}_{v_j}}^{\mathbb{E}} \mathcal{L}_{\mathcal{H}}$ at $\bm{p}_{v_j}$, which is then converted into Riemannian gradient as:
\begin{equation}\label{Eq_euc_rieman}
    \nabla_{\bm{p}_{v_j}}^{\mathbb{R}} \mathcal{L}_{\mathcal{H}}=\bm{g}_l^{-1} \nabla_{\bm{p}_{v_j}}^{\mathbb{E}} \mathcal{L}_{\mathcal{H}}.
\end{equation}
Next, the Riemannian gradient direction is projected onto the tangent space at $\bm{p}_{v_j}$ via an orthogonal projection (see \cref{prop_proj}) as:
\begin{equation}\label{eq_proj_tangent}
    \nabla_{\bm{p}_{v_j}}^{\mathbb{T}} \mathcal{L}_{\mathcal{H}} = \mathrm{proj}_{\bm{p}_{v_j}} \left( \nabla_{\bm{p}_{v_j}}^{\mathbb{R}} \mathcal{L}_{\mathcal{H}} \right).
\end{equation}
$\bm{p}_{v_j}$ is updated along the direction of $\nabla_{\bm{p}_{v_j}}^{\mathbb{T}} \mathcal{L}_{\mathcal{H}}$ in the tangent space with a learning rate of $\eta > 0$, and then retracted onto the hyperboloid via an exponential map function (see \cref{prop_exp}) as:
\begin{equation}\label{eq_exp}
   \bm{p}_{v_j} \leftarrow \mathrm{exp}_{\bm{p}_{v_j}}\left( -\eta \cdot \nabla_{\bm{p}_{v_j}}^{\mathbb{T}} \mathcal{L}_{\mathcal{H}} \right).
\end{equation}
Due to the closed-form computation of the geodesics on the hyperboloid, this optimization is computationally efficient (see \cref{supp_geometric} for details).

%%%%%%%%%%%%%%%%%%%%%%%%%%%%%%%%%%%%%%%%%%%%%%%%%%%%%%%%%%%%%%%%%%%%%%%%%%%%%%%

\subsection{Transforming Hyperbolic Positional Encoding to Rotary Form (Step III)}\label{sec_position}
As demonstrated in \cite{rope} and \cite{MassiveValues}, rotary positional encodings exhibit the advantages of compatibility with linear self-attention and enhanced understanding of contextual knowledge. Adherent to this notion, we first map the optimized positional encodings $\{\bm{p}_{v_j}\}_{j=1}^M$ from the hyperboloid into a Poincar{\'e} ball as $\{\bm{e}_{v_j}\}_{j=1}^M$ via the diffeomorphism $f_{d}:\mathcal{H}^d\rightarrow\mathcal{B}^d$ (see  \cref{supp_equivalence} for details). The Poincar{\'e} ball is a Riemannian manifold $\mathcal{P}^d\coloneqq(\mathcal{B}^d, \bm{g}_p)$, where $\mathcal{B}^d\coloneqq\{\bm{e}\in\mathbb{R}^d:\lVert\bm{e}\rVert < 1\}$ represents the open $d$-dimensional unit ball and the metric tensor $\bm{g}_p=\left(\frac{2}{1-\lVert\bm{e}\rVert^2}\right)^2 \bm{I}$ is a conformal transformation of the Euclidean metric $\bm{I}$. This mapping is motivated by the fact that the Poincar{\'e} ball, with its spherical geometry centered at the origin $\bm{0_d}$, is more naturally suited for rotary encodings. Importantly, since it also represents a hyperbolic space, the two causal graph properties (\textbf{causal strength} and \textbf{causal specificity}) are preserved (see \cref{Causal_strength,Causal_specificity}).   

To transform the Poincar{\'e} ball embeddings $\{\bm{e}_{v_j}\}_{j=1}^M$ into their rotary form for injecting into the transformer architecture, we refine the standard query-key mapping and inner product used in the self-attention mechanism, in a way similar to RoPE\footnote{Following RoPE, positional encodings are only injected into keys and queries, not values.} \cite{rope}:
\begin{equation}\label{eq_inject}
    \bm{q}_{v_m}^{i}=\mathcal{I}_q(\bm{v}_m^{i},\bm{e}_{v_m}),\quad \bm{k}_{v_n}^{i}=\mathcal{I}_k(\bm{v}_n^{i},\bm{e}_{v_n}),
\end{equation}
\begin{equation}\label{eq_goal}
    \left<\bm{q}_{v_m}^{i},\bm{k}_{v_n}^{i}\right> = \mathcal{A}\left(\bm{v}_m^{i}, \bm{v}_n^{i}, \bm{\gamma}(\bm{e}_{v_m}, \bm{e}_{v_n})\right),
\end{equation}
where $\bm{q}_{v_m}^{i}$ and $\bm{k}_{v_n}^{i}$ are the query and key derived from $v_m$ and $v_n$ in the context of observation $\bm{x}_i$, respectively. $\bm{v}_j^i\coloneqq\mathcal{G}(\bm{v}_j,\bm{x}_i)\in\mathbb{R}^D$ denotes the \textit{contextualized, causality-agonistic} embeddings (see \cref{sec_pre}). $\mathcal{I}_q$ and $\mathcal{I}_k$ are functions that inject positional encodings into queries and keys, respectively. $\mathcal{A}$ is an attention scoring function that accounts for the \textit{relative} causal position between $v_m$ and $v_n$, which is represented as $\bm{\gamma}(\bm{e}_{v_m}, \bm{e}_{v_n})$.  \cref{supp_solving} gives the explicit solutions to  $\mathcal{I}_q,\mathcal{I}_k$, and $\mathcal{A}$ as:
\begin{equation}\label{eq_fq_fk}
\begin{gathered}
\mathcal{I}_q(\bm{v}_m^{i},\bm{e}_{v_m}) \coloneqq \bm{R}(\bm{\varphi}_{v_m}) \bm{W}_q \bm{v}_m^{i}= \bm{R}(\bm{\varphi}_{v_m})\bm{q}_{v_m}^{i},\\
\mathcal{I}_k(\bm{v}_n^{i},\bm{e}_{v_n}) \coloneqq \bm{R}(\bm{\varphi}_{v_n}) \bm{W}_k \bm{v}_n^{i} = \bm{R}(\bm{\varphi}_{v_n})\bm{k}_{v_n}^{i},
\end{gathered}
\end{equation}
\begin{equation}\label{attention}
    \mathcal{A}\left(\bm{v}_m^i,\bm{v}_n^i,\gamma(\bm{e}_{v_m},\bm{e}_{v_n})\right)= (\bm{q}_{v_m}^{i})^\top \bm{R}(\bm{\varphi}_{v_n}-\bm{\varphi}_{v_m})\bm{k}_{v_n}^{i},
\end{equation}
where  $\bm{\varphi}_{v}\coloneqq c \bm{e}_{v}$ denotes a vector whose components are used as the rotation angles in the subsequent rotary embedding,  $c=\pi/4$ is a constant scale factor to control the range of angles\footnote{ $c$ is set to be $\pi/4$ to make $\bm{\varphi}_k$ a small but not negligible angle}, and $\gamma(\bm{e}_{v_m},\bm{e}_{v_n})\coloneqq\bm{e}_{v_m}-\bm{e}_{v_n}$. $\bm{R}(\bm{\varphi}_{v})$ is a rotation matrix induced by $\bm{\varphi}_{v}$:
\begin{equation}\label{eq_R}
    \bm{R}(\bm{\varphi}_{v}) \coloneqq \begin{bmatrix}
        \bm{r}(\varphi_{v}^{(1)}) & \bm{0} & \cdots & \bm{0} \\
        \bm{0} & \bm{r}(\varphi_{v}^{(2)}) & \cdots & \bm{0} \\
        \vdots & \vdots & \ddots & \bm{0} \\
        \bm{0} & \bm{0} & \cdots & \bm{r}(\varphi_{v}^{(d)})
    \end{bmatrix},\quad
    \bm{r}(\varphi_{v}^{(t)})\coloneqq \begin{bmatrix}
        \cos(\varphi_{v}^{(t)}) & -\sin(\varphi_{v}^{(t)}) \\
        \sin(\varphi_{v}^{(t)}) & \cos(\varphi_{v}^{(t)})
    \end{bmatrix},
\end{equation}
where $d=\frac{D}{2}$.

%%%%%%%%%%%%%%%%%%%%%%%%%%%%%%%%%%%%%%%%%%%%%%%%%%%%%%%%%%%%%%%%%%%%%%%%%%%%%%%

\section{Theoretical Properties of Causality-Aware, Rotary Positional Encoding}\label{theoretical analysis}
\subsection{Causal Distance-Induced Attention Attenuation}\label{sec_caa}
As demonstrated in \cref{prop_distance} and \cref{remark_distance}, injecting CAPE-generated positional encodings into the self-attention calculation allows two features to be assigned reduced attention scores when they are causally distant. 
% exhibit a weaker causal strength (i.e., larger causal distance).
% One desired property, when incorporating CAPE-generated positional encodings into the self-attention calculation, is that two features tend to have a lower attention score when they exhibit a weaker causal strength (i.e., larger causal distance). Formally, we have the following proposition. 
\begin{restatable}{proposition}{distance}\label{prop_distance}
Given $\bm{v}_m^i$ and $\bm{v}_n^i$, the attention scoring function $\mathcal{A}$ in \cref{attention} is bounded by $\mathcal{A}^+>0$ and $\mathcal{A}^-<0$, satisfying:
\begin{equation}
    \frac{\partial\mathcal{A}^+\left(\bm{v}_m^i, \bm{v}_n^i, \bm{e}_{v_m}-\bm{e}_{v_n}\right)}{\partial d_p(\bm{e}_{v_m}, \bm{e}_{v_n})} \leq 0, \quad \frac{\partial\mathcal{A}^-\left(\bm{v}_m^i, \bm{v}_n^i, \bm{e}_{v_m}-\bm{e}_{v_n}\right)}{\partial d_p(\bm{e}_{v_m}, \bm{e}_{v_n})} \geq 0,
\end{equation}
where $d_p(\bm{e}_{v_m}, \bm{e}_{v_n})$ is the distance between $\bm{e}_{v_m}$ and $\bm{e}_{v_n}$ on the Poincar{\'e} ball manifold, computed as:
\begin{equation}\label{eq_pdistance}
    d_p(\bm{e}_{v_m}, \bm{e}_{v_n}) = \mathrm{arcosh}\left(1+2\frac{\lVert\bm{e}_{v_m}-\bm{e}_{v_n}\rVert^2}{(1-\lVert\bm{e}_{v_m}\rVert^2)(1-\lVert\bm{e}_{v_n}\rVert^2)}\right).
\end{equation}
The functions $\mathcal{A}^+$ and $\mathcal{A}^-$ are given in \cref{supp_distance}.
\end{restatable}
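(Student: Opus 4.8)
The plan is to reduce the global quadratic form in \cref{attention} to a sum of independent two-dimensional rotations and then squeeze that sum between explicit envelopes whose geometric dependence enters only through $\lVert\bm{e}_{v_m}-\bm{e}_{v_n}\rVert$. First I would set $\bm{q}:=\bm{q}^{i}_{v_m}$, $\bm{k}:=\bm{k}^{i}_{v_n}$ and group their coordinates into the $d$ planes on which $\bm{R}$ in \cref{eq_R} acts. Expanding each $2\times2$ block gives
\begin{equation*}
(\bm{q}^{(t)})^\top \bm{r}(\varphi_{v_n}^{(t)}-\varphi_{v_m}^{(t)})\,\bm{k}^{(t)} = a_t\cos\theta^{(t)} + b_t\sin\theta^{(t)} = L_t\cos(\theta^{(t)}-\psi_t),
\end{equation*}
where $\theta^{(t)}:=\varphi_{v_n}^{(t)}-\varphi_{v_m}^{(t)}=c\,(e_{v_n}^{(t)}-e_{v_m}^{(t)})$, the scalars $a_t,b_t$ are the in-plane and cross terms, and the identity $a_t^2+b_t^2=\lVert\bm{q}^{(t)}\rVert^2\lVert\bm{k}^{(t)}\rVert^2=:L_t^2$ collapses each block to a single cosine with amplitude $L_t$ and a data-dependent phase $\psi_t$. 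Summing over $t$ yields $\mathcal{A}=\sum_{t=1}^{d}L_t\cos(\theta^{(t)}-\psi_t)$, which is the representation I would work with throughout.

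Next I would exploit the Poincar\'e-ball constraint. Since $\lVert\bm{e}_{v_m}\rVert,\lVert\bm{e}_{v_n}\rVert<1$, every coordinate difference obeys $|e_{v_n}^{(t)}-e_{v_m}^{(t)}|<2$, so with $c=\pi/4$ each block angle satisfies $|\theta^{(t)}|<\pi/2$ and hence $\cos\theta^{(t)}>0$. Remaining inside this range is what keeps the bounding trigonometric functions in a regime where they can be signed. I would then build $\mathcal{A}^+$ and $\mathcal{A}^-$ by replacing each term $L_t\cos(\theta^{(t)}-\psi_t)$ by term-wise upper and lower bounds, aggregating the $\sin$-type contributions by Cauchy--Schwarz so that they depend on $\bm{\theta}$ only through $\lVert\bm{\theta}\rVert=c\,\lVert\bm{e}_{v_m}-\bm{e}_{v_n}\rVert$. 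This produces two envelopes $\mathcal{A}^-<0<\mathcal{A}^+$ that are functions of the single scalar $\lVert\bm{e}_{v_m}-\bm{e}_{v_n}\rVert$, with the amplitudes $L_t$ and the endpoint norms held fixed.

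The bridge to the Poincar\'e distance is then purely algebraic. Inverting \cref{eq_pdistance} gives
\begin{equation*}
\lVert\bm{e}_{v_m}-\bm{e}_{v_n}\rVert^2 = \tfrac12\big(\cosh d_p(\bm{e}_{v_m},\bm{e}_{v_n})-1\big)\big(1-\lVert\bm{e}_{v_m}\rVert^2\big)\big(1-\lVert\bm{e}_{v_n}\rVert^2\big),
\end{equation*}
so at fixed endpoint norms $\lVert\bm{e}_{v_m}-\bm{e}_{v_n}\rVert$ is strictly increasing in $d_p$ (its derivative carries the nonnegative factor $\sinh d_p$). A chain rule then transfers monotonicity in $\lVert\bm{e}_{v_m}-\bm{e}_{v_n}\rVert$ to monotonicity in $d_p$: once I verify $\partial\mathcal{A}^+/\partial\lVert\bm{e}_{v_m}-\bm{e}_{v_n}\rVert\le 0$ and $\partial\mathcal{A}^-/\partial\lVert\bm{e}_{v_m}-\bm{e}_{v_n}\rVert\ge 0$, the claimed signs of $\partial\mathcal{A}^\pm/\partial d_p$ follow immediately.

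I expect the main obstacle to be the envelope construction: making $\mathcal{A}^\pm$ genuinely monotone in $\lVert\bm{\theta}\rVert$ even though the phases $\psi_t$ are arbitrary. A naive bound such as $L_t(\cos\theta^{(t)}+|\sin\theta^{(t)}|)$ is \emph{not} monotone (it peaks near $|\theta^{(t)}|=\pi/4$), so the envelope must dominate this bump by a function that decreases across the whole admissible range $|\theta^{(t)}|<\pi/2$; this is precisely where the scale factor $c=\pi/4$ is essential, since it caps the angles and prevents the cosines from crossing zero. A secondary point is securing the strict signs $\mathcal{A}^+>0>\mathcal{A}^-$ rather than merely $\ge,\le$, which I would read off from the strict positivity of each $\cos\theta^{(t)}$ together with $L_t>0$ for generic $\bm{q},\bm{k}$. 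The remaining computations --- the block expansion, the Cauchy--Schwarz aggregation, and the differentiation of $\mathrm{arcosh}$ --- are routine.
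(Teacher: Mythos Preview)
Your plan matches the paper's proof closely: expand $\mathcal{A}$ as a sum of $2\times2$ rotation blocks, bound by an envelope depending only on $\lVert\bm{\varphi}_{v_m}-\bm{\varphi}_{v_n}\rVert=c\lVert\bm{e}_{v_m}-\bm{e}_{v_n}\rVert$, and then use the chain rule through the inverted Poincar\'e distance formula (the paper computes the same derivative $\partial d_p/\partial\lVert\bm{e}_{v_m}-\bm{e}_{v_n}\rVert$ you allude to). The one place where the paper makes a different concrete choice is the envelope itself: rather than Cauchy--Schwarz on the sine-type terms, it keeps the $a_t\cos\theta^{(t)}+b_t\sin\theta^{(t)}$ form, bounds each $b_t\sin\theta^{(t)}$ trivially by $|b_t|$ (a constant in $\theta$, so it contributes nothing to $\partial/\partial\lVert\bm{\theta}\rVert$), replaces each $|a_t|$ by $\max_t|a_t|$, and then applies Jensen's inequality to the concavity of $\cos$ on $[0,\pi/2]$ together with the $\ell_1\geq\ell_2$ inequality to get $\sum_t\cos|\theta^{(t)}|\leq d\cos(\tfrac1d\lVert\bm{\theta}\rVert)$. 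This yields
\[
\mathcal{A}^+=d\,\max_t|a_t|\cdot\cos\!\Bigl(\tfrac1d\lVert\bm{\varphi}_{v_m}-\bm{\varphi}_{v_n}\rVert\Bigr)+\sum_t|b_t|,\qquad \mathcal{A}^-=-\mathcal{A}^+,
\]
which is manifestly monotone in $\lVert\bm{\theta}\rVert$ and sidesteps exactly the non-monotonicity obstacle you flagged. Your Cauchy--Schwarz idea for the sines would leave a $\lVert\bm{\theta}\rVert$-dependent term that \emph{increases}, fighting the cosine decrease; the paper's trick of making the sine bound constant is the cleanest way to avoid that.
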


\begin{proof}
    See \cref{supp_distance}.
\end{proof}

\begin{remark}\label{remark_distance}
As the causal distance $d_p(\bm{e}{v_m}, \bm{e}{v_n}) \rightarrow +\infty$, both $\mathcal{A}^+$ and $\mathcal{A}^-$ attenuate and converge towards smaller magnitudes (though not necessarily to $0$). Since $\mathcal{A}$ is bounded between $\mathcal{A}^+$ and $\mathcal{A}^-$, its range of possible variation also shrinks significantly.
\end{remark}

\subsection{Causal Generality-Induced Attention Attenuation}\label{sec_css}
As discussed in \cref{sec_position}, the causal specificity of $v_m$ increases with $\lVert\bm{e}_{v_m}\rVert$ in hyperbolic space. Here, we further define the causal generality in unit Poincar{\'e} ball manifold below.
\begin{definition}[\textbf{Causal generality in unit Poincar{\'e} ball manifold}]\label{def_causal generality}
Given a causal graph $G(\mathcal{V},\mathcal{E})$ embedded in the Poincar{\'e} ball manifold $\mathcal{P}^d\coloneqq(\mathcal{B}^d, \bm{g}_p)$, where $\mathcal{B}^d\coloneqq\{\bm{e}\in\mathbb{R}^d:\lVert\bm{e}\rVert < 1\}$ , the \textit{causal generality} of a node $v_m\in \mathcal{V}$ is defined as $\psi_{v_m}\coloneq1- \lVert\bm{e}_{v_m}\rVert$.
    
\end{definition}
Causally general features (e.g., those that influence many other features) distribute their attention more broadly, resulting in lower attention scores for each of their individual causal descendants. Consequently, their attention scores tend to span a narrower range compared to more causally specific features. For example, both the ``Big Bang'' and ``amino acids'' are causes of the ``emergence of life'', but the former is a more causally general event, as it represents the origin of all things. Therefore, the ``Big Bang'' should receive less attention than ``amino acids'' when reasoning about the origins of life. This property emerges from the attention scores computed with CAPE-generated rotary positional encodings, as formally demonstrated in \cref{prop_specificity} and \cref{remark_specificity}. 

\begin{restatable}{proposition}{specificity}\label{prop_specificity}
Given $\bm{v}_m^i$, $\bm{v}_n^i$, and fixed causal distance $d_p(\bm{e}_{v_m}, \bm{e}_{v_n})$ in the Poincar{\'e} ball manifold defined in \cref{def_causal generality}, $\mathcal{A}^+$ and $\mathcal{A}^-$ satisfy:
\begin{equation}
    \frac{\partial\mathcal{A}^+\left(\bm{v}_m^i, \bm{v}_n^i, \bm{e}_{v_m}-\bm{e}_{v_n}\right)}{\partial \psi_{v_m}} \leq 0, \quad \frac{\partial\mathcal{A}^-\left(\bm{v}_m^i, \bm{v}_n^i, \bm{e}_{v_m}-\bm{e}_{v_n}\right)}{\partial \psi_{v_m}} \geq 0.
\end{equation}
The same holds for $\psi_{v_n}$.
\end{restatable}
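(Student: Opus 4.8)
The plan is to reduce \cref{prop_specificity} to the monotonicity already contained in \cref{prop_distance}, connecting the two through the \emph{Euclidean} separation $E\coloneqq\lVert\bm{e}_{v_m}-\bm{e}_{v_n}\rVert$. The key preliminary observation is that, once $\bm{v}_m^i$ and $\bm{v}_n^i$ are fixed, the score $\mathcal{A}$ in \cref{attention} depends on the positions only through the relative rotation $\bm{\varphi}_{v_n}-\bm{\varphi}_{v_m}=c(\bm{e}_{v_n}-\bm{e}_{v_m})$, since the base query $\bm{W}_q\bm{v}_m^i$ and key $\bm{W}_k\bm{v}_n^i$ carry no positional information. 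Hence the bounds $\mathcal{A}^\pm$ of \cref{supp_distance} are functions of that difference vector, and in particular of $E$. Note this dependence cannot reduce to $d_p$ alone: were $\mathcal{A}^\pm$ functions of $d_p$ only, then \cref{prop_specificity} (which fixes $d_p$) would be vacuous, so $E$ is genuinely the operative variable. Reading \cref{prop_distance} accordingly, its content is that $\mathcal{A}^+$ is non-increasing and $\mathcal{A}^-$ non-decreasing in $E$; this is equivalent to the stated signs in $d_p$ because $d_p$ is strictly increasing in $E$ at fixed norms (immediate from \cref{eq_pdistance}). It therefore suffices to prove that, at fixed $d_p$, the separation $E$ is non-decreasing in $\psi_{v_m}$.

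To do so I would invert \cref{eq_pdistance}. With $r_m\coloneqq\lVert\bm{e}_{v_m}\rVert$ and $r_n\coloneqq\lVert\bm{e}_{v_n}\rVert$,
\[
E^2=\tfrac{1}{2}\big(\cosh(d_p)-1\big)(1-r_m^2)(1-r_n^2).
\]
Holding $d_p$ and $\psi_{v_n}$ (hence $r_n$) fixed and substituting $r_m=1-\psi_{v_m}$, so that $1-r_m^2=\psi_{v_m}(2-\psi_{v_m})$, gives
\[
E^2=K\,\psi_{v_m}(2-\psi_{v_m}),\qquad K\coloneqq\tfrac{1}{2}\big(\cosh(d_p)-1\big)(1-r_n^2)\ge 0,
\]
whence $\partial(E^2)/\partial\psi_{v_m}=2K(1-\psi_{v_m})=2K\,r_m\ge 0$, so $\partial E/\partial\psi_{v_m}\ge 0$ on the admissible range $r_m\in[0,1)$.

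The conclusion is then a one-line chain rule: $\partial\mathcal{A}^+/\partial\psi_{v_m}=(\partial\mathcal{A}^+/\partial E)(\partial E/\partial\psi_{v_m})\le 0$ as a product of a non-positive and a non-negative factor, and symmetrically $\partial\mathcal{A}^-/\partial\psi_{v_m}\ge 0$. Since \cref{eq_pdistance} is symmetric under $r_m\leftrightarrow r_n$, repeating the substitution with $r_n=1-\psi_{v_n}$ reproduces the identical conclusion for $\psi_{v_n}$.

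The main obstacle is the very first step: certifying that the bounds $\mathcal{A}^\pm$ factor through the single scalar $E$, rather than through the full vector $\bm{e}_{v_m}-\bm{e}_{v_n}$ (e.g.\ through a per-block angle profile), because only then are $\partial\mathcal{A}^\pm/\partial d_p$ and $\partial\mathcal{A}^\pm/\partial\psi_{v_m}$ well-posed one-dimensional derivatives that can be chained as above. I would discharge this by inspecting the explicit $\mathcal{A}^\pm$ constructed in \cref{supp_distance} and confirming that its positional dependence collapses to $\lVert\bm{e}_{v_m}-\bm{e}_{v_n}\rVert$. A minor secondary point is feasibility: moving $\psi_{v_m}$ at fixed $d_p$ must keep $E$ within the Euclidean constraint $|r_m-r_n|\le E\le r_m+r_n$, but as the claim concerns only the sign of a derivative this is a local statement requiring no global realizability argument.
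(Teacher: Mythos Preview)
Your proposal is correct and follows essentially the same route as the paper: the paper also reduces the claim to the chain rule $\partial\mathcal{A}^+/\partial\psi_{v_m}=(\partial\mathcal{A}^+/\partial E)(\partial E/\partial\psi_{v_m})$, using the explicit formula for $\mathcal{A}^+$ from \cref{supp_distance} (which indeed depends on positions only through $E=\lVert\bm{e}_{v_m}-\bm{e}_{v_n}\rVert$, resolving your stated obstacle) together with the same inversion of \cref{eq_pdistance} to obtain $E^2=\tfrac{1}{2}(\cosh d_p-1)(1-r_m^2)(1-r_n^2)$ and hence $\partial E/\partial\psi_{v_m}\ge 0$. Your derivation of the sign of $\partial E/\partial\psi_{v_m}$ and the symmetry argument for $\psi_{v_n}$ match the paper's exactly.
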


\begin{proof}
   See \cref{supp_specificity}
\end{proof}

\begin{remark}\label{remark_specificity}
As the causal generality $\psi_{v_m}\rightarrow1$, $\mathcal{A}$ 's upper boundary $\mathcal{A}^+$ and lower boundary $\mathcal{A}^-$ asymptotically attenuate towards fixed constants $a>0$ and $-a<0$, respectively (See \cref{supp_specificity}). 
\end{remark}

\subsection{Robustness to Positional Disturbances}\label{sec_rme}
In practice, the measurement data $\bm{X}$ (see \cref{sec_DAG}) is often subject to measurement errors, leading to biased estimation of the causal structure and perturbed Poincar{\'e} ball positional encodings. The following proposition demonstrates the robustness of the resulting attention scores to such disturbances. 
\begin{restatable}{proposition}{robustness}\label{prop_robustness}
Assume that the noise-perturbed Poincar{\'e} ball positional encoding of $v_j$ can be represented as $\bm{e}_{v_j}^\prime\coloneqq\bm{e}_{v_j}+\bm{\varepsilon}_j$, where $\bm{\varepsilon}_j\sim\mathcal{N}(\bm{\mu},\bm{I}_j)$ is a small random Gaussian disturbance with $\bm{\mu}\in\mathbb{R}^{d}$ and $\bm{I}_j=\mathrm{diag}(\sigma_{j1}^2,\sigma_{j2}^2,\cdots,\sigma_{jd}^2)$. Then, the noise-disturbed attention score $\mathcal{A}^\prime$ remains robust to such disturbances in three aspects, including \textbf{Distinguishability} (\cref{prop_distinguish}), \textbf{Unbiasedness} (\cref{prop_unbias}), and \textbf{Asymptotic Convergence} (\cref{prop_asymptotic}).     
\end{restatable}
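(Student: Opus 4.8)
The plan is to reduce \cref{prop_robustness} to its three constituent claims — \textbf{Distinguishability}, \textbf{Unbiasedness}, and \textbf{Asymptotic Convergence} — all of which rest on a single algebraic reformulation of the attention score. First I would expand the block-diagonal rotation in \cref{attention}: writing $\theta_t \coloneqq \varphi_{v_n}^{(t)} - \varphi_{v_m}^{(t)} = c\,(e_{v_n}^{(t)} - e_{v_m}^{(t)})$ for the relative angle in the $t$-th $2\times 2$ block, each block contributes $a_t\cos\theta_t + b_t\sin\theta_t$, where $a_t = q^{(2t-1)}k^{(2t-1)}+q^{(2t)}k^{(2t)}$ and $b_t = q^{(2t)}k^{(2t-1)}-q^{(2t-1)}k^{(2t)}$ are bilinear forms in the (noise-free) query and key coordinates $\bm{q}_{v_m}^i,\bm{k}_{v_n}^i$. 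Hence $\mathcal{A} = \sum_{t=1}^{d}\big(a_t\cos\theta_t + b_t\sin\theta_t\big)$, and the perturbation enters only through the angles: under $\bm{e}_{v_j}' = \bm{e}_{v_j}+\bm{\varepsilon}_j$ the shift is $\theta_t \mapsto \theta_t + \delta_t$ with $\delta_t = c\,(\varepsilon_n^{(t)} - \varepsilon_m^{(t)})$. The crucial observation, which I would establish first, is that because both $\bm{\varepsilon}_m$ and $\bm{\varepsilon}_n$ share the common mean $\bm{\mu}$, the difference cancels it, so $\delta_t \sim \mathcal{N}\!\big(0,\ c^2(\sigma_{mt}^2+\sigma_{nt}^2)\big)$ has mean zero regardless of $\bm{\mu}$. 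A further useful point is that the rotary form depends on $\bm{e}_{v_j}$ only through the angle $c\,\bm{e}_{v_j}$, so the argument never requires the perturbed point to remain inside the Poincar\'e ball.

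For \textbf{Unbiasedness} I would take expectations blockwise, using the Gaussian identities $\mathbb{E}[\cos(\theta_t+\delta_t)] = e^{-s_t^2/2}\cos\theta_t$ and $\mathbb{E}[\sin(\theta_t+\delta_t)] = e^{-s_t^2/2}\sin\theta_t$ with $s_t^2 = c^2(\sigma_{mt}^2+\sigma_{nt}^2)$. Summing yields $\mathbb{E}[\mathcal{A}'] = \sum_t e^{-s_t^2/2}(a_t\cos\theta_t + b_t\sin\theta_t)$, i.e. the clean score with each block attenuated by a positive factor but with no phase shift — the zero-mean property of $\delta_t$ guarantees the expected rotation is the true rotation, merely damped, so no directional bias is introduced. \textbf{Asymptotic Convergence} then follows by letting $\max_{t}(\sigma_{mt}^2+\sigma_{nt}^2)\to 0$: each $e^{-s_t^2/2}\to 1$, so $\mathbb{E}[\mathcal{A}']\to\mathcal{A}$, and a companion second-moment computation, using $\mathbb{E}[\cos^2(\theta_t+\delta_t)] = \tfrac12 + \tfrac12 e^{-2s_t^2}\cos 2\theta_t$ together with its sine analogue, shows $\operatorname{Var}(\mathcal{A}')\to 0$; convergence in probability (indeed in $L^2$) to $\mathcal{A}$ then follows from Chebyshev's inequality.

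For \textbf{Distinguishability} I would bound $\operatorname{Var}(\mathcal{A}')$ explicitly from the same second-moment expressions, noting that the $\delta_t$ are independent across blocks so block variances add, obtaining a bound that is $O(\sum_t s_t^2)$ for small noise. Combining the attenuated-but-monotone expected score from the Unbiasedness step with the attenuation behaviour already established in \cref{prop_distance} and \cref{prop_specificity}, I would argue that for two feature pairs whose clean causal distances (or generalities) differ, the gap between their expected perturbed scores stays strictly positive and exceeds the combined standard deviation once the noise is small, so a Chebyshev/concentration argument separates the two perturbed distributions with high probability. The main obstacle I anticipate is exactly this last step: Unbiasedness and Asymptotic Convergence reduce cleanly to the characteristic-function identities, whereas Distinguishability requires simultaneously controlling the attenuation of the expected gap and the growth of the variance and making the ``separation exceeds noise'' threshold quantitative — this is where the structure of $a_t,b_t$ and the precise dependence of $\theta_t$ on the Poincar\'e distance $d_p$ must be exploited rather than treated as a black box.
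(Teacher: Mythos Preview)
Your algebraic setup and the \textbf{Unbiasedness} argument are essentially the paper's: the blockwise expansion $\mathcal{A}=\sum_t(a_t\cos\theta_t+b_t\sin\theta_t)$ and the Gaussian characteristic-function identities yielding the $e^{-s_t^2/2}$ damping factor are exactly what the paper does in its proof of \cref{prop_unbias}. Your remark that the common mean $\bm{\mu}$ cancels in $\varepsilon_n-\varepsilon_m$ is correct and is implicitly used there.

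However, you have guessed the wrong statements for the other two sub-propositions. \textbf{Asymptotic Convergence} in the paper is \emph{not} the $\max_t s_t^2\to 0$ limit you describe; it is an $N\to\infty$ statement about the average bias $\xi_N=\tfrac{1}{N}\sum_{i=1}^N\big(\mathcal{A}'-\mathcal{A}\big)$ over observations, proved by bounding each summand by $2\lVert\bm{q}_{v_m}^i\rVert\,\lVert\bm{k}_{v_n}^i\rVert$ (via Cauchy--Schwarz on the $a_t,b_t$) and applying Hoeffding's inequality. Your $\sigma\to 0$ argument is valid mathematics but proves a different claim.

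\textbf{Distinguishability} is the larger miss. The paper's \cref{prop_distinguish} does not compare two pairs at different causal distances; it compares the attention between a feature and a noisy copy of \emph{itself}, $\mathcal{A}(\bm{v}_m^i,\bm{v}_m^i+\bm{\delta},\cdot)$, against the attention between two genuinely distinct features, $\mathcal{A}(\bm{v}_m^i,\bm{v}_n^i,\cdot)$, both under the \emph{same} perturbed positional encoding. The proof takes expectation over $\bm{v}_m^i,\bm{v}_n^i,\bm{\delta}$, reduces the difference to $\mathrm{tr}\!\big[\bm{R}(\cdot)\,\mathrm{Cov}(\bm{q}_{v_m}^i,\bm{k}_{v_m}^i)\big]$ via the quadratic-form expectation identity, and then uses that the diagonal covariances $\mathrm{Cov}(q^{(t)},k^{(t)})>0$ together with $\cos(\cdot)>0$ on $[-\pi/2,\pi/2]$ to conclude strict positivity. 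No concentration or variance bound enters; your proposed route through \cref{prop_distance}/\cref{prop_specificity} and Chebyshev would not reach this statement.
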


\begin{proof}
    See \cref{supp_robustness}.
\end{proof}

%%%%%%%%%%%%%%%%%%%%%%%%%%%%%%%%%%%%%%%%%%%%%%%%%%%%%%%%%%%%%%%%%%%%%%%%%%%%%%%
%%%%%%%%%%%%%%%%%%%%%%%%%%%%%%%%%%%%%%%%%%%%%%%%%%%%%%%%%%%%%%%%%%%%%%%%%%%%%%%
% Experiments
%%%%%%%%%%%%%%%%%%%%%%%%%%%%%%%%%%%%%%%%%%%%%%%%%%%%%%%%%%%%%%%%%%%%%%%%%%%%%%%
%%%%%%%%%%%%%%%%%%%%%%%%%%%%%%%%%%%%%%%%%%%%%%%%%%%%%%%%%%%%%%%%%%%%%%%%%%%%%%%

\section{Experiments}
Due to space constraints, we defer the dataset descriptions to \cref{supp_dataset}, and the specifics of the evaluation tasks and metrics to \cref{supp_evaluation}. Implementation details, including data preprocessing, model architecture, and training procedures, are provided in \cref{supp_imple}. Finally, \cref{supp_experiments} presents additional results, including a full empirical analysis of CAPE properties, comprehensive multi-omics benchmarks, ablation studies, sensitivity analysis, and complexity analysis.

\subsection{Empirical Evaluation of CAPE's Properties}

\begin{figure*}[!t]
\centering
\vspace{-2em}
\begin{minipage}[p]{0.74\textwidth}
\flushleft
\resizebox{\linewidth}{!}{
\includegraphics[width=\linewidth]{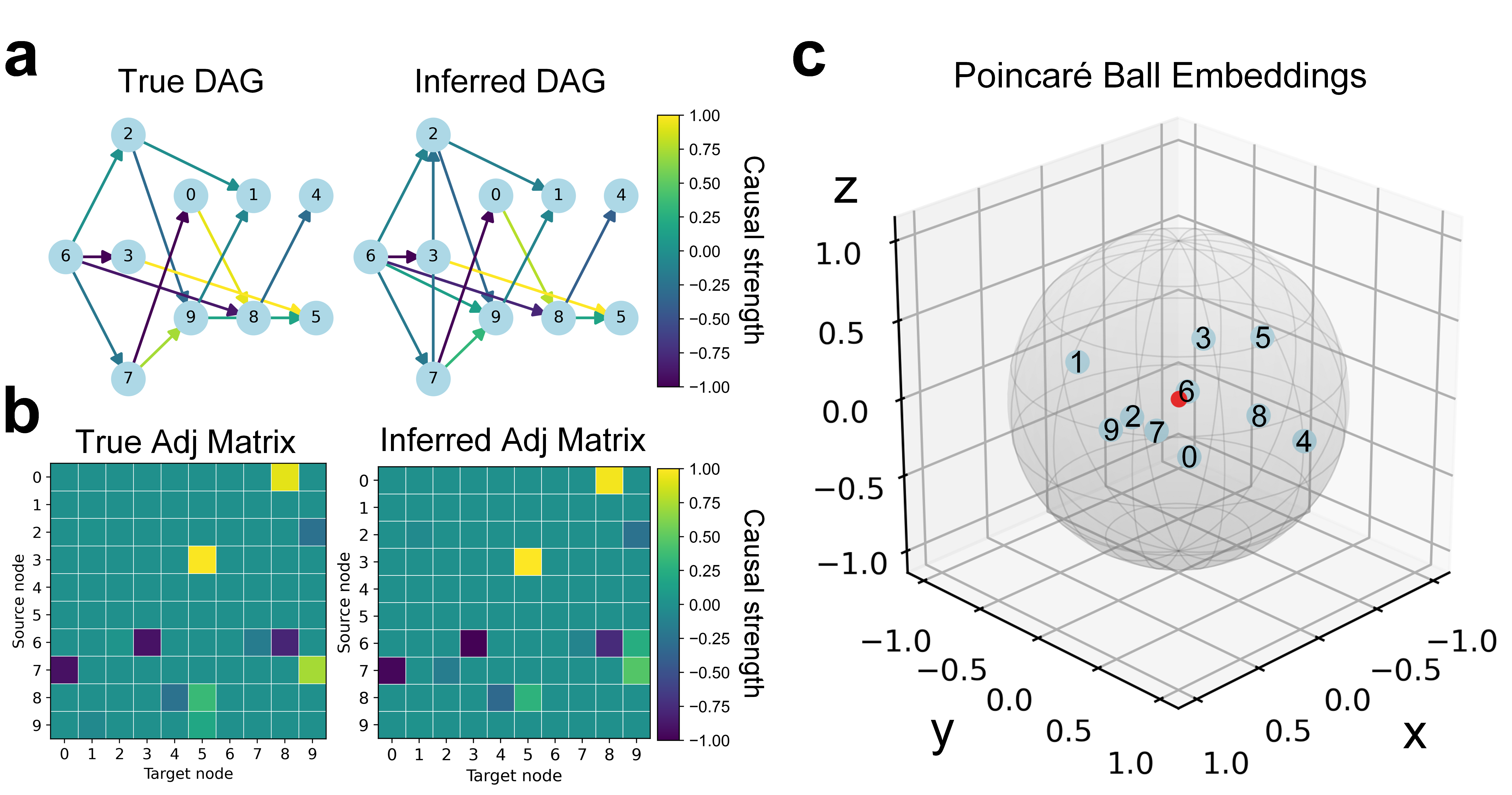} 
}
\end{minipage}
\begin{minipage}[p]{0.25\textwidth}
\vspace{0.5em}
\flushright
\begin{center}
\scriptsize Positional Encoding\\Coordinates \\
\end{center}
\vspace{0.2em}
\renewcommand\arraystretch{1.2}
\resizebox{\linewidth}{!}{
\begin{tabular}{C{0.4cm}C{0.8cm}C{0.8cm}C{0.8cm}C{1.1cm}}
\toprule
\textbf{ID} & \textbf{x} & \textbf{y} & \textbf{z} & \textbf{Norms} \\
\midrule
0 & 0.16 & 0.25 & -0.23 & 0.38 \\
1 & 0.24 & -0.60 & 0.12 & 0.66 \\
2 & 0.11 & -0.06 & -0.18 & 0.23\\
3 & -0.01 & 0.19 & 0.45 & 0.49 \\
4 & -0.36 & 0.67 & -0.16 & 0.78 \\
5 & -0.23 & 0.45 & 0.47 & 0.70 \\
6 & -0.05 & 0.02 & 0.04 & 0.07 \\
7 & 0.19 & -0.19 & -0.11 & 0.29 \\
8 & -0.08 & 0.57 & 0.05 & 0.58 \\
9 & 0.15 & -0.41 & -0.28 & 0.52 \\
\bottomrule
\end{tabular}
}
\end{minipage}
\caption{(a) True and inferred DAGs on synthetic data. (b) True and inferred adjacency matrices. (c) 3D visualization of Poincar{\'e} ball embeddings of nodes in $\mathcal{P}^3$. }\label{fig_synthetic}
\end{figure*}

\paragraph{CAPE effectively identifies the causal structure and preserves it in the hyperbolic manifold.} 
To facilitate this evaluation, we simulate a tabular dataset $\bm{X}_{\mathrm{syn}}\in \mathbb{R}^{5000\times 10}$, consisting of $5,000$ observations over a set $\mathcal{V}$ of ten non-sequential features. As shown in \cref{fig_synthetic}a, the underlying causal graph $G(\mathcal{V},\mathcal{E})$ is generated as a directed adjacency matrix $\bm{A}\in \mathbb{R}^{10\times 10}$, using the Barab{\'a}si-Albert model \cite{BAgraph}, which is characterized by a \textit{preferential attachment} mechanism. This mechanism assigns a higher probability of gaining new connections to nodes with a larger number of existing connections, thus varying the causal specificity across nodes \cite{molak2023causal}. We then utilize the \textit{IIDSimulation} package from gCastle \cite{zhang2021gcastle} to generate $\bm{X}_{\mathrm{syn}}$ based on $\bm{A}$, employing MLP-based nonlinear assignments with added Gaussian noises.  

CAPE is trained to estimate $\bm{A}$ from $\bm{X}_{\mathrm{syn}}$.  \cref{fig_synthetic}a and b show that the estimated adjacency matrix $\widehat{A}$ closely approximates the ground truth. Next, CAPE generates $d$-dimensional Poincar{\'e} ball positional encoding $\bm{e}_{v}$ for each feature $v\in \mathcal{V}$ based on $\widehat{A}$, with $d=3$ for visualization (\cref{fig_synthetic}c). These embeddings effectively encode causal strengths as pairwise distances and causal specificity as the distance to the origin, thereby accurately preserving the causal structure of $\widehat{A}$. 
For example, nodes with strong influence in the true DAG, such as node pairs (7, 0) and (3, 5), are embedded in close proximity on the Poincar{\'e} ball, demonstrating the model's ability to encode causal strength. Meanwhile, embeddings near the boundary (e.g., nodes 5 and 4) correspond to leaf nodes with high specificity, while those near the origin (e.g., nodes 6) correspond to root nodes with general causal influence, demonstrating the model's ability to model causal specificity.

\paragraph{CAPE enhances the causality-awareness and robustness of the self-attention mechanism.} See \cref{supp_empirical}.

\begin{table*}[!t]
\centering
\vspace{-1em}
\newcommand{\reduce}[1]{\textcolor[RGB]{1,33,105}{($-#1$)}}
\newcommand{\increase}[1]{\textcolor[RGB]{153,51,51}{($+#1$)}}
\newcommand{\upbetter}{\textcolor[RGB]{0,114,206}{$\uparrow$}}
\caption{Performance comparison of gene perturbation prediction on scRNA-seq datasets. Mean squared error (MSE) of perturbation predictions for top 20 differentially expressed genes are reported. $\dagger$ indicates the original positional encoding used in this method.}\label{tab_GPP}
\vspace{0.2em}
\resizebox{\linewidth}{!}{
\begin{tabular}{llll}
\toprule
\textbf{Methods} & \textbf{Positional Encoding} & \makecell[c]{\textbf{Single-gene Perturbation}} & \makecell[c]{\textbf{Double-gene Perturbations}} \\
\midrule
\multirow{3}{*}{scBERT \cite{scbert}} & Static absolute\textsuperscript{$\dagger$} & 0.224 & 0.230 \\
& Trainable relative & 0.219 \reduce{0.005} & 0.215 \reduce{0.015} \\
& CAPE & 0.193 \reduce{0.031} & 0.189 \reduce{0.041} \\
\cmidrule{1-4}
\multirow{3}{*}{scGPT \cite{scgpt}} & Trainable absolute\textsuperscript{$\dagger$} & 0.202 & 0.201 \\
& Trainable relative & 0.195 \reduce{0.007} & 0.204 \increase{0.003} \\
& \textbf{CAPE} & \textbf{0.182} \reduce{0.020} & \textbf{0.176} \reduce{0.025} \\
\bottomrule
\end{tabular}
}
\end{table*}

\subsection{Empirically Evaluating Representation Learning with CAPE over Real Multi-Omics Data}\label{sec_real}
To assess the effectiveness of CAPE in enhancing performance of transformer models over data with non-sequential yet causally-related data, we conduct evaluations using data from multiple omics domains \cite{omics_review}, including transcriptomics, epigenomics, and proteomics, (see \cref{supp_dataset} for the data description). Feature and observation representations generated by the CAPE-transformer model are evaluated in various feature-level and observation-level downstream tasks. Here, we focus on the feature-level task, gene perturbation prediction (GPP) with scRNA-seq data \cite{gears}, and leave the results of other tasks, e.g., cell clustering with proteomics data \cite{proteomic_data} and age prediction with epigenomics data \cite{DNA_data}, to \cref{supp_omics}.

GPP aims to leverage the learned gene representations to predict perturbation (e.g., gene knockout or activation)-induced changes in gene expression profiles, facilitating the exploration of gene functions and regulatory networks. Here, we use a human leukemia cell dataset \cite{Norman,scgpt,scfoundation,genecompass}, which includes unperturbed and perturbed cells under both single- and double-gene perturbations. Gene representations are learned using two prevalent transformer-based single-cell foundational models, including scBERT \cite{scbert} and scGPT \cite{scgpt}. Different position encoding approaches, which do not rely on predefined feature order \footnote{This explains why RoPE is not used as benchmark}, are evaluated with the two models, including CAPE, their default methods, and a trainable, causality-agnostic relative position encoder \cite{transformerxl}. The dataset is first preprocessed using standard scRNA-seq workflows \cite{scanpy}, including quality control, normalization, and highly variable gene selection, as detailed in \cref{supp_preprocess}. Following \cite{scbert, scgpt}, the two methods are trained on unperturbed cells to learn contextualized gene representations, which are fed into GEARS \cite{gears}, a perturbation prediction model, to predict the perturbation-induced gene expression changes. Detailed implementations of the transformer models and positional encoding mechanisms can be found in \cref{supp_model} and \cref{supp_pos_enc}, respectively. The prediction accuracy is measured as the mean squared error (MSE) of the predicted and true expressions of the top 20 differentially expressed genes \cref{tab_GPP}. We find that both models equipped with CAPE consistently yield substantial performance gains (11.1\% average reduction in MSE) compared to their respective default approaches. This contrasts with using the causality-agnostic relative position encoder, which achieves only a 2.7\% reduction.

\subsection{Model Analyses}
We provide a comprehensive analysis of CAPE, including ablation studies, sensitivity analysis, and complexity analysis. In this section, we primarily present the ablation studies, while the results of sensitivity and complexity analyses are deferred to \cref{supp_sensitivity} and \cref{supp_complexity}, respectively.
    
\begin{table*}[!t]
\centering
\vspace{-1em}
\newcommand{\std}[1]{\textcolor[RGB]{0,107,60}{($\pm#1$)}}
\newcommand{\upbetter}{\textcolor[RGB]{0,114,206}{$\uparrow$}}
\caption{Ablation studies on gene perturbation prediction on scRNA-seq datasets. The standard deviation of each experiment is indicated in parentheses.}\label{tab_ablation}
\vspace{0.2em}
\resizebox{0.9\linewidth}{!}{
\begin{tabular}{lll}
\toprule
\textbf{Models} & \makecell[c]{\textbf{Single-gene Perturbation}} & \makecell[c]{\textbf{Double-gene Perturbations}} \\
\midrule
CAPE-null & 0.234 \std{0.014} & 0.238 \std{0.017} \\
CAPE-w/o-CSL & 0.209 \std{0.010} & 0.213 \std{0.011} \\
CAPE-w/o-hyperbolic & 0.192 \std{0.008} & 0.196 \std{0.008} \\
CAPE-w/o-rotary & 0.201 \std{0.009} &  0.208 \std{0.010} \\
\cmidrule{1-3} 
\textbf{CAPE} & \textbf{0.182} \std{0.005} & \textbf{0.176} \std{0.008} \\
\bottomrule
\end{tabular}
}
\end{table*}

We conduct a series of ablation studies to assess the contributions of CAPE's key components, using the same GPP dataset as described in \cref{sec_real}. In these experiments, we adopt scGPT as the transformer backbone, replacing its default position encoding mechanism with four CAPE ablation variants. Specifically, the first variant completely omits CAPE (CAPE-null). The second variant (CAPE-w/o-CSL) excludes the Causal Structure Learning step (Step I) by replacing the learned causal graph with a similarity graph constructed from pairwise feature correlations. The third variant (CAPE-w/o-hyperbolic) removes the hyperbolic modeling component, replacing the hyperbolic distance ($d_l$ in \cref{eq_losscon} and \cref{eq_regularization}) with Euclidean distance and using standard SGD instead of RSGD for optimization. Lastly, the fourth variant (CAPE-w/o-rotary) bypasses the rotary form conversion (Step III), directly adding hyperbolic positional encodings to feature embeddings.

As shown in \cref{tab_ablation}, removing any of these components leads to performance degradation, particularly in the double-gene perturbation prediction task. This task is inherently more challenging than single-gene perturbation prediction, as it requires more semantical informative gene representations. 
As expected, the CAPE-null variant yields the most significant increase in MSE for both prediction tasks, demonstrating the importance of CAPE's overall design. The second-largest performance drop arises with CAPE-w/o-CSL, where replacing the learned causal DAG with an undirected similar matrix impairs the model's ability to encode causal relationships in the positional encodings. Similarly, APE-w/o-rotary leads to a notable decrease in performance, as omitting the rotary form forfeits the attention mechanism's sensitivity to causal distance and causal generality, two key causal properties for capturing hierarchical causal semantics. Moreover, this omission also undermines CAPE's sensitivity to nuanced changes in underlying causal semantics, consistent with prior findings that highlight the importance of concentrated attention scores in context modeling \cite{MassiveValues}. Finally,   although CAPE-w/o-hyperbolic leads to a more modest decline in performance, we emphasize the essential role of space curvature-aware optimization, which allows the positional encodings to be placed in optimal locations that better reflect the underlying causal graph structure.

%%%%%%%%%%%%%%%%%%%%%%%%%%%%%%%%%%%%%%%%%%%%%%%%%%%%%%%%%%%%%%%%%%%%%%%%%%%%%%%
%%%%%%%%%%%%%%%%%%%%%%%%%%%%%%%%%%%%%%%%%%%%%%%%%%%%%%%%%%%%%%%%%%%%%%%%%%%%%%%
% Conclusion
%%%%%%%%%%%%%%%%%%%%%%%%%%%%%%%%%%%%%%%%%%%%%%%%%%%%%%%%%%%%%%%%%%%%%%%%%%%%%%%
%%%%%%%%%%%%%%%%%%%%%%%%%%%%%%%%%%%%%%%%%%%%%%%%%%%%%%%%%%%%%%%%%%%%%%%%%%%%%%%

\section{Conclusion}
In this study, we present CAPE, a causality-aware positional encoding that enables transformers to handle non-sequential yet causally-related features by modeling their latent structure as a DAG and embedding it in hyperbolic space. By unifying causal discovery, hyperbolic geometry, and rotary attention, CAPE effectively captures core properties of causal graphs, causal strength and causal specificity, and translates them into position-aware attention dynamics. Our theoretical analysis reveals desirable behaviors of the resulting self-attention, and extensive empirical results across synthetic and multi-omics datasets validate the benefits. CAPE opens a pioneering path toward causal representation learning in domains where traditional positional encodings fail.

%%%%%%%%%%%%%%%%%%%%%%%%%%%%%%%%%%%%%%%%%%%%%%%%%%%%%%%%%%%%%%%%%%%%%%%%%%%%%%%

\bibliographystyle{unsrt}
\bibliography{main}

\begin{thebibliography}{100}

\bibitem{transformer}
Ashish Vaswani, Noam Shazeer, Niki Parmar, Jakob Uszkoreit, Llion Jones, Aidan~N Gomez, \L~ukasz Kaiser, and Illia Polosukhin.
\newblock Attention is all you need.
\newblock In {\em Proceedings of the International Conference on Neural Information Processing Systems}, volume~30, 2017.

\bibitem{BERT}
Jacob Devlin, Ming-Wei Chang, Kenton Lee, and Kristina Toutanova.
\newblock Bert: Pre-training of deep bidirectional transformers for language understanding.
\newblock In {\em Proceedings of the Conference of the North American Chapter of the Association for Computational Linguistics}, pages 4171--4186, 2019.

\bibitem{GPT}
Tom Brown, Benjamin Mann, Nick Ryder, Melanie Subbiah, Jared~D Kaplan, Prafulla Dhariwal, Arvind Neelakantan, Pranav Shyam, Girish Sastry, Amanda Askell, et~al.
\newblock Language models are few-shot learners.
\newblock In {\em Proceedings of the International Conference on Neural Information Processing Systems}, volume~33, pages 1877--1901, 2020.

\bibitem{RoBERTa}
Yinhan Liu, Myle Ott, Naman Goyal, Jingfei Du, Mandar Joshi, Danqi Chen, Omer Levy, Mike Lewis, Luke Zettlemoyer, and Veselin Stoyanov.
\newblock Roberta: A robustly optimized bert pretraining approach.
\newblock {\em arXiv preprint arXiv:1907.11692}, 2019.

\bibitem{ALBERT}
Zhenzhong Lan, Mingda Chen, Sebastian Goodman, Kevin Gimpel, Piyush Sharma, and Radu Soricut.
\newblock Albert: A lite bert for self-supervised learning of language representations.
\newblock In {\em International Conference on Learning Representations}, 2020.

\bibitem{ViT}
Alexey Dosovitskiy, Lucas Beyer, Alexander Kolesnikov, Dirk Weissenborn, Xiaohua Zhai, Thomas Unterthiner, Mostafa Dehghani, Matthias Minderer, Georg Heigold, Sylvain Gelly, et~al.
\newblock An image is worth 16x16 words: Transformers for image recognition at scale.
\newblock In {\em International Conference on Learning Representations}, 2021.

\bibitem{DETR}
Nicolas Carion, Francisco Massa, Gabriel Synnaeve, Nicolas Usunier, Alexander Kirillov, and Sergey Zagoruyko.
\newblock End-to-end object detection with transformers.
\newblock In {\em European Conference on Computer Vision}, pages 213--229. Springer, 2020.

\bibitem{DALL-E}
Aditya Ramesh, Mikhail Pavlov, Gabriel Goh, Scott Gray, Chelsea Voss, Alec Radford, Mark Chen, and Ilya Sutskever.
\newblock Zero-shot text-to-image generation.
\newblock In {\em International Conference on Machine Learning}, pages 8821--8831. Pmlr, 2021.

\bibitem{TransformerTTS}
Naihan Li, Shujie Liu, Yanqing Liu, Sheng Zhao, and Ming Liu.
\newblock Neural speech synthesis with transformer network.
\newblock In {\em Proceedings of the AAAI Conference on Artificial Intelligence}, volume~33, pages 6706--6713, 2019.

\bibitem{Whisper}
Alec Radford, Jong~Wook Kim, Tao Xu, Greg Brockman, Christine McLeavey, and Ilya Sutskever.
\newblock Robust speech recognition via large-scale weak supervision.
\newblock In {\em International Conference on Machine Learning}, pages 28492--28518. PMLR, 2023.

\bibitem{CLIP}
Alec Radford, Jong~Wook Kim, Chris Hallacy, Aditya Ramesh, Gabriel Goh, Sandhini Agarwal, Girish Sastry, Amanda Askell, Pamela Mishkin, Jack Clark, et~al.
\newblock Learning transferable visual models from natural language supervision.
\newblock In {\em International Conference on Machine Learning}, pages 8748--8763. PMLR, 2021.

\bibitem{BLIP}
Junnan Li, Dongxu Li, Caiming Xiong, and Steven Hoi.
\newblock Blip: Bootstrapping language-image pre-training for unified vision-language understanding and generation.
\newblock In {\em International Conference on Machine Learning}, pages 12888--12900. PMLR, 2022.

\bibitem{LLava}
Haotian Liu, Chunyuan Li, Qingyang Wu, and Yong~Jae Lee.
\newblock Visual instruction tuning.
\newblock In {\em Proceedings of the International Conference on Neural Information Processing Systems}, volume~36, pages 34892--34916, 2023.

\bibitem{yun2019transformers}
Chulhee Yun, Srinadh Bhojanapalli, Ankit~Singh Rawat, Sashank~J Reddi, and Sanjiv Kumar.
\newblock Are transformers universal approximators of sequence-to-sequence functions?
\newblock In {\em International Conference on Learning Representations}, 2020.

\bibitem{huang2020improve}
Zhiheng Huang, Davis Liang, Peng Xu, and Bing Xiang.
\newblock Improve transformer models with better relative position embeddings.
\newblock In {\em Proceedings of the Conference on Empirical Methods in Natural Language Processing}, 2020.

\bibitem{Deberta}
He~Pengcheng, Liu Xiaodong, Gao Jianfeng, and Chen Weizhu.
\newblock {DeBERTa}: Decoding-enhanced bert with disentangled attention.
\newblock In {\em International Conference on Learning Representations}, 2021.

\bibitem{ke2020rethinking}
Guolin Ke, Di~He, and Tie-Yan Liu.
\newblock Rethinking positional encoding in language pre-training.
\newblock In {\em International Conference on Learning Representations}, 2021.

\bibitem{rope}
Jianlin Su, Murtadha Ahmed, Yu~Lu, Shengfeng Pan, Wen Bo, and Yunfeng Liu.
\newblock Roformer: Enhanced transformer with rotary position embedding.
\newblock {\em Neurocomputing}, 568:127063, 2024.

\bibitem{MassiveValues}
Mingyu Jin, Kai Mei, Wujiang Xu, Mingjie Sun, Ruixiang Tang, Mengnan Du, Zirui Liu, and Yongfeng Zhang.
\newblock Massive values in self-attention modules are the key to contextual knowledge understanding.
\newblock In {\em International Conference on Machine Learning}, 2025.

\bibitem{transcriptomics}
Christoph Ziegenhain, Beate Vieth, Swati Parekh, Bj{\"o}rn Reinius, Amy Guillaumet-Adkins, Martha Smets, Heinrich Leonhardt, Holger Heyn, Ines Hellmann, and Wolfgang Enard.
\newblock Comparative analysis of single-cell rna sequencing methods.
\newblock {\em Molecular Cell}, 65(4):631--643, 2017.

\bibitem{proteomics}
Ryan~T Kelly.
\newblock Single-cell proteomics: progress and prospects.
\newblock {\em Molecular and Cellular Proteomics}, 19(11):1739--1748, 2020.

\bibitem{geneformer}
Christina~V Theodoris, Ling Xiao, Anant Chopra, Mark~D Chaffin, Zeina~R Al~Sayed, Matthew~C Hill, Helene Mantineo, Elizabeth~M Brydon, Zexian Zeng, X~Shirley Liu, et~al.
\newblock Transfer learning enables predictions in network biology.
\newblock {\em Nature}, 618(7965):616--624, 2023.

\bibitem{genecompass}
Xiaodong Yang, Guole Liu, Guihai Feng, Dechao Bu, Pengfei Wang, Jie Jiang, Shubai Chen, Qinmeng Yang, Hefan Miao, Yiyang Zhang, et~al.
\newblock {GeneCompass}: deciphering universal gene regulatory mechanisms with a knowledge-informed cross-species foundation model.
\newblock {\em Cell Research}, pages 1--16, 2024.

\bibitem{scbert}
Fan Yang, Wenchuan Wang, Fang Wang, Yuan Fang, Duyu Tang, Junzhou Huang, Hui Lu, and Jianhua Yao.
\newblock {scBERT} as a large-scale pretrained deep language model for cell type annotation of single-cell rna-seq data.
\newblock {\em Nature Machine Intelligence}, 4(10):852--866, 2022.

\bibitem{scgpt}
Haotian Cui, Chloe Wang, Hassaan Maan, Kuan Pang, Fengning Luo, Nan Duan, and Bo~Wang.
\newblock {scGPT}: toward building a foundation model for single-cell multi-omics using generative ai.
\newblock {\em Nature Methods}, pages 1--11, 2024.

\bibitem{notears}
Xun Zheng, Bryon Aragam, Pradeep~K Ravikumar, and Eric~P Xing.
\newblock {DAGs} with {NO TEARS}: Continuous optimization for structure learning.
\newblock In {\em Proceedings of the International Conference on Neural Information Processing Systems}, volume~31, 2018.

\bibitem{DAG-GNN}
Yue Yu, Jie Chen, Tian Gao, and Mo~Yu.
\newblock Dag-gnn: Dag structure learning with graph neural networks.
\newblock In {\em International Conference on Machine Learning}, pages 7154--7163. PMLR, 2019.

\bibitem{DeepSEM}
Hantao Shu, Jingtian Zhou, Qiuyu Lian, Han Li, Dan Zhao, Jianyang Zeng, and Jianzhu Ma.
\newblock Modeling gene regulatory networks using neural network architectures.
\newblock {\em Nature Computational Science}, 1(7):491--501, 2021.

\bibitem{Minkowski}
Hermann Minkowski.
\newblock {\em Raum und zeit}.
\newblock Springer, 1988.

\bibitem{MinkowskiCausal}
Athanasios Vlontzos, Henrique~Bergallo Rocha, Daniel Rueckert, and Bernhard Kainz.
\newblock Causal future prediction in a minkowski space-time.
\newblock {\em arXiv preprint arXiv:2008.09154}, 2020.

\bibitem{krioukov2010hyperbolic}
Dmitri Krioukov, Fragkiskos Papadopoulos, Maksim Kitsak, Amin Vahdat, and Mari{\'a}n Bogun{\'a}.
\newblock Hyperbolic geometry of complex networks.
\newblock {\em Physical Review E—Statistical, Nonlinear, and Soft Matter Physics}, 82(3):036106, 2010.

\bibitem{RSGD}
Silvere Bonnabel.
\newblock Stochastic gradient descent on riemannian manifolds.
\newblock {\em IEEE Transactions on Automatic Control}, 58(9):2217--2229, 2013.

\bibitem{nickel2018learning}
Maximillian Nickel and Douwe Kiela.
\newblock Learning continuous hierarchies in the lorentz model of hyperbolic geometry.
\newblock In {\em International conference on machine learning}, pages 3779--3788. PMLR, 2018.

\bibitem{MLPuniversal}
Kurt Hornik, Maxwell Stinchcombe, and Halbert White.
\newblock Multilayer feedforward networks are universal approximators.
\newblock {\em Neural Networks}, 2(5):359--366, 1989.

\bibitem{VAE}
Diederik~P Kingma, Max Welling, et~al.
\newblock Auto-encoding variational bayes, 2013.

\bibitem{program}
Dimitri~P Bertsekas.
\newblock Nonlinear programming.
\newblock {\em Journal of the Operational Research Society}, 48(3):334--334, 1997.

\bibitem{lorentz}
Maximillian Nickel and Douwe Kiela.
\newblock Learning continuous hierarchies in the lorentz model of hyperbolic geometry.
\newblock In {\em International Conference on Machine Learning}, pages 3779--3788. PMLR, 2018.

\bibitem{hyperbolic}
James~W Cannon, William~J Floyd, Richard Kenyon, Walter~R Parry, et~al.
\newblock Hyperbolic geometry.
\newblock {\em Flavors of Geometry}, 31(59-115):2, 1997.

\bibitem{yang2023hyperbolic}
Menglin Yang, Min Zhou, Rex Ying, Yankai Chen, and Irwin King.
\newblock Hyperbolic representation learning: Revisiting and advancing.
\newblock In {\em International Conference on Machine Learning}, pages 39639--39659. PMLR, 2023.

\bibitem{BAgraph}
Albert-L{\'a}szl{\'o} Barab{\'a}si and R{\'e}ka Albert.
\newblock Emergence of scaling in random networks.
\newblock {\em Science}, 286(5439):509--512, 1999.

\bibitem{molak2023causal}
Aleksander Molak.
\newblock {\em Causal Inference and Discovery in Python: Unlock the secrets of modern causal machine learning with DoWhy, EconML, PyTorch and more}.
\newblock Packt Publishing Ltd, 2023.

\bibitem{zhang2021gcastle}
Keli Zhang, Shengyu Zhu, Marcus Kalander, Ignavier Ng, Junjian Ye, Zhitang Chen, and Lujia Pan.
\newblock {gCastle}: A python toolbox for causal discovery, 2021.

\bibitem{omics_review}
Alev Baysoy, Zhiliang Bai, Rahul Satija, and Rong Fan.
\newblock The technological landscape and applications of single-cell multi-omics.
\newblock {\em Nature Reviews Molecular Cell Biology}, 24(10):695--713, 2023.

\bibitem{gears}
Yusuf Roohani, Kexin Huang, and Jure Leskovec.
\newblock Predicting transcriptional outcomes of novel multigene perturbations with gears.
\newblock {\em Nature Biotechnology}, 42(6):927--935, 2024.

\bibitem{proteomic_data}
Harrison Specht, Edward Emmott, Aleksandra~A Petelski, R~Gray Huffman, David~H Perlman, Marco Serra, Peter Kharchenko, Antonius Koller, and Nikolai Slavov.
\newblock Single-cell proteomic and transcriptomic analysis of macrophage heterogeneity using scope2.
\newblock {\em Genome biology}, 22:1--27, 2021.

\bibitem{DNA_data}
Lucas~Paulo de~Lima~Camillo, Louis~R Lapierre, and Ritambhara Singh.
\newblock A pan-tissue dna-methylation epigenetic clock based on deep learning.
\newblock {\em Npj Aging}, 8(1):4, 2022.

\bibitem{Norman}
Thomas~M Norman, Max~A Horlbeck, Joseph~M Replogle, Alex~Y Ge, Albert Xu, Marco Jost, Luke~A Gilbert, and Jonathan~S Weissman.
\newblock Exploring genetic interaction manifolds constructed from rich single-cell phenotypes.
\newblock {\em Science}, 365(6455):786--793, 2019.

\bibitem{scfoundation}
Minsheng Hao, Jing Gong, Xin Zeng, Chiming Liu, Yucheng Guo, Xingyi Cheng, Taifeng Wang, Jianzhu Ma, Xuegong Zhang, and Le~Song.
\newblock Large-scale foundation model on single-cell transcriptomics.
\newblock {\em Nature Methods}, pages 1--11, 2024.

\bibitem{transformerxl}
Zihang Dai, Zhilin Yang, Yiming Yang, Jaime Carbonell, Quoc Le, and Ruslan Salakhutdinov.
\newblock Transformer-{XL}: Attentive language models beyond a fixed-length context.
\newblock In {\em Proceedings of the Annual Meeting of the Association for Computational Linguistics}, pages 2978--2988, 2019.

\bibitem{scanpy}
F~Alexander Wolf, Philipp Angerer, and Fabian~J Theis.
\newblock Scanpy: large-scale single-cell gene expression data analysis.
\newblock {\em Genome Biology}, 19:1--5, 2018.

\bibitem{geometry}
Joel~W Robbin and Dietmar~A Salamon.
\newblock {\em Introduction to differential geometry}.
\newblock Springer Nature, 2022.

\bibitem{poincare}
Maximillian Nickel and Douwe Kiela.
\newblock Poincar{\'e} embeddings for learning hierarchical representations.
\newblock In {\em Proceedings of the International Conference on Neural Information Processing Systems}, volume~30, 2017.

\bibitem{3sigma}
E.W. Grafarend.
\newblock {\em Linear and Nonlinear Models: Fixed Effects, Random Effects, and Mixed Models}.
\newblock Walter de Gruyter, 2006.

\bibitem{restart}
Konstantin Avrachenkov, Alexey Piunovskiy, and Yi~Zhang.
\newblock Markov processes with restart.
\newblock {\em Journal of Applied Probability}, 50(4):960--968, 2013.

\bibitem{radford2018improving}
Alec Radford, Karthik Narasimhan, Tim Salimans, Ilya Sutskever, et~al.
\newblock Improving language understanding by generative pre-training.
\newblock 2018.

\bibitem{radford2019language}
Alec Radford, Jeffrey Wu, Rewon Child, David Luan, Dario Amodei, Ilya Sutskever, et~al.
\newblock Language models are unsupervised multitask learners.
\newblock {\em OpenAI blog}, 1(8):9, 2019.

\bibitem{clark2020electra}
Kevin Clark, Minh-Thang Luong, Quoc~V Le, and Christopher~D Manning.
\newblock Electra: Pre-training text encoders as discriminators rather than generators.
\newblock {\em arXiv preprint arXiv:2003.10555}, 2020.

\bibitem{RPR}
Peter Shaw, Jakob Uszkoreit, and Ashish Vaswani.
\newblock Self-attention with relative position representations.
\newblock In {\em Proceedings of the Conference of the North American Chapter of the Association for Computational Linguistics}, pages 464--468, 2018.

\bibitem{Alibi}
Ofir Press, Noah~A Smith, and Mike Lewis.
\newblock Train short, test long: Attention with linear biases enables input length extrapolation.
\newblock {\em arXiv preprint arXiv:2108.12409}, 2021.

\bibitem{FoPE}
Ermo Hua, Che Jiang, Xingtai Lv, Kaiyan Zhang, Ning Ding, Youbang Sun, Biqing Qi, Yuchen Fan, Xuekai Zhu, and Bowen Zhou.
\newblock Fourier position embedding: Enhancing attention's periodic extension for length generalization.
\newblock {\em arXiv preprint arXiv:2412.17739}, 2024.

\bibitem{xPos}
Yutao Sun, Li~Dong, Barun Patra, Shuming Ma, Shaohan Huang, Alon Benhaim, Vishrav Chaudhary, Xia Song, and Furu Wei.
\newblock A length-extrapolatable transformer.
\newblock {\em arXiv preprint arXiv:2212.10554}, 2022.

\bibitem{chen2023extending}
Shouyuan Chen, Sherman Wong, Liangjian Chen, and Yuandong Tian.
\newblock Extending context window of large language models via positional interpolation.
\newblock {\em arXiv preprint arXiv:2306.15595}, 2023.

\bibitem{tgpt}
Hongru Shen, Jilei Liu, Jiani Hu, Xilin Shen, Chao Zhang, Dan Wu, Mengyao Feng, Meng Yang, Yang Li, Yichen Yang, et~al.
\newblock Generative pretraining from large-scale transcriptomes for single-cell deciphering.
\newblock {\em Iscience}, 26(5), 2023.

\bibitem{glymour2019review}
Clark Glymour, Kun Zhang, and Peter Spirtes.
\newblock Review of causal discovery methods based on graphical models.
\newblock {\em Frontiers in genetics}, 10:524, 2019.

\bibitem{spirtes2000causation}
Peter Spirtes, Clark~N Glymour, and Richard Scheines.
\newblock {\em Causation, prediction, and search}.
\newblock MIT press, 2000.

\bibitem{colombo2014order}
Diego Colombo, Marloes~H Maathuis, et~al.
\newblock Order-independent constraint-based causal structure learning.
\newblock {\em J. Mach. Learn. Res.}, 15(1):3741--3782, 2014.

\bibitem{le2016fast}
Thuc~Duy Le, Tao Hoang, Jiuyong Li, Lin Liu, Huawen Liu, and Shu Hu.
\newblock A fast pc algorithm for high dimensional causal discovery with multi-core pcs.
\newblock {\em IEEE/ACM transactions on computational biology and bioinformatics}, 16(5):1483--1495, 2016.

\bibitem{huang2018generalized}
Biwei Huang, Kun Zhang, Yizhu Lin, Bernhard Sch{\"o}lkopf, and Clark Glymour.
\newblock Generalized score functions for causal discovery.
\newblock In {\em Proceedings of the 24th ACM SIGKDD international conference on knowledge discovery \& data mining}, pages 1551--1560, 2018.

\bibitem{chickering2020statistically}
Max Chickering.
\newblock Statistically efficient greedy equivalence search.
\newblock In {\em Conference on Uncertainty in Artificial Intelligence}, pages 241--249. Pmlr, 2020.

\bibitem{hoyer2008nonlinear}
Patrik Hoyer, Dominik Janzing, Joris~M Mooij, Jonas Peters, and Bernhard Sch{\"o}lkopf.
\newblock Nonlinear causal discovery with additive noise models.
\newblock {\em Advances in neural information processing systems}, 21, 2008.

\bibitem{shimizu2006linear}
Shohei Shimizu, Patrik~O Hoyer, Aapo Hyv{\"a}rinen, Antti Kerminen, and Michael Jordan.
\newblock A linear non-gaussian acyclic model for causal discovery.
\newblock {\em Journal of Machine Learning Research}, 7(10), 2006.

\bibitem{shimizu2011directlingam}
Shohei Shimizu, Takanori Inazumi, Yasuhiro Sogawa, Aapo Hyvarinen, Yoshinobu Kawahara, Takashi Washio, Patrik~O Hoyer, Kenneth Bollen, and Patrik Hoyer.
\newblock Directlingam: A direct method for learning a linear non-gaussian structural equation model.
\newblock {\em Journal of Machine Learning Research-JMLR}, 12(Apr):1225--1248, 2011.

\bibitem{GOLEM}
Ignavier Ng, AmirEmad Ghassami, and Kun Zhang.
\newblock On the role of sparsity and dag constraints for learning linear dags.
\newblock {\em Advances in Neural Information Processing Systems}, 33:17943--17954, 2020.

\bibitem{Grad-DAG}
S{\'e}bastien Lachapelle, Philippe Brouillard, Tristan Deleu, and Simon Lacoste-Julien.
\newblock Gradient-based neural dag learning.
\newblock {\em arXiv preprint arXiv:1906.02226}, 2019.

\bibitem{geffner2022deep}
Tomas Geffner, Javier Antoran, Adam Foster, Wenbo Gong, Chao Ma, Emre Kiciman, Amit Sharma, Angus Lamb, Martin Kukla, Nick Pawlowski, et~al.
\newblock Deep end-to-end causal inference.
\newblock {\em arXiv preprint arXiv:2202.02195}, 2022.

\bibitem{cellxgene}
Colin Megill, Bruce Martin, Charlotte Weaver, Sidney Bell, Lia Prins, Seve Badajoz, Brian McCandless, Angela~Oliveira Pisco, Marcus Kinsella, Fiona Griffin, et~al.
\newblock Cellxgene: a performant, scalable exploration platform for high dimensional sparse matrices.
\newblock {\em bioRxiv}, pages 2021--04, 2021.

\bibitem{Methylgpt}
Kejun Ying, Jinyeop Song, Haotian Cui, Yikun Zhang, Siyuan Li, Xingyu Chen, Hanna Liu, Alec Eames, Daniel~L McCartney, Riccardo~E Marioni, et~al.
\newblock Methylgpt: a foundation model for the dna methylome.
\newblock {\em bioRxiv}, pages 2024--10, 2024.

\bibitem{EWASHub}
Zhuang Xiong, Mengwei Li, Fei Yang, Yingke Ma, Jian Sang, Rujiao Li, Zhaohua Li, Zhang Zhang, and Yiming Bao.
\newblock Ewas data hub: a resource of dna methylation array data and metadata.
\newblock {\em Nucleic acids research}, 48(D1):D890--D895, 2020.

\bibitem{clockbase}
Kejun Ying, Alexander Tyshkovskiy, Alexandre Trapp, Hanna Liu, Mahdi Moqri, Csaba Kerepesi, and Vadim~N Gladyshev.
\newblock Clockbase: a comprehensive platform for biological age profiling in human and mouse.
\newblock {\em bioRxiv}, pages 2023--02, 2023.

\bibitem{dataset-PBMC}
Grace~XY Zheng, Jessica~M Terry, Phillip Belgrader, Paul Ryvkin, Zachary~W Bent, Ryan Wilson, Solongo~B Ziraldo, Tobias~D Wheeler, Geoff~P McDermott, Junjie Zhu, et~al.
\newblock Massively parallel digital transcriptional profiling of single cells.
\newblock {\em Nature Communications}, 8(1):14049, 2017.

\bibitem{dataset-Pancreas}
{\AA}sa Segerstolpe, Athanasia Palasantza, Pernilla Eliasson, Eva-Marie Andersson, Anne-Christine Andr{\'e}asson, Xiaoyan Sun, Simone Picelli, Alan Sabirsh, Maryam Clausen, Magnus~K Bjursell, et~al.
\newblock Single-cell transcriptome profiling of human pancreatic islets in health and type 2 diabetes.
\newblock {\em Cell Metabolism}, 24(4):593--607, 2016.

\bibitem{dataset-BMMC}
Jeffrey~M Granja, Sandy Klemm, Lisa~M McGinnis, Arwa~S Kathiria, Anja Mezger, M~Ryan Corces, Benjamin Parks, Eric Gars, Michaela Liedtke, Grace~XY Zheng, et~al.
\newblock Single-cell multiomic analysis identifies regulatory programs in mixed-phenotype acute leukemia.
\newblock {\em Nature Biotechnology}, 37(12):1458--1465, 2019.

\bibitem{dataset-MOP}
Meng Zhang, Xingjie Pan, Won Jung, Aaron~R Halpern, Stephen~W Eichhorn, Zhiyun Lei, Limor Cohen, Kimberly~A Smith, Bosiljka Tasic, Zizhen Yao, et~al.
\newblock Molecularly defined and spatially resolved cell atlas of the whole mouse brain.
\newblock {\em Nature}, 624(7991):343--354, 2023.

\bibitem{dataset_SCoPE2_Montalvo}
Ana~P Montalvo, Zoe~L Gruskin, Andrew Leduc, Mai Liu, Zihan Gao, June~H Ahn, Juerg~R Straubhaar, Nikolai Slavov, and Juan~R Alvarez-Dominguez.
\newblock An adult clock component links circadian rhythms to pancreatic $\beta$-cell maturation.
\newblock {\em BiorXiv}, 2023.

\bibitem{pSCoPE_Leduc}
Andrew Leduc, R~Gray Huffman, Joshua Cantlon, Saad Khan, and Nikolai Slavov.
\newblock Exploring functional protein covariation across single cells using npop.
\newblock {\em Genome Biology}, 23(1):261, 2022.

\bibitem{de2022pan}
Lucas~Paulo de~Lima~Camillo, Louis~R Lapierre, and Ritambhara Singh.
\newblock A pan-tissue dna-methylation epigenetic clock based on deep learning.
\newblock {\em npj Aging}, 8(1):4, 2022.

\bibitem{langcell}
Suyuan Zhao, Jiahuan Zhang, Yushuai Wu, Yizhen Luo, and Zaiqing Nie.
\newblock {LangCell}: Language-cell pre-training for cell identity understanding.
\newblock In {\em International Conference on Machine Learning}, 2024.

\bibitem{scprotein}
Wei Li, Fan Yang, Fang Wang, Yu~Rong, Linjing Liu, Bingzhe Wu, Han Zhang, and Jianhua Yao.
\newblock scprotein: a versatile deep graph contrastive learning framework for single-cell proteomics embedding.
\newblock {\em Nature Methods}, 21(4):623--634, 2024.

\bibitem{gatto2023initial}
Laurent Gatto, Ruedi Aebersold, Juergen Cox, Vadim Demichev, Jason Derks, Edward Emmott, Alexander~M Franks, Alexander~R Ivanov, Ryan~T Kelly, Luke Khoury, et~al.
\newblock Initial recommendations for performing, benchmarking and reporting single-cell proteomics experiments.
\newblock {\em Nature methods}, 20(3):375--386, 2023.

\bibitem{ARI}
Lawrence Hubert and Phipps Arabie.
\newblock Comparing partitions.
\newblock {\em Journal of classification}, 2:193--218, 1985.

\bibitem{NMI}
Pablo~A Est{\'e}vez, Michel Tesmer, Claudio~A Perez, and Jacek~M Zurada.
\newblock Normalized mutual information feature selection.
\newblock {\em IEEE Transactions on neural networks}, 20(2):189--201, 2009.

\bibitem{ASW}
Peter~J Rousseeuw.
\newblock Silhouettes: a graphical aid to the interpretation and validation of cluster analysis.
\newblock {\em Journal of computational and applied mathematics}, 20:53--65, 1987.

\bibitem{ACSleuth}
Kaichen Xu, Yueyang Ding, Suyang Hou, Weiqiang Zhan, Nisang Chen, Jun Wang, and Xiaobo Sun.
\newblock Domain adaptive and fine-grained anomaly detection for single-cell sequencing data and beyond.
\newblock In {\em International Joint Conference on Artificial Intelligence}, pages 6125--6133, 8 2024.

\bibitem{uniprot}
The UniProt~Consortium.
\newblock Uniprot: the universal protein knowledgebase.
\newblock {\em Nucleic acids research}, 46(5):2699--2699, 2018.

\bibitem{hao2021integrated}
Yuhan Hao, Stephanie Hao, Erica Andersen-Nissen, William~M Mauck, Shiwei Zheng, Andrew Butler, Maddie~J Lee, Aaron~J Wilk, Charlotte Darby, Michael Zager, et~al.
\newblock Integrated analysis of multimodal single-cell data.
\newblock {\em Cell}, 184(13):3573--3587, 2021.

\bibitem{Performers}
Krzysztof~Marcin Choromanski, Valerii Likhosherstov, David Dohan, Xingyou Song, Andreea Gane, Tamas Sarlos, Peter Hawkins, Jared~Quincy Davis, Afroz Mohiuddin, Lukasz Kaiser, et~al.
\newblock Rethinking attention with performers.
\newblock In {\em International Conference on Learning Representations}, 2020.

\bibitem{G2V}
Jingcheng Du, Peilin Jia, Yulin Dai, Cui Tao, Zhongming Zhao, and Degui Zhi.
\newblock Gene2vec: distributed representation of genes based on co-expression.
\newblock {\em BMC Genomics}, 20:7--15, 2019.

\bibitem{scp-pipeline}
Christophe Vanderaa and Laurent Gatto.
\newblock Replication of single-cell proteomics data reveals important computational challenges.
\newblock {\em Expert Review of Proteomics}, 18(10):835--843, 2021.

\bibitem{MAGIC}
David Van~Dijk, Roshan Sharma, Juozas Nainys, Kristina Yim, Pooja Kathail, Ambrose~J Carr, Cassandra Burdziak, Kevin~R Moon, Christine~L Chaffer, Diwakar Pattabiraman, et~al.
\newblock Recovering gene interactions from single-cell data using data diffusion.
\newblock {\em Cell}, 174(3):716--729, 2018.

\bibitem{Autoclass}
Hui Li, Cory~R Brouwer, and Weijun Luo.
\newblock A universal deep neural network for in-depth cleaning of single-cell rna-seq data.
\newblock {\em Nature Communications}, 13(1):1901, 2022.

\bibitem{Harmony}
Ilya Korsunsky, Nghia Millard, Jean Fan, Kamil Slowikowski, Fan Zhang, Kevin Wei, Yuriy Baglaenko, Michael Brenner, Po-ru Loh, and Soumya Raychaudhuri.
\newblock Fast, sensitive and accurate integration of single-cell data with harmony.
\newblock {\em Nature methods}, 16(12):1289--1296, 2019.

\bibitem{Scanorama}
Brian Hie, Bryan Bryson, and Bonnie Berger.
\newblock Efficient integration of heterogeneous single-cell transcriptomes using scanorama.
\newblock {\em Nature biotechnology}, 37(6):685--691, 2019.

\bibitem{Elastic}
Hui Zou and Trevor Hastie.
\newblock Regularization and variable selection via the elastic net.
\newblock {\em Journal of the Royal Statistical Society Series B: Statistical Methodology}, 67(2):301--320, 2005.

\bibitem{horvath2018epigenetic}
Steve Horvath, Junko Oshima, George~M Martin, Ake~T Lu, Austin Quach, Howard Cohen, Sarah Felton, Mieko Matsuyama, Donna Lowe, Sylwia Kabacik, et~al.
\newblock Epigenetic clock for skin and blood cells applied to hutchinson gilford progeria syndrome and ex vivo studies.
\newblock {\em Aging (Albany NY)}, 10(7):1758, 2018.

\bibitem{wu2020comprehensive}
Zonghan Wu, Shirui Pan, Fengwen Chen, Guodong Long, Chengqi Zhang, and Philip~S Yu.
\newblock A comprehensive survey on graph neural networks.
\newblock {\em IEEE transactions on neural networks and learning systems}, 32(1):4--24, 2020.

\bibitem{fang2023low}
Zhuangyan Fang, Shengyu Zhu, Jiji Zhang, Yue Liu, Zhitang Chen, and Yangbo He.
\newblock On low-rank directed acyclic graphs and causal structure learning.
\newblock {\em IEEE Transactions on Neural Networks and Learning Systems}, 35(4):4924--4937, 2023.

\bibitem{LoRAM}
Shuyu Dong and Mich{\`e}le Sebag.
\newblock From graphs to dags: a low-complexity model and a scalable algorithm.
\newblock In {\em Joint European Conference on Machine Learning and Knowledge Discovery in Databases}, pages 107--122. Springer, 2022.

\bibitem{cherti2023reproducible}
Mehdi Cherti, Romain Beaumont, Ross Wightman, Mitchell Wortsman, Gabriel Ilharco, Cade Gordon, Christoph Schuhmann, Ludwig Schmidt, and Jenia Jitsev.
\newblock Reproducible scaling laws for contrastive language-image learning.
\newblock In {\em Proceedings of the IEEE/CVF Conference on Computer Vision and Pattern Recognition}, pages 2818--2829, 2023.

\end{thebibliography}

%%%%%%%%%%%%%%%%%%%%%%%%%%%%%%%%%%%%%%%%%%%%%%%%%%%%%%%%%%%%%%%%%%%%%%%%%%%%%%%
%%%%%%%%%%%%%%%%%%%%%%%%%%%%%%%%%%%%%%%%%%%%%%%%%%%%%%%%%%%%%%%%%%%%%%%%%%%%%%%
% Appendix
%%%%%%%%%%%%%%%%%%%%%%%%%%%%%%%%%%%%%%%%%%%%%%%%%%%%%%%%%%%%%%%%%%%%%%%%%%%%%%%
%%%%%%%%%%%%%%%%%%%%%%%%%%%%%%%%%%%%%%%%%%%%%%%%%%%%%%%%%%%%%%%%%%%%%%%%%%%%%%%

\newpage
\appendix

\setcounter{page}{1}
\setcounter{equation}{0}
\renewcommand{\theequation}{\thesection.\arabic{equation}}
\setcounter{figure}{0}
\renewcommand{\thefigure}{S\arabic{figure}}
\setcounter{table}{0}
\renewcommand{\thetable}{S\arabic{table}}

\begin{center}
\begin{LARGE}
    \textbf{Supplementary Material}
\end{LARGE}
\end{center}
\vspace{2em}

%%%%%%%%%%%%%%%%%%%%%%%%%%%%%%%%%%%%%%%%%%%%%%%%%%%%%%%%%%%%%%%%%%%%%%%%%%%%%%%

In this supplementary material, we first provide theoretical analysis in \cref{supp_proof} and justify the use of gregariousness ($\bm{\pi}_v$) in regularization in \cref{supp_omega}. \cref{supp_related} reviews related work, while \cref{supp_dataset} and \cref{supp_evaluation} detail the datasets and evaluation protocols. \cref{supp_imple} outlines implementation details, including preprocessing, architecture, and training. Finally, \cref{supp_experiments} presents additional results, covering CAPE's empirical properties, multi-omics benchmarks, sensitivity, and complexity analyses. The code for our model and experiments is available at \url{https://github.com/Catchxu/CAPE}.

%%%%%%%%%%%%%%%%%%%%%%%%%%%%%%%%%%%%%%%%%%%%%%%%%%%%%%%%%%%%%%%%%%%%%%%%%%%%%%%

\section{Theoretical Analysis}\label{supp_proof}
\subsection{Equivalence between Constraint Condition and Acyclicity}\label{supp_DAG}
\begin{proposition}[\cite{notears}]\label{prop_DAG}
A matrix $\bm{A}\in\mathbb{R}^{M\times M}$ can induce a DAG, if and only if :
\begin{equation}
    h(\bm{A}) \coloneqq \mathrm{tr}(e^{\bm{A}\odot \bm{A}}) - M = 0,
\end{equation}
where $\odot$ is the element-wise product. Moreover, the gradient of  $h(\bm{A})$ follows a simple form of  $\nabla h(\bm{A}) = (e^{\bm{A}\odot \bm{A}})^\top\odot 2\bm{A}$.
\end{proposition}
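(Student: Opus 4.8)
The plan is to reduce acyclicity to a statement about closed walks and then read it off from the power series of the matrix exponential. First I would set $\bm{B}\coloneqq\bm{A}\odot\bm{A}$, so that $B_{ij}=A_{ij}^2\geq 0$ with $B_{ij}>0$ exactly when $A_{ij}\neq 0$. Thus $\bm{B}$ is a nonnegative matrix encoding the same directed graph as $\bm{A}$, and the weighted number of closed walks of length $k$ returning to node $i$ is $(\bm{B}^k)_{ii}$. The directed graph is a DAG if and only if it has no directed cycle, equivalently no closed walk of any positive length, equivalently $(\bm{B}^k)_{ii}=0$ for every $i$ and every $k\geq 1$.

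Next I would expand the matrix exponential as $e^{\bm{B}}=\sum_{k=0}^{\infty}\bm{B}^k/k!$ and take the trace term by term, which is valid since the series converges absolutely. The $k=0$ term contributes $\mathrm{tr}(\bm{I})=M$, so $h(\bm{A})=\mathrm{tr}(e^{\bm{B}})-M=\sum_{k=1}^{\infty}\mathrm{tr}(\bm{B}^k)/k!$. Because $\bm{B}$ is nonnegative, every $\mathrm{tr}(\bm{B}^k)=\sum_i(\bm{B}^k)_{ii}\geq 0$, so $h(\bm{A})$ is a sum of nonnegative terms. Hence $h(\bm{A})=0$ if and only if $\mathrm{tr}(\bm{B}^k)=0$ for all $k\geq 1$, which by the previous paragraph is exactly acyclicity. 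Since any cycle has length at most $M$, one could truncate the series at $k=M$, but the closed form $\mathrm{tr}(e^{\bm{B}})$ avoids having to track this bound.

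For the gradient I would use the standard directional-derivative identity for analytic matrix functions, $\frac{d}{dt}\mathrm{tr}(e^{\bm{B}+t\bm{H}})\big|_{t=0}=\mathrm{tr}(e^{\bm{B}}\bm{H})$. Writing this as a Frobenius inner product gives $\nabla_{\bm{B}}\,\mathrm{tr}(e^{\bm{B}})=(e^{\bm{B}})^\top$, i.e. $\partial\,\mathrm{tr}(e^{\bm{B}})/\partial B_{ij}=(e^{\bm{B}})_{ji}$. Since $B_{ij}=A_{ij}^2$ depends only on $A_{ij}$ with $\partial B_{ij}/\partial A_{ij}=2A_{ij}$, the chain rule yields $\partial h/\partial A_{ij}=(e^{\bm{B}})_{ji}\cdot 2A_{ij}=\big((e^{\bm{B}})^\top\big)_{ij}\cdot 2A_{ij}$, which in matrix form is $\nabla h(\bm{A})=(e^{\bm{A}\odot\bm{A}})^\top\odot 2\bm{A}$.

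The main obstacle is conceptual rather than computational: the whole argument hinges on the nonnegativity of $\bm{B}$, which is precisely why the Hadamard square $\bm{A}\odot\bm{A}$ (rather than $\bm{A}$ itself) appears in $h$. Without it, cancellations between positive and negative walk weights could make $\mathrm{tr}(\bm{B}^k)$ vanish even in the presence of cycles, breaking the ``vanishing nonnegative sum implies each term vanishes'' step. The only other point requiring care is justifying the termwise differentiation and the matrix-exponential trace derivative, both of which follow from the absolute convergence and analyticity of $\exp$.
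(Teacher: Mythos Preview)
Your proposal is correct and follows essentially the same route as the paper's proof: set $\bm{B}=\bm{A}\odot\bm{A}$, interpret $\mathrm{tr}(\bm{B}^k)$ as (weighted) closed-walk counts, expand $\mathrm{tr}(e^{\bm{B}})$ as a power series whose nonnegative tail vanishes iff the graph is acyclic, and then derive the gradient via $\nabla_{\bm{B}}\mathrm{tr}(e^{\bm{B}})=(e^{\bm{B}})^\top$ together with the chain rule through $B_{ij}=A_{ij}^2$. If anything, you are more explicit than the paper about why nonnegativity is the key ingredient and about the justification of the trace-derivative identity.
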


\begin{proof}
Given a non-negative matrix $\bm{B}\in\mathbb{R}_{\geq 0}^{M\times M}$ as the adjacency matrix of a DAG $G$, $\bm{B}^k$ reflects its $k$-hop connectivity. Specifically, $\bm{B}^k_{uv} \neq 0$ indicates that a path of length $k$ exists from $u$ to $v$. Consequently, $\mathrm{tr}(\bm{B}^k)$ equals to the number of length-$k$ self-cycles in $G$. That is, $G$ is acyclic if and only if: 
\begin{equation}
    \mathrm{tr}(\bm{B}^k) = 0, \quad \forall k \in \mathbb{Z}_+.
\end{equation}
This infinite family of constraints can be reformulated using matrix exponential:
\begin{equation}
\begin{aligned}
\mathrm{tr}(e^{\bm{B}}) &= \mathrm{tr}(\bm{I}) + \sum_{k=1}^{+\infty}\frac{\mathrm{tr}(\bm{B}^k)}{k!}\\
&= M + \sum_{k=1}^{+\infty}\sum_{u=1}^{M}\frac{(\bm{B}^k)_{uu}}{k!}\\
&= M\Longleftrightarrow \mathrm{tr}(e^{\bm{B}}) - M = 0 .
\end{aligned}
\end{equation}  
Replacing $\bm{B}$ with the element-wise product matrix $\bm{A} \odot \bm{A}$, which ensures the non-negativity, we reach the constraint $\mathrm{tr}(e^{\bm{A}\odot \bm{A}}) - M = 0$. In particular, this constraint exhibits a simple form of gradient for easy optimization:
\begin{equation}
    \nabla h(\bm{A}) = \frac{\partial \mathrm{tr}(e^{\bm{S}})}{\partial \bm{A}} = \frac{\partial \mathrm{tr}(e^{\bm{S}})}{\partial \bm{S}}\odot\frac{\partial\bm{S}}{\partial\bm{A}} = (e^{\bm{A}\odot \bm{A}})^\top\odot 2\bm{A},
\end{equation}
where $\bm{S}=\bm{A}\odot \bm{A}$. This completes the proof.
\end{proof}

%%%%%%%%%%%%%%%%%%%%%%%%%%%%%%%%%%%%%%%%%%%%%%%%%%%%%%%%%%%%%%%%%%%%%%%%%%%%%%%

\subsection{Optimization and Diffeomorphism of Hyperbolic Models}\label{supp_geometric}
CAPE involves two hyperbolic models, including the hyperboloid model and Poincar{\'e}ball model, to embed the identified DAG in a hyperbolic space. On one hand, the hyperboloid model is employed for an efficient RSGD optimization of the node embeddings, due to its simple closed-form computation of geodesics on the hyperboloid. On the other hand, we utilize a diffeomorphism to map hyperboloid embeddings to Poincar{\'e} ball embeddings, which are more natural for both conversion into their rotary form and visualization. In the following subsections, we will discuss the geometric properties, the optimization, and the diffeomorphism of the two models.  

% To implement hyperbolic geometry, we employ two equivalent models. Embedding optimization is performed in the hyperboloid model for maintaining hierarchies of DAG. For integration with rotary positional encoding, we map the embeddings to the Poincaré ball model, which is more natural for rotary operations. This section will introduce the geometric foundations of both manifolds and their equivalence.

\subsubsection{Hyperboloid Model}
Recall the definition of the hyperboloid model:
\begin{definition}[\cite{geometry}]\label{def_hyper}
$d$-dimensional hyperboloid model is a Riemannian manifold $\mathcal{L}^d\coloneqq(\mathcal{H}^d,\bm{g}_l)$, where $\bm{g}_l\coloneqq\mathrm{diag}(-1,1,\cdots,1)\in\mathbb{R}^{(d+1)\times (d+1)}$ denotes the metric tensor and where 
  \begin{equation}
    \mathcal{H}^{d} \coloneqq \{\bm{p}:=(p^{(0)}, \widetilde{\bm{p}})\in\mathbb{R}^{d+1}: \left<\bm{p},\bm{p}\right>_l = -1, p^{(0)} > 0\},
\end{equation}
\begin{equation}
    \left<\bm{p}_{v_m},\bm{p}_{v_n}\right>_l \coloneqq \bm{p}_{v_m}^\top\,\bm{g}_l\,\bm{p}_{v_n} = - p_{v_m}^{(0)} p_{v_n}^{(0)} + \widetilde{\bm{p}}_{v_m}^\top\widetilde{\bm{p}}_{v_n},
\end{equation}
denotes the upper sheet of a two-sheeted hyperboloid with an origin $\bm{p_o}=(1,\bm{0}_d)^\top$  in a $(d+1)$-dimensional Minkowski space  \cite{lorentz}.    
\end{definition}
The distance of $\bm{p}_{v_m},\bm{p}_{v_n}\in\mathcal{H}^d$ on $\mathcal{L}^d$ is given as \cite{hyperbolic}:
\begin{equation}
    d_l(\bm{p}_{v_m},\bm{p}_{v_n}) = \mathrm{arcosh}(-\left<\bm{p}_{v_m},\bm{p}_{v_n}\right>_l).
\end{equation}

Geometrically, the tangent space $\mathcal{T}_{\bm{p}} \mathcal{H}^d$ at a point $\bm{p} \in \mathcal{H}^d$ is defined as the set of vectors orthogonal to $\bm{p}$ \cite{hyperbolic,geometry}:
\begin{equation}\label{eq_tangent}
\mathcal{T}_{\bm{p}} \mathcal{H}^d = \{\bm{v} \in \mathbb{R}^{d+1} : \langle \bm{v}, \bm{p} \rangle_l = 0 \}.
\end{equation}
The geodesics of $\mathcal{H}^d$ can then be computed based on the following proposition.
\begin{proposition}\label{prop_unit_speed_geodesic}
Given $\bm{p}\in\mathcal{H}^d$ and an unit tangent vector $\bm{v}\in\mathcal{T}_{\bm{p}}\mathcal{H}^d$ where $\langle\bm{v},\bm{v}\rangle_l=1$, the unique unit-speed geodesic $\phi_{\bm{p},\bm{v}}:[0,1]\rightarrow\mathcal{H}^d$ can be parameterized as:
\begin{equation}
        \phi_{\bm{p},\bm{v}}(t) = \mathrm{cosh}(t)\bm{p} + \mathrm{sinh}(t)\bm{v} \quad \mathrm{s.t.} \quad \phi_{\bm{p},\bm{v}}(0)=\bm{p},\dot{\phi}_{\bm{p},\bm{v}}(0)=\bm{v},
\end{equation}
where $t\in[0,1]$. $\dot{\phi}_{\bm{p},\bm{v}}(0)$ denotes the derivative of $\phi_{\bm{p},\bm{v}}(t)$ to $t$, or the velocity of the geodesic $\phi_{\bm{p},\bm{v}}(t)$ at time $t$. 
\end{proposition}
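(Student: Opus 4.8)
The plan is to verify directly that the proposed curve $\phi_{\bm{p},\bm{v}}(t)=\cosh(t)\bm{p}+\sinh(t)\bm{v}$ is the desired geodesic, and then to invoke uniqueness of geodesics given an initial point and velocity. First I would confirm that the curve stays on the manifold: expanding $\langle\phi_{\bm{p},\bm{v}}(t),\phi_{\bm{p},\bm{v}}(t)\rangle_l$ and substituting the three defining relations $\langle\bm{p},\bm{p}\rangle_l=-1$ (since $\bm{p}\in\mathcal{H}^d$), $\langle\bm{p},\bm{v}\rangle_l=0$ (since $\bm{v}\in\mathcal{T}_{\bm{p}}\mathcal{H}^d$, per \cref{eq_tangent}), and $\langle\bm{v},\bm{v}\rangle_l=1$ (the unit-speed hypothesis) annihilates the cross term and yields $-\cosh^2(t)+\sinh^2(t)=-1$, so $\phi_{\bm{p},\bm{v}}(t)\in\mathcal{H}^d$ for all $t$ (the positivity $p^{(0)}(t)>0$ holds at $t=0$ and is preserved by continuity, as the two sheets of the hyperboloid are disconnected). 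Evaluating at $t=0$ gives $\phi_{\bm{p},\bm{v}}(0)=\bm{p}$, and differentiating gives $\dot\phi_{\bm{p},\bm{v}}(t)=\sinh(t)\bm{p}+\cosh(t)\bm{v}$, hence $\dot\phi_{\bm{p},\bm{v}}(0)=\bm{v}$, matching the stated initial conditions.

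Next I would establish that the curve is a geodesic and has unit speed. For the speed, the same substitution gives $\langle\dot\phi_{\bm{p},\bm{v}}(t),\dot\phi_{\bm{p},\bm{v}}(t)\rangle_l=-\sinh^2(t)+\cosh^2(t)=1$ for every $t$, so the parameterization is unit-speed throughout. For the geodesic property I would use the characterization that, because Minkowski space is flat, its Levi-Civita connection is ordinary differentiation, and by the Gauss formula the Levi-Civita connection of the submanifold $\mathcal{L}^d$ is the tangential projection of the ambient one; consequently a curve on $\mathcal{H}^d$ is a geodesic precisely when its ambient acceleration is $\bm{g}_l$-orthogonal to the tangent space at each point. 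Computing $\ddot\phi_{\bm{p},\bm{v}}(t)=\cosh(t)\bm{p}+\sinh(t)\bm{v}=\phi_{\bm{p},\bm{v}}(t)$ shows the acceleration is a scalar multiple of the position vector itself, which by \cref{eq_tangent} spans exactly the normal direction to $\mathcal{T}_{\phi_{\bm{p},\bm{v}}(t)}\mathcal{H}^d$; hence the tangential component of the acceleration vanishes and the curve is a geodesic.

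Finally, uniqueness follows from the standard existence-and-uniqueness theorem for the geodesic ODE: at most one geodesic passes through a given point with a given initial velocity. Since $\phi_{\bm{p},\bm{v}}$ is a geodesic realizing the prescribed data $(\bm{p},\bm{v})$, it is the unique one. I expect the main obstacle to be the geodesic verification itself, specifically justifying the normal-acceleration criterion in the pseudo-Riemannian setting: one must check that the induced metric on $\mathcal{H}^d$ is positive-definite (so that $\mathcal{L}^d$ is a genuine Riemannian manifold and the Gauss formula applies despite the indefinite ambient signature) and that the normal space at each point is one-dimensional, spanned by the position vector. Once this criterion is in place, the remaining computations are short algebraic identities driven entirely by the three inner-product relations and the hyperbolic identity $\cosh^2-\sinh^2=1$; an alternative but more tedious route would instead verify the geodesic equation via the Christoffel symbols of $\bm{g}_l$, which I would avoid in favor of the cleaner normal-projection argument.
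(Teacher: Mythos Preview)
Your proposal is correct and complete, but it takes a genuinely different route from the paper. The paper proceeds \emph{forward}: it first asserts the geodesic characterization $\ddot\phi=\langle\dot\phi,\dot\phi\rangle_l\,\phi$, specializes to the unit-speed case to obtain the second-order linear ODE $\ddot\phi(t)=\phi(t)$, writes down the general solution $\bm{\alpha}e^t+\bm{\beta}e^{-t}$, and then fixes the constants from the initial data to arrive at $\cosh(t)\bm{p}+\sinh(t)\bm{v}$. By contrast, you proceed \emph{backward}: you take the closed form as given, verify membership in $\mathcal{H}^d$, the initial conditions, unit speed, and then the geodesic property via the normal-acceleration criterion (equivalently, exactly the relation $\ddot\phi=\phi$ the paper uses). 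The paper's approach makes clearer where the formula comes from and gets uniqueness for free from the ODE; your approach is slightly more self-contained in checking that the curve actually lies on the sheet and has constant speed, points the paper leaves implicit. Either way the crux is the same one-line computation $\ddot\phi=\phi$ together with the three inner-product relations.
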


\begin{proof}
According to the definition of geodesic \cite{hyperbolic,geometry}, $\phi:[0,1]\rightarrow\mathcal{H}^d$ is a geodesic if and only if it satisfies the equivalent conditions:
\begin{equation}\label{eq_geodesic}
    \nabla \dot{\phi} \equiv 0 \quad \Longleftrightarrow \quad \ddot{\phi} = \langle\dot{\phi},\dot{\phi}\rangle_l\;\phi.
\end{equation}
This means that the acceleration vector $\nabla \dot{\phi}$ is zero. In other words, along the geodesic $\phi$, the direction of the velocity vector does not ``turn''. When $\langle\dot{\phi},\dot{\phi}\rangle_l=1$, \cref{eq_geodesic} can be formulated as an ordinary differential equation (ODE):
\begin{equation}
    \ddot{\phi}(t) = \phi(t),
\end{equation}
with a general solution:
\begin{equation}
    \phi(t) = \bm{\alpha}e^t + \bm{\beta}e^{-t},
\end{equation}
where $\bm{\alpha},\bm{\beta}\in\mathbb{R}^{d+1}$ are constant vectors.  $\bm{\alpha},\bm{\beta}$ can be determined by the initial conditions as:
\begin{equation}
\begin{split}
       & \phi_{\bm{p},\bm{v}}(0)=\bm{p} \quad \Longrightarrow \quad \bm{\alpha} + \bm{\beta} = \bm{p},\\
          & \dot{\phi}_{\bm{p},\bm{v}}(0)=\bm{v} \quad \Longrightarrow \quad \bm{\alpha}e^0 - \bm{\beta}e^{-0} = \bm{\alpha} - \bm{\beta} = \bm{v}.
\end{split}
\end{equation}
Then, we have:
\begin{equation}
    \bm{\alpha} = \frac{1}{2}(\bm{p}+\bm{v}),\quad \bm{\beta} = \frac{1}{2}(\bm{p}-\bm{v}),
\end{equation}
\begin{equation}
    \phi_{\bm{p},\bm{v}}(t) = \mathrm{cosh}(t)\bm{p} + \mathrm{sinh}(t)\bm{v}.
\end{equation}
This completes the proof.
\end{proof}

 \cref{eq_geodesic} can be extended to tangent vectors of any length using the exponential map, defined as:
\begin{proposition}[\textbf{Exponential map of hyperboloid model}]\label{prop_exp}
Given $\bm{p} \in \mathcal{H}^d$ and $\bm{v} \in \mathcal{T}_{\bm{p}}\mathcal{H}^d$ with a length $\lVert\bm{v}\rVert _l = \sqrt{\left<\bm{v},\bm{v}\right>_l}$,  there exists a unique geodesic $\tilde{\phi}:[0,1]\rightarrow \mathcal{H}^d$ with $\tilde{\phi}(0)=p,\tilde{\phi}'(0)=v$. The exponential map $\mathrm{exp}_{\bm{p}}: \mathcal{T}_{\bm{p}}\mathcal{H}^d \rightarrow \mathcal{H}^d$ is defined as $\mathrm{exp}_{\bm{p}}(\bm{v})\coloneq \tilde{\phi}(1)$, which is the end point of the geodesic on the manifold. Based on \cref{prop_unit_speed_geodesic}, $\mathrm{exp}_{\bm{p}}$ can be represented as:
\begin{equation}
    \mathrm{exp}_{\bm{p}}(\bm{v}) = \mathrm{cosh}(\lVert\bm{v}\rVert _l)\bm{p} + \mathrm{sinh}(\lVert\bm{v}\rVert _l)\frac{\bm{v}}{\lVert\bm{v}\rVert _l}.
\end{equation}

\end{proposition}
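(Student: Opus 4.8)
The plan is to derive the exponential map as a direct corollary of \cref{prop_unit_speed_geodesic} by reducing the general case (a tangent vector of arbitrary length) to the unit-speed case already handled. First I would observe that for a nonzero tangent vector $\bm{v}\in\mathcal{T}_{\bm{p}}\mathcal{H}^d$, the quantity $\lVert\bm{v}\rVert_l=\sqrt{\langle\bm{v},\bm{v}\rangle_l}$ is well-defined and strictly positive, since the Minkowski inner product restricted to the tangent space at $\bm{p}$ is positive-definite (this is the Riemannian structure of $\mathcal{L}^d$). I would then normalize by setting $\bm{u}\coloneqq\bm{v}/\lVert\bm{v}\rVert_l$, so that $\bm{u}$ is a unit tangent vector with $\langle\bm{u},\bm{u}\rangle_l=1$ and $\langle\bm{u},\bm{p}\rangle_l=0$, placing us exactly in the setting of \cref{prop_unit_speed_geodesic}.

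Next I would invoke the reparametrization principle for geodesics: the unit-speed geodesic $\phi_{\bm{p},\bm{u}}(s)$ traversed for arc length $s$ coincides with the constant-speed geodesic $\tilde{\phi}(t)$ having initial velocity $\bm{v}$, under the correspondence $s=\lVert\bm{v}\rVert_l\, t$. Concretely, define $\tilde{\phi}(t)\coloneqq\phi_{\bm{p},\bm{u}}(\lVert\bm{v}\rVert_l\, t)$ and verify the two initial conditions required by the proposition: $\tilde{\phi}(0)=\phi_{\bm{p},\bm{u}}(0)=\bm{p}$, and by the chain rule $\dot{\tilde{\phi}}(0)=\lVert\bm{v}\rVert_l\,\dot{\phi}_{\bm{p},\bm{u}}(0)=\lVert\bm{v}\rVert_l\,\bm{u}=\bm{v}$. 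By uniqueness of geodesics with prescribed initial point and velocity, $\tilde{\phi}$ is \emph{the} geodesic referenced in the statement.

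Finally I would evaluate at $t=1$ to obtain the exponential map. Substituting into the closed form from \cref{prop_unit_speed_geodesic},
\begin{equation}
\mathrm{exp}_{\bm{p}}(\bm{v})=\tilde{\phi}(1)=\phi_{\bm{p},\bm{u}}(\lVert\bm{v}\rVert_l)=\mathrm{cosh}(\lVert\bm{v}\rVert_l)\,\bm{p}+\mathrm{sinh}(\lVert\bm{v}\rVert_l)\,\bm{u},
\end{equation}
and resubstituting $\bm{u}=\bm{v}/\lVert\bm{v}\rVert_l$ yields precisely the claimed formula. I expect no serious obstacle here, as the result is essentially a rescaling argument; the only point requiring care is the justification that the reparametrized curve is still a geodesic (equivalently, that $\langle\dot{\tilde\phi},\dot{\tilde\phi}\rangle_l$ is constant so that \cref{eq_geodesic} is satisfied with the appropriate scalar), and that positive-definiteness of $\langle\cdot,\cdot\rangle_l$ on $\mathcal{T}_{\bm{p}}\mathcal{H}^d$ guarantees $\lVert\bm{v}\rVert_l$ is real, so the normalization is valid. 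I would also note the boundary case $\bm{v}=\bm{0}$, where $\mathrm{exp}_{\bm{p}}(\bm{0})=\bm{p}$ follows by continuity (taking the limit $\lVert\bm{v}\rVert_l\to 0$ with $\mathrm{cosh}\to 1$, $\mathrm{sinh}(\lVert\bm{v}\rVert_l)/\lVert\bm{v}\rVert_l\to 1$).
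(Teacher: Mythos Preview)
Your proposal is correct and follows exactly the route the paper intends: the paper does not give a separate proof but simply asserts the formula ``based on \cref{prop_unit_speed_geodesic}'' together with the one-line remark that the exponential map is reached by traveling along the geodesic in the direction of $\bm{v}$ for a hyperbolic distance of $\lVert\bm{v}\rVert_l$. Your normalization $\bm{u}=\bm{v}/\lVert\bm{v}\rVert_l$ and reparametrization $\tilde\phi(t)=\phi_{\bm{p},\bm{u}}(\lVert\bm{v}\rVert_l\,t)$ are precisely the details needed to make that assertion rigorous, and your handling of the boundary case $\bm{v}=\bm{0}$ and the positive-definiteness of $\langle\cdot,\cdot\rangle_l$ on the tangent space are appropriate additions that the paper omits.
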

In other words, the exponential map yields the unique point on $\mathcal{H}^d$ reached by traveling along the geodesic originating at $\bm{p}$ along the direction of $\bm{v}$ for a hyperbolic distance of $\lVert\bm{v}\rVert _l$. 

Moreover, given any Riemannian gradient direction $\bm{u}\in\mathbb{R}^{d+1}$ at $\bm{p}$, it can be projected to the tangent space using the following proposition.
% Subsequently, we can further get a projection $\mathrm{proj}_{\bm{p}}:\mathbb{R}^{d+1}\rightarrow\mathcal{T}_{\bm{p}}\mathcal{H}^d$ in the following proposition.
\begin{proposition}[\textbf{Projecting Riemannian gradient to tangent space}]\label{prop_proj}
Given $\bm{p}\in\mathcal{H}^d$, a Riemannian gradient $\bm{u}\in\mathbb{R}^{d+1}$ at $\bm{p}$ can be projected to the tangent space $\mathcal{T}_{\bm{p}}\mathcal{H}^d$ via:
\begin{equation}
    \mathrm{proj}_{\bm{p}}(\bm{u}) = \bm{u} + \left<\bm{p},\bm{u}\right>_l \bm{p}.
\end{equation}
\end{proposition}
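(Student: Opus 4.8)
The plan is to recognize $\mathrm{proj}_{\bm{p}}$ as the orthogonal projection onto the tangent space $\mathcal{T}_{\bm{p}}\mathcal{H}^d$ taken with respect to the Lorentzian inner product $\left<\cdot,\cdot\right>_l$. By \cref{eq_tangent}, this tangent space is exactly the Lorentzian orthogonal complement of $\bm{p}$, namely $\{\bm{v}:\left<\bm{v},\bm{p}\right>_l=0\}$. The essential fact I would exploit is that on the hyperboloid $\left<\bm{p},\bm{p}\right>_l=-1\neq 0$, so $\mathrm{span}(\bm{p})$ is a non-degenerate line and $\mathbb{R}^{d+1}$ splits as the orthogonal direct sum $\mathrm{span}(\bm{p})\oplus\mathcal{T}_{\bm{p}}\mathcal{H}^d$. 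The standard projection onto the orthogonal complement of a non-null vector is $\bm{u}-\frac{\left<\bm{u},\bm{p}\right>_l}{\left<\bm{p},\bm{p}\right>_l}\bm{p}$, which, after substituting $\left<\bm{p},\bm{p}\right>_l=-1$, reduces precisely to the claimed formula $\bm{u}+\left<\bm{p},\bm{u}\right>_l\bm{p}$.

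To confirm correctness directly, I would verify two properties. First, that the output lies in the tangent space: using bilinearity and symmetry of $\left<\cdot,\cdot\right>_l$ together with $\left<\bm{p},\bm{p}\right>_l=-1$, one computes $\left<\mathrm{proj}_{\bm{p}}(\bm{u}),\bm{p}\right>_l=\left<\bm{u},\bm{p}\right>_l+\left<\bm{p},\bm{u}\right>_l\left<\bm{p},\bm{p}\right>_l=\left<\bm{u},\bm{p}\right>_l-\left<\bm{p},\bm{u}\right>_l=0$, placing the result in $\mathcal{T}_{\bm{p}}\mathcal{H}^d$ by \cref{eq_tangent}. Second, that the residual $\bm{u}-\mathrm{proj}_{\bm{p}}(\bm{u})=-\left<\bm{p},\bm{u}\right>_l\bm{p}$ is a scalar multiple of $\bm{p}$, and hence lies entirely in the normal direction; this shows the decomposition is orthogonal and identifies $\mathrm{proj}_{\bm{p}}(\bm{u})$ as the tangential component of $\bm{u}$ at $\bm{p}$.

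I do not expect a genuine obstacle here, as the result collapses to a one-line computation once the right viewpoint is adopted. The only subtlety worth flagging is the indefinite (Lorentzian) signature of $\bm{g}_l$: unlike the Euclidean setting, $\left<\bm{p},\bm{p}\right>_l=-1$ rather than $+1$, so the sign in the projection formula flips, turning the usual $-\left<\bm{u},\bm{p}\right>\bm{p}$ into $+\left<\bm{p},\bm{u}\right>_l\bm{p}$. Carefully tracking this sign and invoking the non-degeneracy of $\mathrm{span}(\bm{p})$ to guarantee a well-defined orthogonal splitting are the only points that require any care.
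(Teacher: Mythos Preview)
Your proposal is correct and follows essentially the same approach as the paper: both derive the formula from the generalized Gram--Schmidt projection $\bm{u}-\frac{\left<\bm{u},\bm{p}\right>_l}{\left<\bm{p},\bm{p}\right>_l}\bm{p}$ and then substitute $\left<\bm{p},\bm{p}\right>_l=-1$ to flip the sign. Your additional direct verification that the output lies in $\mathcal{T}_{\bm{p}}\mathcal{H}^d$ and that the residual is normal is a nice sanity check that the paper omits.
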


\begin{proof}

Since $\bm{p}$ is orthogonal to the tangent space, we can derived the projected $\bm{u}$ using the generalized Gram-Schmidt Orthogonalization:
\begin{equation}
    \mathrm{proj}_{\bm{p}}(\bm{u}) = \bm{u} -  \frac{\left<\bm{u},\bm{p}\right>_l}{\left<\bm{p},\bm{p}\right>_l} \bm{p}.
\end{equation}
Given $\left<\bm{p},\bm{p}\right>_l=-1$, we have:
\begin{equation}
    \mathrm{proj}_{\bm{p}}(\bm{u}) = \bm{u} +{\left<\bm{p},\bm{u}\right>_l} \bm{p}.
\end{equation}
This completes the proof.
% We assume that the required projection is generally expressed as:
% \begin{equation}
%     \bm{u}^\prime\coloneqq\mathrm{proj}_{\bm{p}}(\bm{u}) = \bm{u} + \lambda \bm{p},
% \end{equation}
% where $\lambda\in\mathbb{R}$ is the scalar coefficient to be found. Then, according to $\mathrm{proj}_{\bm{p}}(\bm{u})\in\mathcal{T}_{\bm{p}}\mathcal{H}^d$, we have:
% \begin{equation}
%     \left<\bm{u}^\prime,\bm{p}\right>_l = 0 \quad \Longleftrightarrow \quad \left<\bm{u},\bm{p}\right>_l + \lambda\left<\bm{p},\bm{p}\right>_l = 0.
% \end{equation}
% We note that $\left<\bm{p},\bm{p}\right>_l=-1$ as $\bm{p}\in\mathcal{H}^d$. Hence, 
% \begin{equation}
%     \lambda = \left<\bm{u},\bm{p}\right>_l = \left<\bm{p},\bm{u}\right>_l.
% \end{equation}
% So far, this proposition has been proved.
\end{proof}

\cref{prop_proj} and \cref{prop_exp} together allow the RSGD steps in \cref{Eq_euc_rieman}, \cref{eq_proj_tangent}, and \cref{eq_exp} in \cref{sec_hyperbolic}. 

% form the theoretical basis of Riemannian Stochastic Gradient Descent (RSGD). Given a Euclidean gradient, we first project it onto the tangent space using the operator in \cref{prop_proj} to obtain the Riemannian gradient—i.e., the direction of steepest descent intrinsic to the manifold. The point is then updated via the exponential map in \cref{prop_exp}, which moves along the geodesic while ensuring that updates remain on the manifold.

\subsubsection{Poincar{\'e} Ball Model}
\begin{definition}[\cite{poincare}]\label{def_poincare}
A $d$-dimensional Poincar{\'e} ball manifold is a Riemannian manifold defined as $\mathcal{P}^d\coloneqq(\mathcal{B}^d, \bm{g}_p)$, where $\mathcal{B}^d\coloneqq\{\bm{e}\in\mathbb{R}^d:\lVert\bm{e}\rVert < 1\}$ represents an open $d$-dimensional unit ball with the metric tensor:
\begin{equation}
    \bm{g}_p=\left(\frac{2}{1-\lVert\bm{e}\rVert^2}\right)^2 \bm{I},
\end{equation}
which is a conformal transformation of the Euclidean metric $\bm{I}$.
\end{definition}
The distance between two points $\bm{e}_{v_m},\bm{e}_{v_n}\in\mathcal{P}^d$ reads \cite{hyperbolic}:
\begin{equation}\label{eq_dp}
    d_p(\bm{e}_{v_m}, \bm{e}_{v_n}) = \mathrm{arcosh}\left(1+2\frac{\lVert\bm{e}_{v_m}-\bm{e}_{v_n}\rVert^2}{(1-\lVert\bm{e}_{v_m}\rVert^2)(1-\lVert\bm{e}_{v_n}\rVert^2)}\right).
\end{equation}
It is straightforward to see that as $\lVert\bm{\bm{e}_{v_m}}\rVert\rightarrow 1$ or $\lVert\bm{\bm{e}_{v_n}}\rVert\rightarrow 1$, $d_p(\bm{e}_{v_m},\bm{e}_{v_n})\rightarrow+\infty$. Hence, the regions closer to the boundary of the Poincar{\'e} ball manifold have larger space capacity, thus allowing it to model data with hierarchical structures and power-law distributions, e.g., a causal graph.   
% so that even if a large number of embeddings are accommodated, the distance between embeddings can be well maintained \cite{poincare}. 

On the other hand, as a point $\bm{e}$ approaches the origin, $\lVert\bm{e}\rVert\rightarrow 0$. And \cref{eq_dp} suggests that, on average, it is closer to other points, thus can represent a casually more general node that connects to many its causal descendants. Meanwhile, $\lVert\bm{e}\rVert$ can be used to represent the causal specificity of the corresponding node. 
% and the inner product or distance has a weaker effect on other points, indicating that it is a more ``central'' and general concept. Therefore, we can define the causal generality of $v_m$ in DAG as $\psi_{v_m}\coloneqq1-\lVert\bm{e}_{v_m}\rVert$.
% In summary, the properties of $\mathcal{P}^d$ can be further summarized as the following proposition.
% \begin{proposition}[Norm as specificity, distance as relevance in Poincar{\'e} ball]
% Let $\bm{e}_{v_m},\bm{e}_{v_n}\in\mathcal{P}^d$ be embeddings of two variables in the Poincar{\'e} ball. Then:
% \begin{itemize}
%     \item $\psi_{v_m}$ increases as the variable it represents becomes more generality.
%     \item $d_p(\bm{e}_{v_m},\bm{e}_{v_n})$ correlates with the causal strength between $v_m$ and $v_n$.
% \end{itemize}
% \end{proposition}
% This property makes the Poincar{\'e} ball ideal for modeling hierarchical structures in which the number of nodes grows exponentially with depth. However, in terms of optimization, since fraction is involved in the distance function \cref{eq_dp}, overflow problems often occur, resulting in numerical instability. Therefore, We first use the hyperboloid model for optimization and then map it to the Poincar{\'e} ball, where this mapping is described in the following subsection.

\subsubsection{Transforming Hyperboloid manifold to Poincar{\'e} ball manifold via diffeomorphism}\label{supp_equivalence}
% $d$-dimensional Poincar{\'e} ball model and hyperboloid model are equivalent. Formally, we can
% The transformation from a manifold $\mathcal{L}^d$ to another manifold $\mathcal{P}^d$ through
We first define diffeomorphism $f_d$ as follows.
\begin{definition}[\textbf{Diffeomorphism}]\label{def_diffeomorphic}
Given two differentiable manifolds $\mathcal{M}_1$ and $\mathcal{M}_2$, a continuously differentiable map $f_d:\mathcal{M}_1\rightarrow\mathcal{M}_2$ is a diffeomorphism if it is a bijection and its inverse $f_d^{-1}:\mathcal{M}_2\rightarrow\mathcal{M}_1$ is differentiable as well. And we claim that $\mathcal{M}_1$ and $\mathcal{M}_2$ are diffeomorphic.
\end{definition}

The following proposition allows the bidirectional transformation between the hyperboloid manifold and the Poincar{\'e} ball manifold via diffeomorphism. 
\begin{proposition}
The manifold $\mathcal{L}^d$ in \cref{def_hyper} and manifold $\mathcal{P}^d$ in \cref{def_poincare} are diffeomorphic, where the diffeomorphism $f_d:\mathcal{H}^d\rightarrow\mathcal{B}^d$ and its inverse $f_d^{-1}:\mathcal{B}^d\rightarrow\mathcal{H}^d$ are given by:
\begin{equation}
    f_d(\bm{p}) = \frac{(p^{(1)},p^{(2)},\cdots,p^{(d)})^\top}{p^{(0)}+1},
\end{equation}
\begin{equation}
    f_d^{-1}(\bm{e}) = \frac{(1+\lVert\bm{e}\rVert^2,2e^{(1)},\cdots,2e^{(d)})^\top}{1-\lVert\bm{e}\rVert^2}.
\end{equation}
where $\bm{p}\in\mathcal{H}^d$ and $\bm{e}\in\mathcal{B}^d$.
\end{proposition}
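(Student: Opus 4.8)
The plan is to establish the four defining conditions of a diffeomorphism in turn: (i) that $f_d$ maps $\mathcal{H}^d$ into the open ball $\mathcal{B}^d$; (ii) that $f_d^{-1}$ maps $\mathcal{B}^d$ onto the hyperboloid sheet $\mathcal{H}^d$; (iii) that the two stated maps are genuine mutual inverses; and (iv) that both are continuously differentiable. The central algebraic tool throughout is the hyperboloid constraint from \cref{def_hyper}, namely $\langle\bm{p},\bm{p}\rangle_l = -1$, which rearranges to $\lVert\widetilde{\bm{p}}\rVert^2 = (p^{(0)})^2 - 1$ and hence $p^{(0)} = \sqrt{1+\lVert\widetilde{\bm{p}}\rVert^2} \geq 1$; this single identity drives nearly every subsequent computation.

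First I would verify (i). Substituting into the definition of $f_d$ and using the factorization $\lVert\widetilde{\bm{p}}\rVert^2 = (p^{(0)}-1)(p^{(0)}+1)$, a short computation gives $\lVert f_d(\bm{p})\rVert^2 = \frac{p^{(0)}-1}{p^{(0)}+1}$, which is strictly less than $1$ for every $p^{(0)} \geq 1$, so the image lands in $\mathcal{B}^d$. For (ii), I would write $\bm{p} = f_d^{-1}(\bm{e})$ with $p^{(0)} = \frac{1+\lVert\bm{e}\rVert^2}{1-\lVert\bm{e}\rVert^2}$ and $\widetilde{\bm{p}} = \frac{2\bm{e}}{1-\lVert\bm{e}\rVert^2}$, then evaluate $\langle\bm{p},\bm{p}\rangle_l$ directly; the numerator collapses through the identity $-(1+\lVert\bm{e}\rVert^2)^2 + 4\lVert\bm{e}\rVert^2 = -(1-\lVert\bm{e}\rVert^2)^2$, yielding exactly $-1$, while positivity of $p^{(0)}$ is immediate since $\lVert\bm{e}\rVert < 1$ makes both the numerator and denominator positive.

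For (iii) I would first record the two auxiliary identities $1-\lVert\bm{e}\rVert^2 = \frac{2}{p^{(0)}+1}$ and $1+\lVert\bm{e}\rVert^2 = \frac{2p^{(0)}}{p^{(0)}+1}$, both of which are direct consequences of the norm computation in (i). Substituting these into the formulas then reduces each composition $f_d \circ f_d^{-1}$ and $f_d^{-1} \circ f_d$ to the identity map by inspection. Finally, (iv) is immediate: every coordinate of $f_d$ is a rational function with denominator $p^{(0)}+1 \geq 2$, and every coordinate of $f_d^{-1}$ has denominator $1-\lVert\bm{e}\rVert^2 > 0$, so neither denominator vanishes on its domain and both maps are $C^\infty$.

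None of these steps presents a genuine conceptual obstacle, since the argument is elementary algebra built on a single quadratic constraint; the real risk is bookkeeping. I expect the most error-prone step to be the constraint verification in (ii), as it is the one place where two competing quadratic terms must cancel precisely to reproduce $-1$, so everything hinges on handling the identity $-(1+\lVert\bm{e}\rVert^2)^2 + 4\lVert\bm{e}\rVert^2 = -(1-\lVert\bm{e}\rVert^2)^2$ cleanly; I would carry that expansion out explicitly to avoid sign slips.
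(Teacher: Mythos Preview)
Your proposal is correct and follows essentially the same approach as the paper's proof: both verify that $f_d$ lands in $\mathcal{B}^d$ via the norm computation $\lVert f_d(\bm{p})\rVert^2 = (p^{(0)}-1)/(p^{(0)}+1)$, check the hyperboloid constraint for $f_d^{-1}$ through the identity $-(1+\lVert\bm{e}\rVert^2)^2 + 4\lVert\bm{e}\rVert^2 = -(1-\lVert\bm{e}\rVert^2)^2$, establish mutual invertibility by direct substitution, and note smoothness from nonvanishing rational denominators. Your treatment is in fact slightly more complete, since you explicitly check both compositions and the positivity of $p^{(0)}$ in step (ii), whereas the paper leaves one direction as ``trivial.''
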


\begin{proof}
The proof consists of three parts:
\begin{enumerate}
    \item $f_d$ is a mapping from $\mathcal{L}^d$ to $\mathcal{P}^d$;
    \item $f_d^{-1}$ is a mapping from $\mathcal{P}^d$ to $\mathcal{L}^d$;
    \item $f_d$ and $f_d^{-1}$ are differentiable and mutually inverse,
\end{enumerate}
which are proved sequentially in the following proofs.
\paragraph{Proof 1. $f_d$ is a mapping from $\mathcal{L}^d$ to $\mathcal{P}^d$.}
For $\bm{p}\in\mathcal{H}^d$, it satisfies:
\begin{equation}
    \langle\bm{p},\bm{p}\rangle_l = -{p^{(0)}}^2 + \sum_{t=1}^d{p^{(t)}}^2 = -1.
\end{equation}
Then:
\begin{equation}
    \lVert f_d(\bm{p})\rVert^2 = \frac{\sum_{t=1}^d{p^{(t)}}^2}{\left(p^{(0)}+1\right)^2} = \frac{{p^{(0)}}^2-1}{\left(p^{(0)}+1\right)^2}=1 - \frac{2p^{(0)}+2}{\left(p^{(0)}+1\right)^2}.
\end{equation}
Since $p^{(0)}>0$, $\lVert f_d(\bm{p})\rVert<1 \Rightarrow f_d(\bm{p})\in\mathcal{B}^d$, which indicates that $f_d$ is a valid mapping function from $\mathcal{L}^d$ to $\mathcal{P}^d$.

\paragraph{Proof 2. $f_d^{-1}$ is a mapping from $\mathcal{P}^d$ to $\mathcal{L}^d$.} Suppose $r = \lVert\bm{e}\rVert$, then:
\begin{equation}
\begin{aligned}
    \langle f_d^{-1}(\bm{e}),f_d^{-1}(\bm{e})\rangle_l & = -\left(\frac{1+r^2}{1-r^2}\right)^2 + \sum_{t=1}^d \left(\frac{{2e^{(t)}}}{1-r^2}\right)^2 = -\left(\frac{1+r^2}{1-r^2}\right)^2 + \frac{4r^2}{(1-r^2)^2} \\
    & = \frac{-(1+r^2)^2+4r^2}{(1-r^2)^2} = \frac{-(1-r^2)^2}{(1-r^2)^2} = -1.
\end{aligned}
\end{equation}
Therefore, $f_d^{-1}(\bm{e})\in\mathcal{H}^d$, which completes this proof.

\paragraph{Proof 3. $f_d$ and $f_d^{-1}$ are differentiable and mutually inverse.} Since $f_d$ and $f_d^{-1}$ are both rational functions and their denominators remain nonzero in their domains (e.g., $p^{(0)} + 1 > 0$, $1 - \lVert\bm{e}\rVert^2 > 0$), they are infinitely differentiable. Moreover, we have:
\begin{equation}
     f_d^{-1}(f_d(\bm{p}))^{(0)}=\frac{1+\lVert f_d(\bm{p})\rVert^2}{1-\lVert f_d(\bm{p})\rVert^2} = \frac{{2p^{(0)}}^2+2p^{(0)}}{2p^{(0)}+2}=p^{(0)},
\end{equation}
\begin{equation}
    f_d^{-1}(f_d(\bm{p}))^{(t)}=\frac{2p^{(t)}/(p^{(0)}+1)}{1-\lVert f_d(\bm{p})\rVert^2} = 2p^{(t)} \frac{1}{p^{(0)}+1}\frac{\left(p^{(0)}+1\right)^2}{2p^{(0)}+2} = p^{(t)},\; \forall t =1,2\cdots,d.
\end{equation}
Thus, $ f_d^{-1}(f_d(\bm{p})) = \bm{p}$, and $f_d^{(-1)}$ is indeed the inverse function of $f_d$. Similarly, it is trivial to prove $ f_d(f_d^{-1}(\bm{e})) = \bm{e}$, indicating that $f_d$ and $f_d^{-1}$ are mutually inverse. This completes the proof.
% \begin{equation}
%     f_d^{-1}(f_d(\bm{p})) = \bm{p}
% \end{equation}
% So, $f_d$ and $f_d^{-1}$ are inverse.

% In summary, $f_d$ and $f_d^{-1}$ are well-defined, differentiable and inverse. This completes the proof.
\end{proof}

%%%%%%%%%%%%%%%%%%%%%%%%%%%%%%%%%%%%%%%%%%%%%%%%%%%%%%%%%%%%%%%%%%%%%%%%%%%%%%%

\subsection{Solving the Query and Key Mapping Functions}\label{supp_solving}
We solve \cref{eq_inject} and \cref{eq_goal} in \cref{sec_position} in complex space. To this end, we begin with some preliminaries of the complex space.
% The particular solution for the query and key mapping functions is derived by considering \cref{eq_goal} in the complex space, following previous research \cite{rope}. We note that:
\begin{proposition}\label{prop_T}
When $D$ is even, real space $\mathbb{R}^D$ and complex space $\mathbb{C}^{D/2}$ are diffeomorphic (\cref{def_diffeomorphic}), where the diffeomorphism $T:\mathbb{R}^D\rightarrow\mathbb{C}^{D/2}$ and its inverse $T^{-1}:\mathbb{C}^{D/2}\rightarrow\mathbb{R}^D$ are given by:
\begin{equation}
    T(\bm{x})=T\begin{bmatrix}
        x^{(1)}\\x^{(2)}\\\vdots\\x^{(D)}
    \end{bmatrix}\coloneqq \begin{bmatrix}
        x^{(1)}+\mathrm{i}x^{(2)}\\x^{(3)}+\mathrm{i}x^{(4)}\\\vdots\\x^{(D-1)}+\mathrm{i}x^{(D)}
    \end{bmatrix},\quad
    T^{-1}(\bm{z})= T^{-1}\begin{bmatrix}
        z^{(1)}\\z^{(2)}\\\vdots\\z^{(D/2)}\\
    \end{bmatrix}\coloneqq \begin{bmatrix}
        \mathrm{Re}(z^{(1)})\\\mathrm{Im}(z^{(1)})\\\vdots\\\mathrm{Re}(z^{(D/2)})\\\mathrm{Im}(z^{(D/2)})
    \end{bmatrix}.
\end{equation}
where $\bm{x}\in\mathbb{R}^D$ and $\bm{z}\in\mathbb{C}^{D/2}$.
\end{proposition}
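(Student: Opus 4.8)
The plan is to verify \cref{prop_T} by checking each of the three claims that constitute a diffeomorphism, following the same template used in the earlier proof that $\mathcal{L}^d$ and $\mathcal{P}^d$ are diffeomorphic. Since both $T$ and $T^{-1}$ are explicitly given coordinate-wise, the proof is almost entirely bookkeeping, and the main observation is that these maps are real-linear, which trivializes the smoothness requirement. I would first establish that $T$ is well-defined as a map into $\mathbb{C}^{D/2}$: given $\bm{x}\in\mathbb{R}^D$ with $D$ even, the pairing of consecutive real coordinates $(x^{(2j-1)},x^{(2j)})$ into the complex number $x^{(2j-1)}+\mathrm{i}x^{(2j)}$ produces exactly $D/2$ complex entries, so the output lies in $\mathbb{C}^{D/2}$. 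Analogously, $T^{-1}$ takes each complex coordinate $z^{(j)}$, extracts its real and imaginary parts via $\mathrm{Re}$ and $\mathrm{Im}$, and stacks them, yielding $2\cdot(D/2)=D$ real entries, hence an element of $\mathbb{R}^D$.

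Next I would show that $T$ and $T^{-1}$ are mutually inverse by direct coordinate computation, mirroring Proof~3 of the preceding proposition. For arbitrary $\bm{x}\in\mathbb{R}^D$, applying $T$ then $T^{-1}$ returns $\mathrm{Re}(x^{(2j-1)}+\mathrm{i}x^{(2j)})=x^{(2j-1)}$ and $\mathrm{Im}(x^{(2j-1)}+\mathrm{i}x^{(2j)})=x^{(2j)}$ in the appropriate slots, so $T^{-1}(T(\bm{x}))=\bm{x}$. The reverse composition is equally immediate: for $\bm{z}\in\mathbb{C}^{D/2}$, writing $z^{(j)}=\mathrm{Re}(z^{(j)})+\mathrm{i}\,\mathrm{Im}(z^{(j)})$ shows that $T(T^{-1}(\bm{z}))=\bm{z}$. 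Existence of a two-sided inverse simultaneously establishes bijectivity, so this single computation discharges two of the defining conditions at once.

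For differentiability, I would note that identifying $\mathbb{C}^{D/2}$ with $\mathbb{R}^D$ in the standard way makes both $T$ and $T^{-1}$ \emph{linear} maps with constant coefficient matrices (each is essentially the identity permutation on real coordinates under this identification). Linear maps between finite-dimensional real vector spaces are smooth, in fact infinitely differentiable, so both $T$ and $T^{-1}$ are $C^\infty$. This is strictly simpler than the rational-function argument in the earlier proof, where one had to verify nonvanishing denominators. With well-definedness, mutual invertibility (hence bijectivity), and smoothness of both directions all established, $T$ satisfies \cref{def_diffeomorphic} and the proof concludes.

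I do not anticipate a genuine obstacle here; the only subtlety worth flagging is that ``diffeomorphism'' between $\mathbb{R}^D$ and $\mathbb{C}^{D/2}$ must be interpreted through the canonical real-vector-space identification $\mathbb{C}^{D/2}\cong\mathbb{R}^D$, since $T$ is \emph{not} complex-linear (it is only real-linear). Once this identification is made explicit, every step reduces to elementary verification, and the result serves its intended purpose of licensing the subsequent complex-space manipulations used to solve the query and key mapping functions $\mathcal{I}_q,\mathcal{I}_k$ and the attention scoring function $\mathcal{A}$.
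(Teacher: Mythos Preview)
Your proposal is correct and follows essentially the same approach as the paper: the paper's proof likewise notes that $T(\bm{x})\in\mathbb{C}^{D/2}$ and $T^{-1}(\bm{z})\in\mathbb{R}^D$, verifies the two compositions give identities, and invokes linearity of each component map to conclude differentiability. Your version is more carefully spelled out (and your remark on real- versus complex-linearity is a nice addition), but the logical skeleton is identical.
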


\begin{proof}
Given $T(\bm{x})\in\mathbb{C}^{D/2}$ and $T^{-1}(\bm{z})\in\mathbb{R}^D$, it is obvious to have $T^{-1}\left(T(\bm{x})\right)=\bm{x}$ and $T\left(T^{-1}(\bm{z})\right)=\bm{z}$. Moreover, each component of $T$ is a linear mapping as $(x^{(t)},x^{(t+1)})\mapsto x^{(t)}+\mathrm{i}x^{(t+1)},\;t=1,3,\cdots,D-1$, and the same holds for $T^{-1}$. Therefore, $T$ is a bijection, and $T$ and $T^{-1}$ are differentiable and mutually inverse. This completes the proof.
\end{proof}

Furthermore, the canonical inner product for two vectors $\bm{z}_1,\bm{z}_2\in\mathbb{C}^{D/2}$ is defined as:
\begin{equation}
    \left<\bm{z}_1, \bm{z}_2\right> \coloneqq \bm{z}_1^\top \bm{z}_2^* = \sum_{t=1}^{D/2} z_1^{(t)} z_2^{(t)*},
\end{equation}
where $\bm{z}_2^*$ denotes the complex conjugate of $\bm{z}_2$. 

% Our basic idea is to seek solutions for \cref{eq_inject} and \cref{eq_goal} into a complex space $\mathbb{C}^{D/2}$. To this end, 
Next, as defined in \cref{sec_position}, $v_m$ denotes a non-sequential feature, with $\bm{v}_m^i\in\mathbb{R}^D$ representing its position-agnostic feature embedding in the context of the $i$-th observation, and $\bm{e}_{v_m}\in\mathcal{B}^d$ representing its Poincar{\'e} ball positional encoding. We define the dimensionality of positional encodings as $d =D/2$, and give \cref{eq_goal} in $\mathbb{C}^{d}$ as:
\begin{equation}\label{eq_Tgoal}
    \langle T\left[\mathcal{I}_q(\bm{v}_m^i, \bm{e}_{v_m})\right],  T\left[\mathcal{I}_k(\bm{v}_n^i, \bm{e}_{v_n})\right]\rangle = \mathcal{A}\left(\bm{v}_m^i, \bm{v}_n^i, \bm{\gamma}(\bm{e}_{v_m}, \bm{e}_{v_n})\right),
\end{equation}
where $T:\mathbb{R}^D\rightarrow\mathbb{C}^d$ is the mapping function in \cref{prop_T}, and $\mathcal{A}$ is a scoring function. Both left- and right-hand sides of the above equation can be represented as exponential forms: 
% Notably, the left and right sides of \cref{eq_Tgoal} can be generally represented as exponential forms:
\begin{equation}\label{eq_exp_q}
    T\left[\mathcal{I}_q(\bm{v}_m^i, \bm{e}_{v_m})\right] = \rho_q(\bm{v}_m^i, \bm{e}_{v_m}) \exp\left\{\mathrm{i}\bm{\theta}_q(\bm{v}_m^i, \bm{e}_{v_m})\right\},
\end{equation}
\begin{equation}\label{eq_exp_k}
    T\left[\mathcal{I}_k(\bm{v}_n^i, \bm{e}_{v_n})\right] = \rho_k(\bm{v}_n^i, \bm{e}_{v_n}) \exp\left\{\mathrm{i}\bm{\theta}_k(\bm{v}_n^i, \bm{e}_{v_n})\right\},
\end{equation}
\begin{equation}\label{eq_exp_a}
    \mathcal{A}\left(\bm{v}_m^i, \bm{v}_n^i, \bm{\gamma}(\bm{e}_{v_m}, \bm{e}_{v_n})\right) = \rho_{\mathcal{A}}\left(\bm{v}_m^i, \bm{v}_n^i, \bm{\gamma}(\bm{e}_{v_m}, \bm{e}_{v_n})\right) \bm{1}^\top \exp\left\{\mathrm{i}\bm{\theta}_{\mathcal{A}}\left(\bm{v}_m^i, \bm{v}_n^i, \bm{\gamma}(\bm{e}_{v_m}, \bm{e}_{v_n})\right)\right\},
\end{equation}
where $\rho_q,\rho_k:\mathbb{R}^D\times\mathbb{R}^d\rightarrow\mathbb{R}$, $\rho_{\mathcal{A}}:\mathbb{R}^D\times\mathbb{R}^D\times\mathbb{R}^d\rightarrow\mathbb{R}$ denote the radius functions to be solved; and $\bm{\theta}_q,\bm{\theta}_k:\mathbb{R}^D\times\mathbb{R}^d\rightarrow\mathbb{R}^{d}$, $\bm{\theta}_{\mathcal{A}}:\mathbb{R}^D\times\mathbb{R}^D\times\mathbb{R}^d\rightarrow\mathbb{R}^d$ denote the angle functions to be solved. Meanwhile, we have:
\begin{equation}\label{eq_exp_inner}
\begin{aligned}
    \langle  e^{\mathrm{i}\bm{\theta}_q(\bm{v}_m^i, \bm{e}_{v_m})}, e^{\mathrm{i}\bm{\theta}_k(\bm{v}_n^i, \bm{e}_{v_n})}\rangle
    & = \sum_{t=1}^{d} \exp\left\{\mathrm{i}\bm{\theta}_q^{(t)}(\bm{v}_m^i, \bm{e}_{v_m})\right\} \exp\left\{\mathrm{i}\bm{\theta}_k^{(t)}(\bm{v}_n^i, \bm{e}_{v_n})\right\}^* \\
    & = \sum_{t=1}^{d} \exp\left\{\mathrm{i}\bm{\theta}_q^{(t)}(\bm{v}_m^i, \bm{e}_{v_m})-\mathrm{i}\bm{\theta}_k^{(t)}(\bm{v}_n^i, \bm{e}_{v_n})\right\} \\
    & = \bm{1}^\top \exp\left\{\mathrm{i}\left[\bm{\theta}_q(\bm{v}_m^i, \bm{e}_{v_m})-\bm{\theta}_k(\bm{v}_n^i, \bm{e}_{v_n})\right]\right\}.
\end{aligned}
\end{equation}
Substituting \cref{eq_exp_q,eq_exp_k,eq_exp_a,eq_exp_inner} into \cref{eq_Tgoal} yields:
\begin{multline}\label{eq_qkA}
    \rho_q(\bm{v}_m^i, \bm{e}_{v_m})\rho_k(\bm{v}_n^i, \bm{e}_{v_n}) \bm{1}^\top \exp\left\{\mathrm{i}\left[\bm{\theta}_q(\bm{v}_m^i, \bm{e}_{v_m})-\bm{\theta}_k(\bm{v}_n^i, \bm{e}_{v_n})\right]\right\} \\
    = \rho_{\mathcal{A}}\left(\bm{v}_m^i, \bm{v}_n^i, \bm{\gamma}(\bm{e}_{v_m}, \bm{e}_{v_n})\right) \bm{1}^\top \exp\left\{\mathrm{i}\bm{\theta}_{\mathcal{A}}\left(\bm{v}_m^i, \bm{v}_n^i, \bm{\gamma}(\bm{e}_{v_m}, \bm{e}_{v_n})\right)\right\}.
\end{multline}
A straightforward solution to the above equation reads:
% It's easy to obtain a sufficient condition for \cref{eq_qkA} to hold, that is, the following system of equations exists for any $\bm{e}_{v_m},\bm{e}_{v_n},\bm{v}_m^i,\bm{v}_n^i$:
\begin{equation}\label{eq_sufficient}
\begin{cases}
    \rho_q(\bm{v}_m^i, \bm{e}_{v_m})\rho_k(\bm{v}_n^i, \bm{e}_{v_n}) = \rho_{\mathcal{A}}\left(\bm{v}_m^i, \bm{v}_n^i, \bm{\gamma}(\bm{e}_{v_m}, \bm{e}_{v_n})\right), \\
    \bm{\theta}_q(\bm{v}_m^i, \bm{e}_{v_m})-\bm{\theta}_k(\bm{v}_n^i, \bm{e}_{v_n}) = \bm{\theta}_{\mathcal{A}}\left(\bm{v}_m^i, \bm{v}_n^i, \bm{\gamma}(\bm{e}_{v_m}, \bm{e}_{v_n})\right),
\end{cases}
\end{equation}
where  $\bm{\gamma}(\bm{e}_{v_m}, \bm{e}_{v_n})$  is a function that reflects the relative information between $\bm{e}_{v_m}$ and $\bm{e}_{v_n}$. We further assume the relative self-information is a constant: 
\begin{equation}\label{eq_gamma}
    \bm{\gamma}(\bm{e}_{v_k}, \bm{e}_{v_k}) = \bm{c},\quad \forall k=1,2,\cdots,M,
\end{equation}
where $\bm{c}\in\mathbb{R}^d$ is a constant vector. The explicit solutions of the radius and angle functions in \cref{eq_sufficient}  are given as follows.

% Moreover, we hope $\bm{\gamma}(\bm{e}_{v_m}, \bm{e}_{v_n})$ can mainly reflect the relative information between $\bm{e}_{v_m}$ and $\bm{e}_{v_n}$, instead of themselves. Without loss of generality, we assume:
% \begin{equation}\label{eq_gamma}
%     \bm{\gamma}(\bm{e}_{k}, \bm{e}_{k}) = \bm{c},\quad \forall k=1,2,\cdots,M,
% \end{equation}
% where $\bm{c}$ is a constant vector. Next, we will analyze each equation in \cref{eq_sufficient} step by step.

\paragraph{(1) Radius Functions.}  For any $\bm{e}_{v_m}=\bm{e}_{v_n}$, we have:
\begin{equation}
    \rho_q(\bm{v}_m^i, \bm{e}_{v_m})\rho_k(\bm{v}_n^i, \bm{e}_{v_m}) = \rho_q(\bm{v}_m^i, \bm{0})\rho_k(\bm{v}_n^i, \bm{0})=\rho_{\mathcal{A}}\left(\bm{v}_m^i, \bm{v}_n^i, \bm{c}\right),
\end{equation}
which indicates that the radius functions are independent of $\bm{e}_{v}$. This is a causal position analogue of the radius function used in RoPE \cite{rope}, with the solutions:

% indicates that the equality is independent of $\bm{e}_{v_m}$. For simplicity, we assume that $\rho_q,\rho_k$ are also independent of $\bm{e}_{v_m}$. And it's easy to find a set particular solutions:
\begin{equation}\label{eq_rho_q}
    \rho_q(\bm{v}_m^i, \bm{e}_{v_m})=\lVert\bm{v}_m^i\rVert, \quad \rho_k(\bm{v}_n^i, \bm{e}_{v_n})=\lVert\bm{v}_n^i\rVert,
\end{equation}
\begin{equation}
    \rho_{\mathcal{A}}\left(\bm{v}_m^i, \bm{v}_n^i, \bm{\gamma}(\bm{e}_{v_m}, \bm{e}_{v_n})\right) = \lVert\bm{v}_m^i\rVert \lVert\bm{v}_n^i\rVert.
\end{equation}

\paragraph{(2) Angle Functions.} For any $\bm{e}_{v_m}=\bm{e}_{v_n}$, it always exists: 
% for any $\bm{e}_{v_m}$ it always exists:
\begin{equation}\label{eq_theta_gm}
    \bm{\theta}_q(\bm{v}_m^i, \bm{e}_{v_m})-\bm{\theta}_k(\bm{v}_n^i, \bm{e}_{v_m}) = \bm{\theta}_q(\bm{v}_m^i, \bm{0})-\bm{\theta}_k(\bm{v}_n^i, \bm{0}) = \bm{\theta}_{\mathcal{A}}\left(\bm{v}_m^i, \bm{v}_n^i, \bm{c}\right).
\end{equation}
% Particularly, given $\bm{0}\in\mathbb{R}^{D/2}$, it also holds $\bm{\gamma}(\bm{0}, \bm{0})=\bm{c}$. Thus,
% \begin{equation}\label{eq_theta_zero}
%     \bm{\theta}_q(\bm{v}_m^i, \bm{0})-\bm{\theta}_k(\bm{v}_n^i, \bm{0}) = \bm{\theta}_{\mathcal{A}}\left(\bm{v}_m^i, \bm{v}_n^i, \bm{c}\right).
% \end{equation}
% By combining \cref{eq_theta_gm,eq_theta_zero}, we obtain:
It follows:
\begin{equation}
    \bm{\theta}_q(\bm{v}_m^i, \bm{e}_{v_m}) - \bm{\theta}_q(\bm{v}_m^i, \bm{0}) = \bm{\theta}_k(\bm{v}_n^i, \bm{e}_{v_m}) - \bm{\theta}_k(\bm{v}_n^i, \bm{0}).
\end{equation}
We simplify the solution by using the same function $\bm{\zeta}$ for both $\bm{\theta}_q$ and $\bm{\theta}_k$, yielding:
\begin{equation}
    \bm{\zeta}(\bm{v}_m^i, \bm{e}_{v_m}) - \bm{\zeta}(\bm{v}_m^i, \bm{0}) = \bm{\zeta}(\bm{v}_n^i, \bm{e}_{v_m}) - \bm{\zeta}(\bm{v}_n^i, \bm{0}).
\end{equation}

which indicates $\bm{\zeta}(\bm{v}_m^i, \bm{e}_{v_m}) - \bm{\zeta}(\bm{v}_m^i, \bm{0})$ is independent of $\bm{v}_{\{m,n\}}^i$ and can be expressed as:
\begin{equation}\label{eq_theta_q}
    \bm{\zeta}(\bm{v}_m^i, \bm{e}_{v_m}) - \bm{\zeta}(\bm{v}_m^i, \bm{0}) = \bm{\phi}(\bm{e}_{v_m}),
\end{equation}
where $\bm{\phi}:\mathbb{R}^{d}\rightarrow\mathbb{R}^{d}$ is a function that captures the effects of $\bm{e}_{v_m}$, which can also be viewed a causal position analogue of the angle function used in RoPE.

The following analysis will focus on \cref{eq_exp_q} since it also applies to \cref{eq_exp_k}.  Substituting \cref{eq_rho_q,eq_theta_q} into \cref{eq_exp_q}, we have:
\begin{equation}\label{eq_q_complex}
    T\left[\mathcal{I}_{q}(\bm{v}_{m}^i, \bm{e}_{v_m})\right] = \lVert\bm{v}_{m}^i\rVert \exp\left\{\mathrm{i}\bm{\theta}_{q}(\bm{v}_{m}^i, \bm{0})\right\}\odot \exp\left\{\mathrm{i}\bm{\phi}(\bm{e}_{v_{m}})\right\},
\end{equation}
% \begin{equation}\label{eq_q_complex}
%     T\left[\mathcal{I}_k(\bm{v}_n^i, \bm{e}_{v_n})\right] = \lVert\bm{v}_n^i\rVert \exp\left\{\mathrm{i}\bm{\theta}_q(\bm{v}_n^i, \bm{0})\right\}\odot \exp\left\{\mathrm{i}\bm{\phi}(\bm{e}_{v_n})\right\},
% \end{equation}
where $\odot$ denotes the element-wise product. Since the term before $\odot$ on the RHS of the above equation depends only on $\bm{v}_{m}^i$, we denote it as:
% represented as:
% Considering that $\lVert\bm{v}_m^i\rVert \exp\left\{\mathrm{i}\bm{\theta}_q(\bm{v}_m^i\, \bm{0})\right\}$ is only depend on $\bm{v}_m^i$, it is reasonable to set:
\begin{equation}
    \lVert\bm{v}_{m}^i\rVert \exp\left\{\mathrm{i}\bm{\theta}_{q}(\bm{v}_{m}^i, \bm{0})\right\} = T \left(\bm{W}_{q} \bm{v}_{m}^i\right) = T\left({\bm{q}}_{v_{m}}^i\right) \in \mathbb{C}^{d},
\end{equation}
where $\bm{W}_{q}\in\mathbb{R}^{D\times D}$ are trainable weights for query or key mapping. Note that any complex exponential can be transformed into a rotation matrix in real space as shown in the lemma below. 
% Moreover, it is important to note that the complex exponential form can be interpreted as a rotation matrix in real space, as demonstrated in \cref{lemma_rotary}.
\begin{lemma}\label{lemma_rotary}
Given $\bm{z}=z_1+\mathrm{i}z_2\in\mathbb{C}$, $\theta\in\mathbb{R}$, and the mapping function $T:\mathbb{R}^2\rightarrow\mathbb{C}$ defined as $T\left[(z_1,z_2)^\top\right]=\bm{z}$, the following equivalence exists:
% a correspondence exists between the complex exponential and the rotation matrix:
\begin{equation}
\begin{aligned}
    T^{-1}\left(\bm{z} \exp\{\mathrm{i}\theta\}\right) & = T^{-1}\left[\left(z_1+\mathrm{i}z_2\right) \left(\cos(\theta)+\mathrm{i}\sin(\theta)\right)\right] \\
    & = T^{-1}\left[\cos(\theta)z_1 - \sin(\theta)z_2 + \mathrm{i}\left(\sin(\theta)z_1 + \cos(\theta)z_2\right)\right] \\
    & = \begin{bmatrix}
        \cos(\theta)z_1 - \sin(\theta)z_2 \\
        \sin(\theta)z_1 + \cos(\theta)z_2
    \end{bmatrix} = 
    \begin{bmatrix}
        \cos(\theta) & -\sin(\theta)\\
        \sin(\theta) & \cos(\theta)
    \end{bmatrix}
    \begin{bmatrix}
        z_1 \\
        z_2
    \end{bmatrix}.
\end{aligned}
\end{equation}
\end{lemma}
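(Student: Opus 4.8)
The plan is a direct verification via Euler's formula, since the chain of equalities displayed in the statement already lays out the intended computation; the task is simply to justify each link. First I would expand the complex exponential as $\exp\{\mathrm{i}\theta\} = \cos(\theta) + \mathrm{i}\sin(\theta)$ and multiply it against $\bm{z} = z_1 + \mathrm{i}z_2$, distributing the product and using $\mathrm{i}^2 = -1$ to collect everything into a single complex scalar with real part $\cos(\theta)z_1 - \sin(\theta)z_2$ and imaginary part $\sin(\theta)z_1 + \cos(\theta)z_2$.

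Next I would apply the inverse map $T^{-1}$. The only point requiring care is to invoke the definition of $T^{-1}$ from \cref{prop_T} specialized to $D=2$, under which a complex scalar $a + \mathrm{i}b$ is sent to the real two-vector $(\mathrm{Re}(\cdot),\mathrm{Im}(\cdot))^\top = (a,b)^\top$. Applying this convention to the product above yields the column vector $(\cos(\theta)z_1 - \sin(\theta)z_2,\ \sin(\theta)z_1 + \cos(\theta)z_2)^\top$, and then factoring $(z_1,z_2)^\top$ out of the two components identifies this vector as the standard $2\times 2$ rotation matrix applied to $(z_1,z_2)^\top$, closing the chain.

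There is no genuine obstacle here: the statement is an elementary identity, and the sole subtlety is reading off the real and imaginary parts under the coordinate convention fixed by \cref{prop_T} rather than an ad hoc one. What makes the lemma worth isolating is its role downstream: it is the scalar building block that, applied blockwise across the $d$ coordinate pairs, converts the complex-exponential modulation $\exp\{\mathrm{i}\bm{\phi}(\bm{e}_{v_m})\}$ appearing in \cref{eq_q_complex} into the block-diagonal rotation matrix $\bm{R}(\bm{\varphi}_{v})$ of \cref{eq_R}, thereby realizing the rotary form of the positional encoding in real space.
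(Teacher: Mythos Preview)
Your proposal is correct and matches the paper's approach exactly: the paper presents the lemma with the full chain of equalities displayed in the statement itself and offers no separate proof, since the computation is self-evident via Euler's formula and the definition of $T^{-1}$ from \cref{prop_T}. Your plan to justify each link and your remark on its downstream role in assembling $\bm{R}(\bm{\varphi}_v)$ are both on point.
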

\cref{lemma_rotary} allows us to rewrite the solution of  \cref{eq_q_complex} as:
% Thus, we finally restore the particular solution \cref{eq_q_complex} to $\mathbb{R}^D$ by $T^{-1}$, yielding: 
\begin{equation}
\begin{split}
    \mathcal{I}_{q}(\bm{v}_{m}^i, \bm{e}_{v_{m}}) &= T^{-1}\left[ T\left(\bm{W}_{q} \bm{v}_{m}^i,\right) \odot \exp\left\{\mathrm{i}\bm{\phi}(\bm{e}_{v_{m}})\right\} \right]\\ 
    &= \bm{R}\left(\bm{\phi}(\bm{e}_{v_m})\right)\bm{W}_{q} \bm{v}_{m}^i\\
    &=\bm{R}\left(\bm{\phi}(\bm{e}_{v_m})\right)\bm{q}_{v_{m}}^i,
\end{split}
\end{equation}
where $\bm{R}$ is defined in \cref{eq_R} in \cref{sec_position}, in which $\bm{\phi}(\bm{e}_{v_m})$ is replaced with $\bm{\varphi}_{v_m}\coloneqq c\bm{e}_{v_m}$.  Put together, we reach the solutions for $\mathcal{I}_q$ and $\mathcal{I}_k$ in \cref{eq_fq_fk} in \cref{sec_position}, whose validity in satisfying \cref{eq_goal}  are demonstrated in the following proposition.
% We further demonstrate that $\mathcal{I}_q$ and $\mathcal{I}_k$ are valid solutions to \cref{eq_goal} in the following proposition.
\begin{proposition}\label{prop_relative}
Given $\bm{v}_m^i,\bm{v}_n^i,\bm{e}_{v_m},\bm{e}_{v_n}$, the $\mathcal{I}_q$ and $\mathcal{I}_k$ given in \cref{eq_fq_fk} satisfy:
% For any $\bm{v}_m^i,\bm{v}_n^i,\bm{e}_{v_m},\bm{e}_{v_n}$, the refined query and key mapping functions in CAPE satisfy:
\begin{equation}
    \langle\mathcal{I}_q(\bm{v}_m^i, \bm{e}_{v_m}),  \mathcal{I}_k(\bm{v}_n^i, \bm{e}_{v_n})\rangle = \mathcal{A}\left(\bm{v}_m^i,\bm{v}_n^i,\gamma(\bm{e}_{v_m},\bm{e}_{v_n})\right)= (\bm{q}_{v_m}^{i})^\top \bm{R}(\bm{\varphi}_{v_n}-\bm{\varphi}_{v_m})\bm{k}_{v_n}^{i},
\end{equation}
where $\bm{\gamma}(\bm{e}_{v_m}, \bm{e}_{v_n})\coloneqq \bm{e}_{v_n}-\bm{e}_{v_m}$.
\end{proposition}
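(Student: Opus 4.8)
The plan is to verify the displayed chain of equalities by direct computation, reducing everything to two elementary algebraic properties of the block-diagonal rotation matrix $\bm{R}$ from \cref{eq_R}. First I would establish that each $2\times 2$ block $\bm{r}(\theta)$ satisfies $\bm{r}(\theta)^\top = \bm{r}(-\theta)$ and $\bm{r}(\theta_1)\bm{r}(\theta_2) = \bm{r}(\theta_1+\theta_2)$; both are immediate from the angle-addition identities for $\cos$ and $\sin$ (equivalently, $\bm{r}$ is a group homomorphism from $(\mathbb{R},+)$ into $SO(2)$). Since $\bm{R}(\bm{\varphi})$ is block-diagonal with these blocks acting on disjoint coordinate pairs, both identities lift verbatim to the full matrix:
\begin{equation*}
\bm{R}(\bm{\varphi})^\top = \bm{R}(-\bm{\varphi}), \qquad \bm{R}(\bm{\varphi}_1)\bm{R}(\bm{\varphi}_2) = \bm{R}(\bm{\varphi}_1 + \bm{\varphi}_2).
\end{equation*}

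Next I would substitute the definitions of $\mathcal{I}_q$ and $\mathcal{I}_k$ from \cref{eq_fq_fk} into the real Euclidean inner product $\langle\cdot,\cdot\rangle = (\cdot)^\top(\cdot)$ on $\mathbb{R}^D$ and expand:
\begin{equation*}
\langle \mathcal{I}_q(\bm{v}_m^i,\bm{e}_{v_m}),\, \mathcal{I}_k(\bm{v}_n^i,\bm{e}_{v_n})\rangle
= \bigl(\bm{R}(\bm{\varphi}_{v_m})\bm{q}_{v_m}^i\bigr)^\top \bm{R}(\bm{\varphi}_{v_n})\bm{k}_{v_n}^i
= (\bm{q}_{v_m}^i)^\top \bm{R}(\bm{\varphi}_{v_m})^\top \bm{R}(\bm{\varphi}_{v_n})\, \bm{k}_{v_n}^i.
\end{equation*}
Applying the two lifted identities collapses the middle factor as $\bm{R}(\bm{\varphi}_{v_m})^\top \bm{R}(\bm{\varphi}_{v_n}) = \bm{R}(-\bm{\varphi}_{v_m})\bm{R}(\bm{\varphi}_{v_n}) = \bm{R}(\bm{\varphi}_{v_n}-\bm{\varphi}_{v_m})$, yielding the right-hand side $(\bm{q}_{v_m}^i)^\top \bm{R}(\bm{\varphi}_{v_n}-\bm{\varphi}_{v_m})\bm{k}_{v_n}^i$ of the claim.

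Finally, to match the middle expression $\mathcal{A}(\bm{v}_m^i,\bm{v}_n^i,\bm{\gamma}(\bm{e}_{v_m},\bm{e}_{v_n}))$, I would recall $\bm{\varphi}_v \coloneqq c\,\bm{e}_v$ so that the rotation argument satisfies $\bm{\varphi}_{v_n}-\bm{\varphi}_{v_m} = c(\bm{e}_{v_n}-\bm{e}_{v_m}) = c\,\bm{\gamma}(\bm{e}_{v_m},\bm{e}_{v_n})$. Hence the inner product depends on the positional encodings only through the difference $\bm{\gamma}$, exactly as required by the relative-information assumption in \cref{eq_gamma}; since $\mathcal{A}$ is \emph{defined} in \cref{attention} as precisely this relative form, all three members of the chain coincide. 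The main obstacle is conceptual rather than computational, and even that is mild: the only genuine care lies in rigorously lifting the $2\times 2$ block identities to the full block-diagonal matrix and in tracking the transpose orientation correctly (it is $\bm{R}(\bm{\varphi}_{v_m})^\top\bm{R}(\bm{\varphi}_{v_n})$, not the reverse), since this is what fixes the sign of the relative angle as $\bm{\varphi}_{v_n}-\bm{\varphi}_{v_m}$ rather than $\bm{\varphi}_{v_m}-\bm{\varphi}_{v_n}$.
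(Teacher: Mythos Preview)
Your proposal is correct and follows essentially the same route as the paper: expand the inner product using the definitions in \cref{eq_fq_fk}, then collapse $\bm{R}(\bm{\varphi}_{v_m})^\top\bm{R}(\bm{\varphi}_{v_n})$ block-by-block to $\bm{R}(\bm{\varphi}_{v_n}-\bm{\varphi}_{v_m})$ via the rotation-matrix identities. The paper packages your two identities $\bm{r}(\theta)^\top=\bm{r}(-\theta)$ and $\bm{r}(\theta_1)\bm{r}(\theta_2)=\bm{r}(\theta_1+\theta_2)$ into a single lemma $\bm{r}(\alpha)^\top\bm{r}(\beta)=\bm{r}(\beta-\alpha)$, but that is a cosmetic difference only.
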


\begin{proof}
% According to the definition given by \cref{eq_fq_fk}, the inner product can be expressed as:
\begin{equation}\label{eq_fqfk}
\begin{aligned}
        \langle\mathcal{I}_q(\bm{v}_m^i, \bm{e}_{v_m}),  \mathcal{I}_k(\bm{v}_n^i, \bm{e}_{v_n})\rangle & =\left[\bm{R}(\bm{\varphi}_{v_m}) \bm{W}_q \bm{v}_m^i\right]^\top \bm{R}(\bm{\varphi}_{v_n}) \bm{W}_k \bm{v}_n^i \\
    & = {\bm{v}_m^i}^\top \bm{W}_q^\top \bm{R}(\bm{\varphi}_{v_m})^\top\bm{R}(\bm{\varphi}_{v_n}) \bm{W}_k \bm{v}_n^i,
\end{aligned}
\end{equation}
where, based on \cref{lem_rot_mult}, $\bm{R}(\bm{\varphi}_{v_m})^\top\bm{R}(\bm{\varphi}_{v_n})$  can be written as: 
% Subsequently, we note the rotation matrix has the following property.

% Therefore, we can further express $\bm{R}(\bm{\varphi}_{v_m})^\top\bm{R}(\bm{\varphi}_{v_n})$ as:
\begin{equation}\label{eq_Rmn}
\begin{aligned}
    \bm{R}(\bm{\varphi}_{v_m})^\top&\bm{R}(\bm{\varphi}_{v_n}) \\
    & = 
    \begin{bmatrix}
        \bm{r}(\varphi_{v_m}^{(1)}) & \bm{0} & \cdots & \bm{0} \\
        \bm{0} & \bm{r}(\varphi_{v_m}^{(2)}) & \cdots & \bm{0} \\
        \vdots & \vdots & \ddots & \bm{0} \\
        \bm{0} & \bm{0} & \cdots & \bm{r}(\varphi_{v_m}^{(d)})
    \end{bmatrix}^\top
    \begin{bmatrix}
        \bm{r}(\varphi_{v_n}^{(1)}) & \bm{0} & \cdots & \bm{0} \\
        \bm{0} & \bm{r}(\varphi_{v_n}^{(2)}) & \cdots & \bm{0} \\
        \vdots & \vdots & \ddots & \bm{0} \\
        \bm{0} & \bm{0} & \cdots & \bm{r}(\varphi_{v_n}^{(d)})
    \end{bmatrix} \\
    & = \begin{bmatrix}
        \bm{r}(\varphi_{v_m}^{(1)})^\top\bm{r}(\varphi_{v_n}^{(1)}) & \bm{0} & \cdots & \bm{0} \\
        \bm{0} & \bm{r}(\varphi_{v_m}^{(2)})^\top\bm{r}(\varphi_{v_n}^{(2)}) & \cdots & \bm{0} \\
        \vdots & \vdots & \ddots & \bm{0} \\
        \bm{0} & \bm{0} & \cdots & \bm{r}(\varphi_{v_m}^{(d)})^\top\bm{r}(\varphi_{v_n}^{(d)})
    \end{bmatrix} \\
    & = \begin{bmatrix}
        \bm{r}(\varphi_{v_n}^{(1)}-\varphi_{v_m}^{(1)}) & \bm{0} & \cdots & \bm{0} \\
        \bm{0} & \bm{r}(\varphi_{v_n}^{(2)}-\varphi_{v_m}^{(2)}) & \cdots & \bm{0} \\
        \vdots & \vdots & \ddots & \bm{0} \\
        \bm{0} & \bm{0} & \cdots & \bm{r}(\varphi_{v_n}^{(d)}-\varphi_{v_m}^{(d)})
    \end{bmatrix} \\
    & = \bm{R}(\bm{\varphi}_{v_n}-\bm{\varphi}_{v_m}) = \bm{R}\left[c\left(\bm{e}_{v_n}-\bm{e}_{v_m}\right)\right],
\end{aligned}
\end{equation}
where $\bm{r}(\varphi_{v}^{(t)})$ is defined in \cref{eq_R} in \cref{sec_position}. Then, \cref{eq_fqfk} reads:
% By substituting \cref{eq_Rmn} into \cref{eq_fqfk}, \cref{prop_relative} has been provided. Furthermore, if we set $\bm{\gamma}(\bm{e}_{v_m}, \bm{e}_{v_n})\coloneqq \bm{e}_{v_n}-\bm{e}_{v_m}$, it holds:
\begin{equation}
    \langle\mathcal{I}_q(\bm{v}_m^i, \bm{e}_{v_m}),  \mathcal{I}_k(\bm{v}_n^i, \bm{e}_{v_n})\rangle = {\bm{v}_m^i}^\top \bm{W}_q^\top \bm{R}\left[c\bm{\gamma}(\bm{e}_{v_m}, \bm{e}_{v_n})\right] \bm{W}_k \bm{v}_n^i =\mathcal{A}\left(\bm{v}_m^i, \bm{v}_n^i, \bm{\gamma}(\bm{e}_{v_m}, \bm{e}_{v_n})\right).
\end{equation}
This completes the proof.
\end{proof}
\begin{lemma}\label{lem_rot_mult}
The following equation always exists: 
\begin{equation}
    \begin{bmatrix}
        \cos(\alpha) & -\sin(\alpha) \\
        \sin(\alpha) & \cos(\alpha)
    \end{bmatrix}^\top
    \begin{bmatrix}
        \cos(\beta) & -\sin(\beta) \\
        \sin(\beta) & \cos(\beta)
    \end{bmatrix}=\begin{bmatrix}
        \cos(\beta-\alpha) & -\sin(\beta-\alpha) \\
        \sin(\beta-\alpha) & \cos(\beta-\alpha)
    \end{bmatrix}, \forall \alpha,\beta\in\mathbb{R}.
\end{equation}
\end{lemma}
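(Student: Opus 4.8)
The plan is to prove this elementary identity by direct computation, since it is simply the statement that the product of the transpose of one $2\times 2$ rotation matrix with another rotation matrix is again a rotation matrix, through the angle $\beta - \alpha$. I would first note that the matrices appearing here are exactly the blocks $\bm{r}(\alpha)$ and $\bm{r}(\beta)$ defined in \cref{eq_R}, so the claim is equivalent to $\bm{r}(\alpha)^\top \bm{r}(\beta) = \bm{r}(\beta - \alpha)$.

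I see two clean routes. The structural route observes that a rotation matrix is orthogonal, so $\bm{r}(\alpha)^\top = \bm{r}(\alpha)^{-1} = \bm{r}(-\alpha)$ is itself a rotation by $-\alpha$; combined with the composition law $\bm{r}(\theta_1)\bm{r}(\theta_2) = \bm{r}(\theta_1+\theta_2)$ for rotations, one immediately gets $\bm{r}(-\alpha)\bm{r}(\beta) = \bm{r}(\beta-\alpha)$. The direct route, which I would favor here for self-containedness, is to carry out the matrix multiplication entrywise and recognize each entry via the angle-subtraction identities. Writing out the left-hand side gives a matrix whose $(1,1)$ and $(2,2)$ entries are $\cos\alpha\cos\beta + \sin\alpha\sin\beta = \cos(\beta-\alpha)$, whose $(2,1)$ entry is $\cos\alpha\sin\beta - \sin\alpha\cos\beta = \sin(\beta-\alpha)$, and whose $(1,2)$ entry is its negative, $-\sin(\beta-\alpha)$. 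Collecting these reproduces the right-hand side exactly.

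There is no genuine obstacle in this lemma; the only points requiring minor care are (i) transposing $\bm{r}(\alpha)$ correctly so that the off-diagonal signs flip, which is precisely what converts the composition into a \emph{difference} $\beta - \alpha$ rather than a sum, and (ii) keeping the orientation straight so that the off-diagonal entries remain antisymmetric, confirming the product is again a proper rotation. Since the identity must hold for all $\alpha,\beta\in\mathbb{R}$, no case analysis or domain restriction is needed, and the computation closes the proof in a single pass.
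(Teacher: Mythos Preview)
Your proposal is correct. The paper itself states this lemma without proof, treating it as an elementary identity about $2\times 2$ rotation matrices; your direct entrywise computation (or the equivalent structural argument via $\bm{r}(\alpha)^\top=\bm{r}(-\alpha)$ and the additivity of rotation angles) is exactly the kind of verification the paper is implicitly relying on.
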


\begin{remark}
Since $\bm{\varphi}_{v_m}\coloneqq c\bm{e}_{v_m}$ and $\bm{e}_{v_m}\in\mathcal{B}^{d}$ lies within the unit Poincar{\'e} ball, $\bm{\varphi}_{v_m}$ is bounded for any $v_m$. $c$ is a scale factor to control the range of angles. Here, we set $c=\pi/4$ to ensure that $\bm{\varphi}_{v_m}$ is a small but not negligible angle constrained to $[-\pi/4,\pi/4]$. This also brings several valuable properties as demonstrated in \cref{supp_distance,supp_specificity,supp_robustness}.
% Notably, $\bm{\phi}:\mathbb{R}^{d}\rightarrow\mathbb{R}^{d}$, but $\bm{\varphi}_{v_m}\coloneqq c\bm{e}_{v_m}$, where $\bm{e}_{v_m}\in\mathcal{B}^{d}$, indicating that $\bm{\varphi}_{v_m}$ is bounded. Here, $c$ is a scale factor to control the range of angles. We set $c=\pi/4$ as default to ensure that $\bm{\varphi}_{v_m}$ is a small but not negligible angle constrained to $[-\pi/4,\pi/4]$, which also gives the model several unique properties.
\end{remark}

\begin{remark}
 Since \cref{eq_rotary_matrix_a} is sparse, it can be efficiently computed as \cite{rope}:
\begin{equation}\label{eq_rotary_matrix_a}
    \bm{R}(\bm{\varphi}_{v_m})\bm{v} = \begin{bmatrix}
        \cos(\varphi_{v_m}^{(1)}) \\ \cos(\varphi_{v_m}^{(1)}) \\ \cos(\varphi_{v_m}^{(2)}) \\ \cos(\varphi_{v_m}^{(2)}) \\ \vdots \\ \cos(\varphi_{v_m}^{(D/2)}) \\ \cos(\varphi_{v_m}^{(D/2)})
    \end{bmatrix}\odot
    \begin{bmatrix}
        v^{(1)} \\ v^{(2)} \\ v^{(3)} \\ v^{(4)} \\ \vdots \\ v^{(D-1)} \\ v^{(D)}
    \end{bmatrix}+\begin{bmatrix}
        \sin(\varphi_{v_m}^{(1)}) \\ \sin(\varphi_{v_m}^{(1)}) \\ \sin(\varphi_{v_m}^{(2)}) \\ \sin(\varphi_{v_m}^{(2)}) \\ \vdots \\ \sin(\varphi_{v_m}^{(D/2)}) \\ \sin(\varphi_{v_m}^{(D/2)})
    \end{bmatrix}\odot
    \begin{bmatrix}
        -v^{(2)} \\ v^{(1)} \\ -v^{(4)} \\ v^{(3)} \\ \vdots \\ -v^{(D)} \\ v^{(D-1)}
    \end{bmatrix},
\end{equation}
where $\odot$ is the element-wise product.
\end{remark}

%%%%%%%%%%%%%%%%%%%%%%%%%%%%%%%%%%%%%%%%%%%%%%%%%%%%%%%%%%%%%%%%%%%%%%%%%%%%%%%

\subsection{Causal Distance-Induced Attention Attenuation}\label{supp_distance}
\distance*

\begin{proof}
We first prove that for any $\bm{v}_m^i,\bm{v}_n^i$ and $\bm{\gamma}(\bm{e}_{v_m},\bm{e}_{v_n})$, $\mathcal{A}$ is bounded by two functions $\mathcal{A}^+,\mathcal{A}^-$. 
\paragraph{(1) Boundedness.}
Recall that $\bm{q}_{v_m}^i= \bm{W}_q\bm{v}_m^i=\left({q_{v_m}^i}^{(1)},{q_{v_m}^i}^{(2)},\cdots,{q_{v_m}^i}^{(D)}\right)^\top$ and $\bm{k}_{v_n}^i= \bm{W}_k\bm{v}_n^i=\left({k_{v_n}^i}^{(1)},{k_{v_n}^i}^{(2)},\cdots,{k_{v_n}^i}^{(D)}\right)^\top$. Then, we have:
\begin{equation}\label{eq_A_cos_sin}
\begin{aligned}
    \mathcal{A}&\left(\bm{v}_m^i, \bm{v}_n^i, \bm{e}_{v_m}-\bm{e}_{v_n}\right) = {\bm{q}_{v_m}^i}^\top \bm{R}(\bm{\varphi}_{v_n}-\bm{\varphi}_{v_m}) \bm{k}_{v_n}^i\\
    & = \sum_{t=1}^{d} ({q_{v_m}^i}^{(2t-1)},{q_{v_m}^i}^{(2t)})\bm{r}(\varphi_{v_n}^{(t)}-\varphi_{v_m}^{(t)})({k_{v_n}^i}^{(2t-1)},{k_{v_n}^i}^{(2t)})^\top \\
    & = \sum_{t=1}^{d}\begin{bmatrix}
        \cos(\varphi_{v_n}^{(t)}-\varphi_{v_m}^{(t)}){q_{v_m}^i}^{(2t-1)} + \sin(\varphi_{v_n}^{(t)}-\varphi_{v_m}^{(t)}){q_{v_m}^i}^{(2t)} \\
        -\sin(\varphi_{v_n}^{(t)}-\varphi_{v_m}^{(t)}){q_{v_m}^i}^{(2t-1)} + \cos(\varphi_{v_n}^{(t)}-\varphi_{v_m}^{(t)}){q_{v_m}^i}^{(2t)}
    \end{bmatrix}^\top
    \begin{bmatrix}
        {k_{v_n}^i}^{(2t-1)} \\ {k_{v_n}^i}^{(2t)}
    \end{bmatrix} \\ 
    & = \sum_{t=1}^{d}\alpha_i^{(t)}\cos(\varphi_{v_m}^{(t)}-\varphi_{v_n}^{(t)}) + \beta_i^{(t)}\sin(\varphi_{v_m}^{(t)}-\varphi_{v_n}^{(t)}),
\end{aligned}
\end{equation}
where $\alpha_i^{(t)}\coloneqq {q_{v_m}^i}^{(2t-1)}{k_{v_n}^i}^{(2t-1)}+{q_{v_m}^i}^{(2t)}{k_{v_n}^i}^{(2t)}$, $\beta_i^{(t)}\coloneqq {q_{v_m}^i}^{(2t)}{k_{v_n}^i}^{(2t-1)}-{q_{v_m}^i}^{(2t-1)}{k_{v_n}^i}^{(2t)}$. \cref{eq_A_cos_sin} is bounded as:
\begin{equation}
\begin{aligned}
    \mathcal{A}\left(\bm{v}_m^i, \bm{v}_n^i, \bm{e}_{v_m}-\bm{e}_{v_n}\right) & \leq \sum_{t=1}^{d}\lvert\alpha_i^{(t)}\rvert\cos(\varphi_{v_m}^{(t)}-\varphi_{v_n}^{(t)}) + \lvert\beta_i^{(t)}\rvert \\
    & \leq \lvert\alpha_i^{*}\rvert \sum_{t=1}^d\cos(\lvert\varphi_{v_m}^{(t)}-\varphi_{v_n}^{(t)}\rvert) + \sum_{t=1}^d\lvert\beta_i^{(t)}\rvert,    
\end{aligned}
\end{equation}
where $\lvert\alpha_i^{*}\rvert\coloneq\max_{t} \lvert\alpha_i^{(t)}\rvert$. As $\varphi_{v_m}^{(t)},\varphi_{v_n}^{(t)}\in[-\pi/4,\pi/4]$, we have $\lvert\varphi_{v_m}^{(t)}-\varphi_{v_n}^{(t)}\rvert\in[0,\pi/2]$. Over this interval, $\cos(\lvert\varphi_{v_m}^{(t)}-\varphi_{v_n}^{(t)}\rvert)$ is concave. Thus, Jensen's inequality indicates the inequality:
\begin{equation}
\begin{aligned}
    \frac{1}{d}\sum_{t=1}^d\cos(\lvert\varphi_{v_m}^{(t)}-\varphi_{v_n}^{(t)}\rvert) & \leq \cos\left(\frac{1}{d}\sum_{t=1}^{d}\lvert\varphi_{v_m}^{(t)}-\varphi_{v_n}^{(t)}\rvert\right) \\
    & \leq 
    % \cos\left(\frac{1}{d}\sum_{t=1}^{d}\lVert\varphi_{v_m}^{(t)}-\varphi_{v_n}^{(t)}\lVert\right) =
    \cos\left(\frac{1}{d}\lVert\bm{\varphi}_{v_m}-\bm{\varphi}_{v_n}\rVert\right),
\end{aligned}
\end{equation}
which leads to the upper bound function of \cref{eq_A_cos_sin} as:
\begin{equation}\label{eq_upperA}
    \mathcal{A}^+\left(\bm{v}_m^i, \bm{v}_n^i, \bm{e}_{v_m}-\bm{e}_{v_n}\right) \coloneqq ( \lvert\alpha_i^{*}\rvert d) \cos\left(\frac{1}{d}\lVert\bm{\varphi}_{v_m}-\bm{\varphi}_{v_n}\rVert\right) + \sum_{t=1}^d\lvert\beta_i^{(t)}\rvert.
\end{equation}
Since $\bm{\varphi}_{v_m}=\frac{\pi}{4}\bm{e}_{v_m}$, $\bm{\varphi}_{v_n}=\frac{\pi}{4}\bm{e}_{v_n}$ and $\bm{e}_{v_m},\bm{e}_{v_n}\in\mathcal{B}^d\coloneqq\{\bm{e}\in\mathbb{R}^d:\lVert\bm{e}\rVert < 1\}$, we have $\lVert\bm{\varphi}_{v_m}-\bm{\varphi}_{v_n}\rVert=\frac{\pi}{4}\lVert\bm{e}_{v_m}-\bm{e}_{v_n}\rVert\in[0,\pi/2]$, which further leads to $\mathcal{A}^+>0$. Similarly, the lower bound function $\mathcal{A}^-<0$ can be defined as:
\begin{equation}\label{eq_lowerA}
    \mathcal{A}^-\left(\bm{v}_m^i, \bm{v}_n^i, \bm{e}_{v_m}-\bm{e}_{v_n}\right) \coloneqq - ( \lvert\alpha_i^{*}\rvert d) \cos\left(\frac{1}{d}\lVert\bm{\varphi}_{v_m}-\bm{\varphi}_{v_n}\rVert\right) - \sum_{t=1}^d\lvert\beta_i^{(t)}\rvert.
\end{equation}

Subsequently, we prove that both $\mathcal{A}^+$ and $\mathcal{A}^-$ attenuate as $d_p(\bm{e}_{v_m},\bm{e}_{v_n})$ increases.
\paragraph{(2) Attenuation.} The partial derivative of $\mathcal{A}^+$ with respect to $\lVert\bm{e}_{v_m}-\bm{e}_{v_n}\rVert$ reads:
\begin{equation}\label{eq_A_d}
\begin{aligned}
    \frac{\partial\mathcal{A}^+\left(\bm{v}_m^i,\bm{v}_n^i,\bm{e}_{v_m}-\bm{e}_{v_n}\right)}{\partial \lVert\bm{e}_{v_m}-\bm{e}_{v_n}\rVert} & = (d \lvert\alpha_i^{*}\rvert) \frac{\partial \cos\left(\frac{1}{d}\lVert\bm{\varphi}_{v_m}-\bm{\varphi}_{v_n}\rVert\right)}{\partial \lVert\bm{\varphi}_{v_m}-\bm{\varphi}_{v_n}\rVert} \frac{\partial \lVert\bm{\varphi}_{v_m}-\bm{\varphi}_{v_n}\rVert}{\partial \lVert\bm{e}_{v_m}-\bm{e}_{v_n}\rVert} \\
    & = - (\frac{\pi d}{4}\lvert\alpha_i^{*}\rvert) \cdot \sin\left(\frac{1}{d}\lVert\bm{\varphi}_{v_m}-\bm{\varphi}_{v_n}\rVert\right).
\end{aligned}
\end{equation}
% \begin{equation}\label{eq_A_d}
% \begin{aligned}
%     \frac{\partial\mathcal{A}^+\left(\bm{v}_m^i,\bm{v}_n^i,\bm{e}_{v_m}-\bm{e}_{v_n}\right)}{\partial \lVert\bm{e}_{v_m}-\bm{e}_{v_n}\rVert} & = (d \max_{t} \lvert\alpha_i^{(t)}\rvert) \frac{\partial \cos\left(\frac{1}{d}\lVert\bm{\varphi}_{v_m}-\bm{\varphi}_{v_n}\rVert\right)}{\partial \lVert\bm{\varphi}_{v_m}-\bm{\varphi}_{v_n}\rVert} \frac{\partial \lVert\bm{\varphi}_{v_m}-\bm{\varphi}_{v_n}\rVert}{\partial \lVert\bm{e}_{v_m}-\bm{e}_{v_n}\rVert} \\
%     & = - (\frac{\pi d}{4}\max_{t} \lvert\alpha_i^{(t)}\rvert) \cdot \sin\left(\frac{1}{d}\lVert\bm{\varphi}_{v_m}-\bm{\varphi}_{v_n}\rVert^2\right) \cdot \frac{2}{d} \lVert\bm{\varphi}_{v_m}-\bm{\varphi}_{v_n}\rVert \\
%     & = -\frac{\pi}{2}\max_{t} \lvert\alpha_i^{(t)}\rvert \cdot \lVert\bm{\varphi}_{v_m}-\bm{\varphi}_{v_n}\rVert \sin\left(\frac{1}{d}\lVert\bm{\varphi}_{v_m}-\bm{\varphi}_{v_n}\rVert^2\right).
% \end{aligned}
% \end{equation}
Given the Poincar{\'e} distance in \cref{eq_pdistance}, we have:
\begin{equation}\label{eq_partial_d_vs_e}
\begin{aligned}
    \frac{\partial d_p(\bm{e}_{v_m},\bm{e}_{v_n})}{\partial \lVert\bm{e}_{v_m}-\bm{e}_{v_n}\rVert} & = \frac{\partial\; \mathrm{arcosh}\left(1+2C^{-1}\lVert\bm{e}_{v_m}-\bm{e}_{v_n}\rVert^2\right)}{\partial \lVert\bm{e}_{v_m}-\bm{e}_{v_n}\rVert} \\
    & = \frac{4C^{-1}\lVert\bm{e}_{v_m}-\bm{e}_{v_n}\rVert}{\sqrt{\left(1+2C^{-1}\lVert\bm{e}_{v_m}-\bm{e}_{v_n}\rVert^2\right)^2-1}} = \frac{2}{\sqrt{\lVert\bm{e}_{v_m}-\bm{e}_{v_n}\rVert^2+C}},
\end{aligned}
\end{equation}
where $C=(1-\lVert\bm{e}_{v_m}\rVert^2)(1-\lVert\bm{e}_{v_n}\rVert^2)$. This leads to:
\begin{equation}\label{eq_partialA}
\begin{aligned}
    & \frac{\partial\mathcal{A}^+\left(\bm{v}_m^i,\bm{v}_n^i,\bm{e}_{v_m}-\bm{e}_{v_n}\right)}{\partial d_p(\bm{e}_{v_m},\bm{e}_{v_n})} =
    \frac{\partial\mathcal{A}^+\left(\bm{v}_m^i,\bm{v}_n^i,\bm{e}_{v_m}-\bm{e}_{v_n}\right)}{\partial \lVert\bm{e}_{v_m}-\bm{e}_{v_n}\rVert} \frac{\partial \lVert\bm{e}_{v_m}-\bm{e}_{v_n}\rVert}{\partial d_p(\bm{e}_{v_m},\bm{e}_{v_n})} \\
    & = -\frac{\pi d}{4}\lvert\alpha_i^{*}\rvert\sin\left(\frac{1}{d}\lVert\bm{\varphi}_{v_m}-\bm{\varphi}_{v_n}\rVert\right) \cdot \frac{\sqrt{\lVert\bm{e}_{v_m}-\bm{e}_{v_n}\rVert^2+C}}{2} \leq0
\end{aligned}
\end{equation}
where $\lVert\bm{\varphi}_{v_m}-\bm{\varphi}_{v_n}\rVert\in[0,\pi/2]$. Similarly, we can prove the attenuation for $\mathcal{A}^-$. 
This completes the proof.
% So far, \cref{prop_distance} has been proved.
\end{proof}

\begin{remark}
As the causal distance $d_p(\bm{e}{v_m}, \bm{e}{v_n}) \rightarrow +\infty$, both $\mathcal{A}^+$ and $\mathcal{A}^-$ attenuate and converge towards smaller magnitudes (though not necessarily to $0$). Since $\mathcal{A}$ is bounded between $\mathcal{A}^+$ and $\mathcal{A}^-$, its range of possible variation also shrinks significantly. In particular, when $\bm{q}$ and $\bm{k}$ are collinear, $\mathcal{A}$ exhibits a stronger attenuation property shown below. 
\end{remark}

\begin{corollary}\label{coro_attenuation}
Following the definition in \cref{prop_distance}, when $\bm{k}=c\bm{q},c\in\mathbb{R}$ and $c\neq 0$, it exists:
\begin{equation}
    \frac{\partial\mathcal{A}\left(\bm{v}_m^i,\bm{v}_n^i,\bm{e}_{v_m}-\bm{e}_{v_n}|\bm{k}=c\bm{q}\right)}{\partial d_p(\bm{e}_{v_m},\bm{e}_{v_n})} \mathrm{sgn}(c) \leq 0,
\end{equation}
where $\mathrm{sgn}(c)\coloneqq c/|c|$ is a sign function.
\end{corollary}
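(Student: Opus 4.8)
The plan is to exploit the collinearity $\bm{k}=c\bm{q}$ to collapse the general attention expansion into a pure cosine series, for which monotonicity in the causal distance holds exactly rather than only for the envelopes $\mathcal{A}^{\pm}$. Concretely, I would start from the component expansion \cref{eq_A_cos_sin}, namely $\mathcal{A}=\sum_{t=1}^{d}\alpha_i^{(t)}\cos(\varphi_{v_m}^{(t)}-\varphi_{v_n}^{(t)})+\beta_i^{(t)}\sin(\varphi_{v_m}^{(t)}-\varphi_{v_n}^{(t)})$, and substitute ${k_{v_n}^i}^{(j)}=c\,{q_{v_m}^i}^{(j)}$ into the definitions of $\alpha_i^{(t)}$ and $\beta_i^{(t)}$. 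A direct computation then shows $\alpha_i^{(t)}=c\big(({q_{v_m}^i}^{(2t-1)})^2+({q_{v_m}^i}^{(2t)})^2\big)$ and $\beta_i^{(t)}=0$: the antisymmetric sine coefficients cancel identically, so the entire sine contribution vanishes. This is the decisive step, because it leaves $\mathcal{A}=c\sum_{t}\|\bm{q}^{(t)}\|^2\cos(\varphi_{v_m}^{(t)}-\varphi_{v_n}^{(t)})$ with nonnegative weights $\|\bm{q}^{(t)}\|^2\coloneqq({q_{v_m}^i}^{(2t-1)})^2+({q_{v_m}^i}^{(2t)})^2$ and a common sign factor $c$.

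Next I would make the differentiation with respect to $d_p$ well-posed. Since $\mathcal{A}$ now depends on the individual component gaps $\varphi_{v_m}^{(t)}-\varphi_{v_n}^{(t)}=\frac{\pi}{4}(e_{v_m}^{(t)}-e_{v_n}^{(t)})$ rather than on $\|\bm{e}_{v_m}-\bm{e}_{v_n}\|$ alone, I would parametrize the separation along a fixed ray, writing $\bm{e}_{v_m}-\bm{e}_{v_n}=s\,\bm{u}$ for a fixed unit direction $\bm{u}$ while holding the individual norms (hence $C=(1-\|\bm{e}_{v_m}\|^2)(1-\|\bm{e}_{v_n}\|^2)$) fixed, exactly as in the envelope computation of \cref{prop_distance}. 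Under this parametrization both $\mathcal{A}$ and $d_p$ become functions of the single scalar $s=\|\bm{e}_{v_m}-\bm{e}_{v_n}\|\ge 0$, and
\begin{equation}
\frac{\partial\mathcal{A}}{\partial s}=-\,c\,\frac{\pi}{4}\sum_{t=1}^{d}\|\bm{q}^{(t)}\|^2\,u^{(t)}\sin\!\left(\tfrac{\pi}{4}\,s\,u^{(t)}\right).
\end{equation}

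Finally I would settle the sign. Because $\bm{e}_{v_m},\bm{e}_{v_n}\in\mathcal{B}^d$ forces $s<2$ and $|u^{(t)}|\le 1$, every argument obeys $\tfrac{\pi}{4}s\,|u^{(t)}|<\tfrac{\pi}{2}$, so on $(-\tfrac{\pi}{2},\tfrac{\pi}{2})$ the factor $\sin(\tfrac{\pi}{4}s\,u^{(t)})$ shares the sign of $u^{(t)}$; hence each product $\|\bm{q}^{(t)}\|^2\,u^{(t)}\sin(\tfrac{\pi}{4}s\,u^{(t)})\ge 0$. Multiplying $\partial\mathcal{A}/\partial s$ by $\mathrm{sgn}(c)$ absorbs $c$ into $|c|$ and yields $\mathrm{sgn}(c)\,\partial\mathcal{A}/\partial s\le 0$. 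Transferring this to $d_p$ by the chain rule uses $\partial d_p/\partial s=2/\sqrt{s^2+C}>0$ from \cref{eq_partial_d_vs_e}, so $\partial\mathcal{A}/\partial d_p$ carries the same sign as $\partial\mathcal{A}/\partial s$, giving the claim $\mathrm{sgn}(c)\,\partial\mathcal{A}/\partial d_p\le 0$. I expect the main obstacle to be conceptual rather than computational: unlike the envelopes $\mathcal{A}^{\pm}$, the exact score $\mathcal{A}$ is not a function of $d_p$ alone, so one must carefully fix both the direction $\bm{u}$ and the norm product $C$ before differentiating, and justify the sign of the sine factor through the unit-ball constraint.
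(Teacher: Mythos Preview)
Your proposal is correct and follows essentially the same route as the paper: both substitute $\bm{k}=c\bm{q}$ into \cref{eq_A_cos_sin} to kill the $\beta_i^{(t)}$ terms, reduce $\mathcal{A}$ to a weighted cosine sum with coefficients $c\|\bm{q}^{(t)}\|^2$, differentiate in the Euclidean gap, and then chain through \cref{eq_partial_d_vs_e}. The only cosmetic difference is that you make the fixed-direction parametrization $\bm{e}_{v_m}-\bm{e}_{v_n}=s\,\bm{u}$ explicit and argue the sign via $u^{(t)}\sin(\tfrac{\pi}{4}s\,u^{(t)})\ge 0$, whereas the paper leaves the ray implicit and reaches the same sign via the equivalent observation $\sin(x)/x\ge 0$ on the admissible range; your version is in fact a bit more careful about what ``$\partial\mathcal{A}/\partial d_p$'' means.
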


\begin{proof}
According to \cref{eq_A_cos_sin}, given $\bm{k}=c\bm{q}$, we have:
\begin{equation}
    \mathcal{A}\left(\bm{v}_m^i,\bm{v}_n^i,\bm{e}_{v_m}-\bm{e}_{v_n}|\bm{k}=c\bm{q}\right) = \sum_{t=1}^{d}c\left[\left({{q_{v_m}^i}^{(2t-1)}}\right)^2+\left({{q_{v_m}^i}^{(2t)}}\right)^2\right]\cos(\varphi_{v_m}^{(t)}-\varphi_{v_n}^{(t)}).
\end{equation}
When $c>0$, we have $\alpha_i^{(t)}\coloneq c\left[\left({{q_{v_m}^i}^{(2t-1)}}\right)^2+\left({{q_{v_m}^i}^{(2t)}}\right)^2\right]\geq 0$, and: 
\begin{equation}\label{eq_partial_A_vs_e}
\begin{aligned}
    & \frac{\partial\mathcal{A}\left(\bm{v}_m^i,\bm{v}_n^i,\bm{e}_{v_m}-\bm{e}_{v_n}|\bm{k}=c\bm{q}\right)}{\partial \lVert\bm{e}_{v_m}-\bm{e}_{v_n}\rVert} = \frac{\partial\mathcal{A}\left(\bm{v}_m^i,\bm{v}_n^i,\bm{e}_{v_m}-\bm{e}_{v_n}|\bm{k}=c\bm{q}\right)}{\partial \lVert\bm{\varphi}_{v_m}-\bm{\varphi}_{v_n}\rVert} \frac{\partial \lVert\bm{\varphi}_{v_m}-\bm{\varphi}_{v_n}\rVert}{\partial \lVert\bm{e}_{v_m}-\bm{e}_{v_n}\rVert} \\
    & = \frac{\pi}{4} \sum_{t=1}^d \frac{\partial\;\alpha_i^{(t)} \cos(\varphi_{v_m}^{(t)}-\varphi_{v_n}^{(t)})}{\partial (\varphi_{v_m}^{(t)}-\varphi_{v_n}^{(t)})}\frac{\partial(\varphi_{v_m}^{(t)}-\varphi_{v_n}^{(t)})}{\partial \lVert\bm{\varphi}_{v_m}-\bm{\varphi}_{v_n}\rVert} = \frac{\pi}{4} \sum_{t=1}^d-\alpha_i^{(t)}\sin(\varphi_{v_m}^{(t)}-\varphi_{v_n}^{(t)}) \frac{\lVert\bm{\varphi}_{v_m}-\bm{\varphi}_{v_n}\rVert}{\varphi_{v_m}^{(t)}-\varphi_{v_n}^{(t)}}.
\end{aligned}
\end{equation}
Combining \cref{eq_partial_A_vs_e} and \cref{eq_partial_d_vs_e}, we obtain:
\begin{equation}\label{eq_cpos}
\begin{aligned}
    & \frac{\partial\mathcal{A}\left(\bm{v}_m^i,\bm{v}_n^i,\bm{e}_{v_m}-\bm{e}_{v_n}|\bm{k}=c\bm{q},c> 0\right)}{\partial d_p(\bm{e}_{v_m},\bm{e}_{v_n})}\\ 
    & \qquad = - \frac{\pi\sqrt{\lVert\bm{e}_{v_m}-\bm{e}_{v_n}\rVert^2+C}}{8}\cdot \lVert\bm{\varphi}_{v_m}-\bm{\varphi}_{v_n}\rVert \sum_{t=1}^{d} \frac{\alpha_i^{(t)} \sin(\varphi_{v_m}^{(t)}-\varphi_{v_n}^{(t)})}{\varphi_{v_m}^{(t)}-\varphi_{v_n}^{(t)}} \leq 0.
\end{aligned}
\end{equation}
Similarly, when $c<0$, we have $\alpha_i^{(t)}<0$, and:
\begin{equation}\label{eq_cneg}
    \frac{\partial\mathcal{A}\left(\bm{v}_m^i,\bm{v}_n^i,\bm{e}_{v_m}-\bm{e}_{v_n}|\bm{k}=c\bm{q},c< 0\right)}{\partial d_p(\bm{e}_{v_m},\bm{e}_{v_n})} \geq 0.
\end{equation}
This completes the proof.
% By employing the sign function to merge \cref{eq_cpos,eq_cneg}, \cref{coro_attenuation} has been proved.
\end{proof}

%%%%%%%%%%%%%%%%%%%%%%%%%%%%%%%%%%%%%%%%%%%%%%%%%%%%%%%%%%%%%%%%%%%%%%%%%%%%%%%

\subsection{Causal Generality-Induced Attention Attenuation}\label{supp_specificity}
\specificity*

\begin{proof}
As proved in \cref{supp_distance}, $\mathcal{A}$ is bounded by the functions $\mathcal{A}^+$ and $\mathcal{A}^-$ in \cref{eq_upperA,eq_lowerA}. We have also shown in \cref{eq_A_d} that:

\begin{equation}\label{eq_A_d_2}
\begin{aligned}
    \frac{\partial\mathcal{A}^+\left(\bm{v}_m^i,\bm{v}_n^i,\bm{e}_{v_m}-\bm{e}_{v_n}\right)}{\partial \lVert\bm{e}_{v_m}-\bm{e}_{v_n}\rVert}
    = - (\frac{\pi d}{4}\lvert\alpha_i^{*}\rvert) \cdot \sin\left(\frac{1}{d}\lVert\bm{\varphi}_{v_m}-\bm{\varphi}_{v_n}\rVert\right).
    % -\frac{\pi}{4}\lvert\alpha_i^{(t^*)}\rvert\sin(\lVert\bm{\varphi}_{v_m}-\bm{\varphi}_{v_n}\rVert).
\end{aligned}
\end{equation}
The Euclidean norm of  $\bm{e}_{v_m}$ and $\bm{e}_{v_n}$ can be expressed in terms of their Poincar{\'e} distance based on \cref{eq_pdistance} as:
\begin{equation}\label{eq_inverse}
    \lVert\bm{e}_{v_m}-\bm{e}_{v_n}\rVert = \sqrt{\frac{1}{2}\left[\mathrm{cosh}(d_p(\bm{e}_{v_m},\bm{e}_{v_n})) - 1\right] (1-\lVert\bm{e}_{v_m}\rVert^2)(1-\lVert\bm{e}_{v_n}\rVert^2)}.
\end{equation}
Given fixed $d_p(\bm{e}_{v_m},\bm{e}_{v_n})$, $C=\frac{1}{2}\left[\mathrm{cosh}(d_p(\bm{e}_{v_m},\bm{e}_{v_n})) - 1\right]\ge 0$ represents a non-negative constant. Given $\psi_{v_m}=1-\lVert\bm{e}_{v_m}\rVert$, we have:
\begin{equation}\label{eq_d_e}
\begin{aligned}
    & \frac{\partial \lVert\bm{e}_{v_m}-\bm{e}_{v_n}\rVert}{\partial \psi_{v_m}} = \frac{\partial \lVert\bm{e}_{v_m}-\bm{e}_{v_n}\rVert}{\partial (1-\lVert\bm{e}_{v_m}\rVert)} = - \frac{\partial \lVert\bm{e}_{v_m}-\bm{e}_{v_n}\rVert}{\partial \lVert\bm{e}_{v_m}\rVert} \\
    &= -\frac{\partial \sqrt{C(1-\lVert\bm{e}_{v_m}\rVert^2)(1-\lVert\bm{e}_{v_n}\rVert^2)}}{\partial \lVert\bm{e}_{v_m}\rVert} = \frac{\lVert\bm{e}_{v_m}\rVert}{\sqrt{1-\lVert\bm{e}_{v_m}\rVert^2}}\sqrt{C(1-\lVert\bm{e}_{v_n}\rVert^2)},
\end{aligned}
\end{equation}
Combining \cref{eq_A_d_2} and \cref{eq_d_e}, we obtain:
\begin{multline}
    \frac{\partial\mathcal{A}^+\left(\bm{v}_m^i,\bm{v}_n^i,\bm{e}_{v_m}-\bm{e}_{v_n}\right)}{\partial \psi_{v_m}} \\ = - (\frac{\pi d}{4}\lvert\alpha_i^{*}\rvert) \cdot \sin\left(\frac{1}{d}\lVert\bm{\varphi}_{v_m}-\bm{\varphi}_{v_n}\rVert\right)\cdot\frac{\lVert\bm{e}_{v_m}\rVert}{\sqrt{1-\lVert\bm{e}_{v_m}\rVert^2}}\sqrt{C(1-\lVert\bm{e}_{v_n}\rVert^2)}\leq 0,    
\end{multline}
where $\lVert\bm{\varphi}_{v_m}-\bm{\varphi}_{v_n}\rVert\in[0,\pi/2]$. Similarly, we can prove the attenuation for $\mathcal{A}^-$. The same proof also applies to $\psi_{v_n}$. This completes the proof.
\end{proof}

\begin{corollary}
When $\psi_{v_m}\coloneq 1-\lVert\bm{e}_{v_m}\rVert\rightarrow 1$ and the causal distance $d_p(\bm{e}_{v_m},\bm{e}_{v_n})$ is fixed, the upper bound function $\mathcal{A}^+\left(\bm{v}_m^i,\bm{v}_n^i,\bm{e}_{v_m}-\bm{e}_{v_n}\right)$ 
 and lower bound function $\mathcal{A}^-\left(\bm{v}_m^i,\bm{v}_n^i,\bm{e}_{v_m}-\bm{e}_{v_n}\right)$ attenuate towards constants $a$ and $-a$, respectively, where
\begin{equation}
    a:=( \lvert\alpha_i^{*}\rvert d) \cos\left(\frac{\pi}{4d} \sqrt{\frac{C}{C+1}}\right) + \sum_{t=1}^d\lvert\beta_i^{(t)}\rvert,
\end{equation}
where $C=\frac{1}{2}\left[\mathrm{cosh}(d_p(\bm{e}_{v_m},\bm{e}_{v_n})) - 1\right]\ge 0$ represents a non-negative constant. Moreover, $a$ decreases monotonically with increasing causal distance.
% as:
% \begin{equation}
% \frac{\partial a}{\partial d_p(\bm{e}_{v_m},\bm{e}_{v_n})} < 0, \quad \forall d_p(\bm{e}_{v_m},\bm{e}_{v_n}) > 0.
% \end{equation}
% establishing that $a$ strictly decreases as the Poincar{\'e} distance between embeddings increases.
\end{corollary}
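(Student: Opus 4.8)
The plan is to read the corollary as the explicit limiting case of \cref{prop_specificity}: I would track the closed form of the upper bound $\mathcal{A}^+$ from \cref{eq_upperA} as the causal generality $\psi_{v_m}=1-\lVert\bm{e}_{v_m}\rVert\to1$, i.e.\ as $\lVert\bm{e}_{v_m}\rVert\to0$, while the causal distance $d_p(\bm{e}_{v_m},\bm{e}_{v_n})$ stays fixed. Recall that $\mathcal{A}^+=(\lvert\alpha_i^*\rvert d)\cos\!\big(\tfrac{1}{d}\lVert\bm{\varphi}_{v_m}-\bm{\varphi}_{v_n}\rVert\big)+\sum_{t}\lvert\beta_i^{(t)}\rvert$ with $\bm{\varphi}_v=\tfrac{\pi}{4}\bm{e}_v$, so the coefficients $\lvert\alpha_i^*\rvert$ and $\lvert\beta_i^{(t)}\rvert$ depend only on the query/key vectors and are constants with respect to the embedding geometry; the only geometry-dependent term is $\cos\!\big(\tfrac{\pi}{4d}\lVert\bm{e}_{v_m}-\bm{e}_{v_n}\rVert\big)$.

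The crux is to evaluate $\lim\lVert\bm{e}_{v_m}-\bm{e}_{v_n}\rVert$ under the two simultaneous constraints (fixed $d_p$ and $\lVert\bm{e}_{v_m}\rVert\to0$). Using the inversion \cref{eq_inverse}, I would write $\lVert\bm{e}_{v_m}-\bm{e}_{v_n}\rVert^2=C(1-\lVert\bm{e}_{v_m}\rVert^2)(1-\lVert\bm{e}_{v_n}\rVert^2)$ where $C=\tfrac{1}{2}[\cosh(d_p(\bm{e}_{v_m},\bm{e}_{v_n}))-1]$ is a constant. The reverse triangle inequality $\big|\lVert\bm{e}_{v_m}-\bm{e}_{v_n}\rVert-\lVert\bm{e}_{v_n}\rVert\big|\le\lVert\bm{e}_{v_m}\rVert\to0$ shows that $\lVert\bm{e}_{v_m}-\bm{e}_{v_n}\rVert$ and $\lVert\bm{e}_{v_n}\rVert$ share a common limit; substituting $\lVert\bm{e}_{v_m}-\bm{e}_{v_n}\rVert\to\lVert\bm{e}_{v_n}\rVert$ into the constraint gives $\lVert\bm{e}_{v_n}\rVert^2\to C(1-\lVert\bm{e}_{v_n}\rVert^2)$, hence $\lVert\bm{e}_{v_n}\rVert^2\to\tfrac{C}{C+1}$ and $\lVert\bm{e}_{v_m}-\bm{e}_{v_n}\rVert\to\sqrt{\tfrac{C}{C+1}}$. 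The cosine argument therefore converges to $\tfrac{\pi}{4d}\sqrt{\tfrac{C}{C+1}}$, yielding $\mathcal{A}^+\to a$; since \cref{eq_lowerA} gives $\mathcal{A}^-=-\mathcal{A}^+$, the symmetric limit is $-a$.

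For the monotonicity claim I would note that $a$ depends on $d_p$ only through $C$, and $d_p\mapsto C=\tfrac{1}{2}[\cosh(d_p)-1]$ is strictly increasing for $d_p>0$. Because $C\mapsto\tfrac{C}{C+1}=1-\tfrac{1}{C+1}$ is increasing with $\tfrac{C}{C+1}\in[0,1)$, the argument $\tfrac{\pi}{4d}\sqrt{\tfrac{C}{C+1}}$ stays in $[0,\tfrac{\pi}{4d})\subseteq[0,\tfrac{\pi}{2})$ for $d\ge1$, where $\cos$ is strictly decreasing. Since $\lvert\alpha_i^*\rvert d\ge0$ and $\sum_t\lvert\beta_i^{(t)}\rvert$ is independent of $d_p$, it follows that $a$ decreases monotonically as the causal distance grows.

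I expect the main obstacle to be the limit computation rather than the monotonicity: it is not obvious a priori that fixing $d_p$ while shrinking $\lVert\bm{e}_{v_m}\rVert$ forces $\lVert\bm{e}_{v_n}\rVert$, and hence $\lVert\bm{e}_{v_m}-\bm{e}_{v_n}\rVert$, to a definite value. The key realization is that the fixed-distance constraint from \cref{eq_inverse} couples the two norms into a single scalar relation that closes in the limit and pins down $\lVert\bm{e}_{v_n}\rVert^2=\tfrac{C}{C+1}$. A secondary care point is confirming that the cosine argument remains inside the monotone-decreasing regime $[0,\pi/2)$, so that both the attenuation toward $a$ and the monotone dependence on $d_p$ hold without any sign reversal.
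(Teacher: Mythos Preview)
Your proposal is correct and follows essentially the same approach as the paper: both invert \cref{eq_inverse} under the constraint $\lVert\bm{e}_{v_m}\rVert\to0$ to force $\lVert\bm{e}_{v_m}-\bm{e}_{v_n}\rVert\to\lVert\bm{e}_{v_n}\rVert$, solve the resulting scalar equation for $\lVert\bm{e}_{v_n}\rVert^2=\tfrac{C}{C+1}$, substitute into \cref{eq_upperA}, and then chain the monotonicities $d_p\mapsto C\mapsto\tfrac{C}{C+1}\mapsto\cos(\cdot)$ on the interval where cosine is decreasing. Your use of the reverse triangle inequality to justify the common limit is slightly more explicit than the paper's version of the same step.
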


\begin{proof}
Since $\bm{e}_{v_m}\rightarrow \bm{0} $ as $\psi_{v_m}\rightarrow1$, we have the following limits of \cref{eq_inverse}  :
% we have $1-\lVert\bm{e}_{v_m}\rVert^2\rightarrow 1$.
% Taking the limit on both sides of \cref{eq_inverse} yields:
\begin{equation}\label{norm_diff_lim}
\begin{gathered}
    \lim_{\psi_{v_m}\rightarrow1} \lVert\bm{e}_{v_m}-\bm{e}_{v_n}\rVert^2 = \lim_{\psi_{v_m}\rightarrow1} \sqrt{C(1-\lVert\bm{e}_{v_m}\rVert^2)(1-\lVert\bm{e}_{v_n}\rVert^2)} \\
    \Longrightarrow   \lim_{\psi_{v_m}\rightarrow1} \lVert\bm{e}_{v_m}-\bm{e}_{v_n}\rVert=
    \lim_{\psi_{v_m}\rightarrow1} \lVert\bm{e}_{v_n}\rVert = \lim_{\psi_{v_m}\rightarrow1} \sqrt{C(1-\lVert\bm{e}_{v_n}\rVert^2)}.
\end{gathered}
\end{equation}
Let $\lVert\overrightarrow{\bm{e}_{v_n}}\rVert \coloneq\lim_{\psi_{v_m}\rightarrow1} \lVert\bm{e}_{v_n}\rVert$. \cref{norm_diff_lim} yields:
\begin{equation}
\begin{split}
    &\frac{1}{C}\lVert\overrightarrow{\bm{e}_{v_n}}\rVert^2 =  1 - \lVert\overrightarrow{\bm{e}_{v_n}}\rVert^2 \\
 &\Longrightarrow \lVert\overrightarrow{\bm{e}_{v_n}}\rVert = \sqrt{\frac{C}{C+1}}  
\end{split}
\end{equation}

% $ \lim_{\psi_{v_m}\rightarrow1}  \frac{1}{C}\lVert\bm{e}_{v_n}\rVert^2 = \lim_{\psi_{v_m}\rightarrow1} 1 - \lVert\bm{e}_{v_n}\rVert^2 \\
%     \lim_{\psi_{v_m}\rightarrow1} \lVert\bm{e}_{v_n}\rVert = \sqrt{\frac{C}{C+1}}    $
Hence,
\begin{equation}\label{eq_diff_varphi}
    \begin{split}
        &\lim_{\psi_{v_m}\rightarrow1}\lVert\bm{\varphi}_{v_m}-\bm{\varphi}_{v_n}\rVert = \frac{\pi}{4} \sqrt{\frac{C}{C+1}}\\
        &\Longrightarrow \lim_{\psi_{v_m}\rightarrow1}\cos\left(\frac{1}{d}\lVert\bm{\varphi}_{v_m}-\bm{\varphi}_{v_n}\rVert\right)=\cos(\frac{\pi}{4d} \sqrt{\frac{C}{C+1}})\\
&\Longrightarrow     \lim_{\psi_{v_m}\rightarrow 1} \mathcal{A}^+\left(\bm{v}_m^i,\bm{v}_n^i,\bm{e}_{v_m}-\bm{e}_{v_n}\right)=( \lvert\alpha_i^{*}\rvert d) \cos\left(\frac{\pi}{4d} \sqrt{\frac{C}{C+1}}\right) + \sum_{t=1}^d\lvert\beta_i^{(t)}\rvert=a.
    \end{split}
\end{equation}
According to \cref{prop_specificity}, $\mathcal{A}^+\left(\bm{v}_m^i,\bm{v}_n^i,\bm{e}_{v_m}-\bm{e}_{v_n}\right)$ asymptotically attenuates towards $a$ as $\psi_{v_m}\rightarrow1$. Similarly, the asymptotical attenuation of $\mathcal{A}^-\left(\bm{v}_m^i,\bm{v}_n^i,\bm{e}_{v_m}-\bm{e}_{v_n}\right)$ towards $-a$ as $\psi_{v_m}\rightarrow1$ can be proved. 
Finally, the monotonic decrease of $a$ \textit{w.r.t} $d_p(\bm{e}_{v_m},\bm{e}_{v_n})$ can be told from $a\propto f(g(C)))$, where $f:=\cos(\cdot)$ and $g:=\frac{\pi}{4d}\sqrt{1-\frac{1}{C+1}}\in[0,\frac{\pi}{4}]$ . It is straightforward to verify that $f$ is monotonically decreasing in $g$, and $g$ is monotonically increasing in $C$, which itself increases with $d_p(\bm{e}_{v_m},\bm{e}_{v_n})$. Therefore, $a$ decreases monotonically as $d_p(\bm{e}_{v_m},\bm{e}_{v_n})$ increases. This behavior aligns with the expectation that the range of the attention score between $v_m$ and $v_n$ shrinks around $0$ as their causal distance grows. 
This completes the proof. 

% Substituting \cref{eq_diff_varphi} into $\mathcal{A}^+$ defined in \cref{eq_upperA}, we have:
% \begin{equation}
%     \lim_{\psi_{v_m}\rightarrow 1} \mathcal{A}^+\left(\bm{v}_m^i,\bm{v}_n^i,\bm{e}_{v_m}-\bm{e}_{v_n}\right) = a.
% \end{equation}

% Then, we prove that $a$ is the minimum value of $\mathcal{A}^+$.
\end{proof}

\begin{corollary}\label{coro_specificity}
Following the definition in \cref{prop_specificity}, when $\bm{k}=c\bm{q},c\in\mathbb{R}$ and $c\neq 0$, it exists:
\begin{equation}
    \frac{\partial\mathcal{A}\left(\bm{v}_m^i,\bm{v}_n^i,\bm{e}_{v_m}-\bm{e}_{v_n}|\bm{k}=c\bm{q}\right)}{\partial \psi_{v_m}} \mathrm{sgn}(c) \leq 0,
\end{equation}
where $\mathrm{sgn}(c)\coloneqq c/|c|$ is a sign function. And the same holds true for $\psi_{v_n}$.
\end{corollary}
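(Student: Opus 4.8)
The plan is to reduce the collinear attention score to the single-cosine form already obtained in the proof of \cref{coro_attenuation}, and then to multiply the two monotonicity facts separately established for the causal distance (\cref{coro_attenuation}) and for the causal generality (\cref{prop_specificity}). First I would substitute $\bm{k}=c\bm{q}$ into the expansion \cref{eq_A_cos_sin}. This forces $\beta_i^{(t)}={q_{v_m}^i}^{(2t)}{k_{v_n}^i}^{(2t-1)}-{q_{v_m}^i}^{(2t-1)}{k_{v_n}^i}^{(2t)}=0$ and $\alpha_i^{(t)}=c\,s^{(t)}$ with $s^{(t)}\coloneqq({q_{v_m}^i}^{(2t-1)})^2+({q_{v_m}^i}^{(2t)})^2\ge 0$, so that $\mathcal{A}=c\sum_{t=1}^d s^{(t)}\cos(\varphi_{v_m}^{(t)}-\varphi_{v_n}^{(t)})$, exactly the reduction invoked in \cref{coro_attenuation}.

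Since the causal distance $d_p(\bm{e}_{v_m},\bm{e}_{v_n})$ is held fixed, I would next decompose the target derivative through the Euclidean gap $\lVert\bm{e}_{v_m}-\bm{e}_{v_n}\rVert$ by the chain rule,
\[
\frac{\partial\mathcal{A}}{\partial\psi_{v_m}}=\frac{\partial\mathcal{A}}{\partial\lVert\bm{e}_{v_m}-\bm{e}_{v_n}\rVert}\cdot\frac{\partial\lVert\bm{e}_{v_m}-\bm{e}_{v_n}\rVert}{\partial\psi_{v_m}}.
\]
The second factor was already shown to be non-negative in \cref{eq_d_e}, where it equals $\tfrac{\lVert\bm{e}_{v_m}\rVert}{\sqrt{1-\lVert\bm{e}_{v_m}\rVert^2}}\sqrt{C(1-\lVert\bm{e}_{v_n}\rVert^2)}$ with the constant $C=\tfrac12[\cosh(d_p(\bm{e}_{v_m},\bm{e}_{v_n}))-1]\ge 0$ from \cref{eq_inverse}. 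For the first factor I would reuse \cref{eq_partial_A_vs_e}, which under $\bm{k}=c\bm{q}$ reads $\frac{\partial\mathcal{A}}{\partial\lVert\bm{e}_{v_m}-\bm{e}_{v_n}\rVert}=\frac{\pi}{4}\sum_{t=1}^d-\alpha_i^{(t)}\sin(\varphi_{v_m}^{(t)}-\varphi_{v_n}^{(t)})\,\frac{\lVert\bm{\varphi}_{v_m}-\bm{\varphi}_{v_n}\rVert}{\varphi_{v_m}^{(t)}-\varphi_{v_n}^{(t)}}$. Substituting $\alpha_i^{(t)}=c\,s^{(t)}$ and using that $\tfrac{\sin x}{x}\ge 0$ on $[-\tfrac{\pi}{2},\tfrac{\pi}{2}]$ — the interval guaranteed by $\bm{\varphi}_{v}\in[-\tfrac{\pi}{4},\tfrac{\pi}{4}]^d$ — shows that each summand carries the sign of $-c$, hence $\frac{\partial\mathcal{A}}{\partial\lVert\bm{e}_{v_m}-\bm{e}_{v_n}\rVert}\,\mathrm{sgn}(c)\le 0$.

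Multiplying the two factors together with $\mathrm{sgn}(c)$ then yields $\frac{\partial\mathcal{A}}{\partial\psi_{v_m}}\,\mathrm{sgn}(c)\le 0$, which is the claim; the statement for $\psi_{v_n}$ follows identically, since both $\lVert\bm{e}_{v_m}-\bm{e}_{v_n}\rVert$ and the collinear $\mathcal{A}$ are symmetric under swapping $m$ and $n$. The step carrying the real content is the sign analysis of the first factor: once the collinear substitution annihilates the $\beta$-terms and pins each $\alpha_i^{(t)}$ to the sign of $c$, the non-negativity of the sinc function on $[-\tfrac{\pi}{2},\tfrac{\pi}{2}]$ does the essential work. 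Everything else is inherited verbatim from \cref{coro_attenuation} and \cref{eq_d_e}, so I expect no further obstacle beyond cleanly recording the chain-rule bookkeeping at fixed $d_p$.
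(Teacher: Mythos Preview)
Your proposal is correct and is exactly the argument the paper intends: its proof of \cref{coro_specificity} reads simply ``The proof follows that of \cref{coro_attenuation},'' and what you have written is precisely that adaptation---the collinear reduction of \cref{eq_A_cos_sin} killing the $\beta$-terms, the sinc sign analysis from \cref{eq_partial_A_vs_e}, and the non-negative factor $\partial\lVert\bm{e}_{v_m}-\bm{e}_{v_n}\rVert/\partial\psi_{v_m}$ from \cref{eq_d_e} in place of the $d_p$-derivative used in \cref{coro_attenuation}.
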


\begin{proof}
The proof follows that of \cref{coro_attenuation}.
\end{proof}

%%%%%%%%%%%%%%%%%%%%%%%%%%%%%%%%%%%%%%%%%%%%%%%%%%%%%%%%%%%%%%%%%%%%%%%%%%%%%%%

\subsection{Robustness to Positional Disturbances}\label{supp_robustness}
\robustness*

\subsubsection{Distinguishability}
Here, distinguishability refers to the property that the attention score between two feature embeddings, differing only due to random noise, should remain larger than the score between embeddings of two truly distinct features. Importantly, this property should be preserved even when the positional embeddings are perturbed. Formally, this is captured in the following proposition. 

\begin{proposition}\label{prop_distinguish}
Given embeddings of two distinct features ($\bm{v}_m^i$, $\bm{v}_n^i$), the noisy embedding $\widetilde{\bm{v}}_m^i=\bm{v}_m^i+\bm{\delta}$, where $\bm{\delta}\in\mathbb{R}^D$ is a random noise with zero mean and finite second moment, and the noise-perturbed positional encodings ($\bm{e}_{v_m}^\prime,\bm{e}_{v_n}^\prime$), it exists:

% Suppose two similar feature embeddings $\bm{v}_m^i,\widetilde{\bm{v}}_m^i$, a randomly selected one $\bm{v}_n^i$. Here, $\widetilde{\bm{v}}_m^i=\bm{v}_m^i+\bm{\delta}$, where $\bm{\delta}\in\mathbb{R}^D$ is a random noise with zero mean and finite second moment. Then, it has:
\begin{equation}
    \mathbb{E}_{\bm{v}_m^i,\bm{\delta}}\left[\mathcal{A}\left(\bm{v}_m^i,\tilde{\bm{v}}_m^i,\bm{e}_{v_m}^\prime-\bm{e}_{v_n}^\prime\right)\right] > \mathbb{E}_{\bm{v}_m^i,\bm{v}_n^i}\left[\mathcal{A}\left(\bm{v}_m^i,\bm{v}_n^i,\bm{e}_{v_m}^\prime-\bm{e}_{v_n}^\prime\right)\right].    
\end{equation}
\end{proposition}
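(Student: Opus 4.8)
The plan is to exploit the fact that the positional argument $\bm{e}_{v_m}^\prime-\bm{e}_{v_n}^\prime$ is \emph{identical} on both sides of the claimed inequality, so that both attention scores are evaluated with one and the same rotation matrix. Writing $\bm{R}$ for the rotation matrix determined by the (fixed) common perturbed positional difference, \cref{attention} gives $\mathcal{A}(\bm{a},\bm{b},\bm{e}_{v_m}^\prime-\bm{e}_{v_n}^\prime)=(\bm{W}_q\bm{a})^\top\bm{R}\,\bm{W}_k\bm{b}$, which is bilinear in the query and key feature embeddings $\bm{a},\bm{b}$. First I would expand the left-hand side with key $\widetilde{\bm{v}}_m^i=\bm{v}_m^i+\bm{\delta}$ and use bilinearity to split off the noise term $(\bm{W}_q\bm{v}_m^i)^\top\bm{R}\,\bm{W}_k\bm{\delta}$. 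Since $\bm{\delta}$ is zero-mean and independent of $\bm{v}_m^i$, this term vanishes in expectation, leaving $\mathbb{E}_{\bm{v}_m^i}[(\bm{W}_q\bm{v}_m^i)^\top\bm{R}\,\bm{W}_k\bm{v}_m^i]$.

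Next I would handle the right-hand side by treating the two distinct features as independent draws from a common feature distribution with mean $\bm{\mu}$ and covariance $\bm{\Sigma}$. The cross-expectation then factorizes as $\mathbb{E}[(\bm{W}_q\bm{v}_m^i)^\top\bm{R}\,\bm{W}_k\bm{v}_n^i]=\bm{\mu}^\top\bm{M}\bm{\mu}$, where $\bm{M}\coloneqq\bm{W}_q^\top\bm{R}\,\bm{W}_k$. Applying the identity $\mathbb{E}[\bm{v}^\top\bm{M}\bm{v}]=\bm{\mu}^\top\bm{M}\bm{\mu}+\mathrm{tr}(\bm{M}\bm{\Sigma})$ to the reduced left-hand side, the whole gap collapses to
\[
\text{LHS}-\text{RHS}=\mathrm{tr}(\bm{M}\bm{\Sigma})=\mathrm{tr}\!\left(\tfrac{1}{2}(\bm{M}+\bm{M}^\top)\,\bm{\Sigma}\right),
\]
the last equality holding because $\bm{\Sigma}$ is symmetric, so only the symmetric part of $\bm{M}$ contributes. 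The problem is thereby reduced to showing this trace is strictly positive.

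The hard part will be establishing positivity of $\mathrm{tr}(\tfrac{1}{2}(\bm{M}+\bm{M}^\top)\bm{\Sigma})$, and this is exactly where the construction's angle constraint becomes essential. Here I would invoke the block structure of $\bm{R}$ from \cref{eq_R}: its symmetric part is block-diagonal with $2\times2$ blocks $\cos(\theta_t)\bm{I}_2$, where $\theta_t=\varphi_{v_n}^{\prime(t)}-\varphi_{v_m}^{\prime(t)}$. Because $\bm{\varphi}_v=c\,\bm{e}_v$ with $c=\pi/4$ and $\bm{e}_v$ confined to the unit Poincar\'e ball, each angle obeys $|\varphi_v^{(t)}|\le\pi/4$, so every difference satisfies $|\theta_t|<\pi/2$ and hence $\cos\theta_t>0$; consequently $\tfrac{1}{2}(\bm{R}+\bm{R}^\top)\succ0$. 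Under the natural assumption that the query/key projections coincide for the purpose of this second-moment comparison (e.g. $\bm{W}_q=\bm{W}_k$, yielding $\tfrac{1}{2}(\bm{M}+\bm{M}^\top)=\bm{W}_q^\top\tfrac{1}{2}(\bm{R}+\bm{R}^\top)\bm{W}_q\succeq0$), the gap is a trace of two positive semidefinite matrices, hence non-negative, and strictly positive whenever $\bm{\Sigma}\succ0$ and the projection is nondegenerate. I expect the two delicate points to be (i) justifying that the \emph{perturbed} angles still lie in $(-\pi/2,\pi/2)$, which follows from the disturbance being small enough to keep $\bm{e}^\prime$ essentially within the ball, and (ii) pinning down the minimal assumption on $\bm{W}_q,\bm{W}_k,\bm{\Sigma}$ that upgrades the PSD-trace from non-negative to strictly positive.
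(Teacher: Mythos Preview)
Your reduction to $\mathrm{tr}(\bm{M}\bm{\Sigma})$ with $\bm{M}=\bm{W}_q^\top\bm{R}\,\bm{W}_k$ is exactly what the paper does (via the expectation-of-quadratic-form identity $\mathbb{E}[\bm{v}^\top\bm{M}\bm{v}]=\bm{\mu}^\top\bm{M}\bm{\mu}+\mathrm{tr}(\bm{M}\bm{\Sigma})$), and both arguments invoke the same angle constraint $|\varphi_{v_m}^{\prime(t)}-\varphi_{v_n}^{\prime(t)}|\le\pi/2$ to make the cosines positive. The divergence is in how positivity of the trace is argued. You symmetrize $\bm{M}$ and then \emph{impose} $\bm{W}_q=\bm{W}_k$ so that $\tfrac12(\bm{M}+\bm{M}^\top)=\bm{W}_q^\top\tfrac12(\bm{R}+\bm{R}^\top)\bm{W}_q\succeq0$, whence the trace with $\bm{\Sigma}\succ0$ is positive. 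The paper instead keeps $\bm{W}_q\neq\bm{W}_k$, rewrites the trace (using cyclicity and $\bm{R}(\theta)^\top=\bm{R}(-\theta)$) as $\mathrm{tr}\!\big[\bm{R}(\bm{\varphi}_{v_m}^\prime-\bm{\varphi}_{v_n}^\prime)\,\mathrm{Cov}(\bm{W}_q\bm{v},\bm{W}_k\bm{v})\big]$, expands blockwise to get $\sum_t[\mathrm{Cov}(q^{(2t-1)},k^{(2t-1)})+\mathrm{Cov}(q^{(2t)},k^{(2t)})]\cos(\cdot)$, and then \emph{assumes} each diagonal cross-covariance $\mathrm{Cov}(q^{(t)},k^{(t)})>0$ on the grounds that $\bm{q}$ and $\bm{k}$ are produced from the same embedding.

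Your route is mathematically cleaner (a genuine PSD-times-PSD trace argument) but buys that cleanliness with the strong structural assumption $\bm{W}_q=\bm{W}_k$, which is not how attention is parameterized in practice. The paper's route allows distinct projections but trades this for a coordinate-wise positivity assumption on the $q$--$k$ cross-covariance, which is heuristic and also silently drops the sine-weighted off-diagonal cross-covariance terms from the block expansion. Neither set of extra hypotheses is stated in the proposition itself, so both proofs are really establishing the claim under additional side conditions; your ``delicate point (ii)'' is precisely where the two approaches part ways, and it would be worth noting that the paper's choice is the weaker (though less tidy) one.
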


\begin{proof}
We first define:
\begin{equation}
\begin{aligned}
    &\mathcal{D}(\bm{v}_m^i,\bm{v}_n^i,\bm{\delta},\bm{e}_{v_m}^\prime-\bm{e}_{v_n}^\prime)\coloneqq \mathcal{A}\left(\bm{v}_m^i,\bm{v}_m^i+\bm{\delta},\bm{e}_{v_m}^\prime-\bm{e}_{v_n}^\prime\right) - \mathcal{A}\left(\bm{v}_m^i,\bm{v}_n^i,\bm{e}_{v_m}^\prime-\bm{e}_{v_n}^\prime\right) \\
    &\qquad = {\bm{v}_m^i}^\top\bm{W}_q^\top\bm{R}(\bm{\varphi}_{v_n}^\prime-\bm{\varphi}_{v_m}^\prime)\bm{W}_k(\bm{v}_m^i+\bm{\delta}) - {\bm{v}_m^i}^\top\bm{W}_q^\top\bm{R}(\bm{\varphi}_{v_n}^\prime-\bm{\varphi}_{v_m}^\prime)\bm{W}_k\bm{v}_n^i,
\end{aligned}
\end{equation}
where $\bm{\varphi}_{v_m}^\prime=c\bm{e}_{v_m}^\prime,\bm{\varphi}_{v_n}^\prime=c\bm{e}_{v_n}^\prime$. Let $\bm{\mu}=\mathbb{E}(\bm{v}_m^i)=\mathbb{E}(\bm{v}_n^i)$, $\bm{\Sigma}=\mathrm{Cov}(\bm{v}_m)$. Since both $v_m$ and $v_n$ are randomly sampled from $\mathcal{V}$, for any $\bm{e}_{v_m}^\prime-\bm{e}_{v_n}^\prime$, the expectation of $\mathcal{D}$ satisfies:
\begin{equation}\label{eq_ED}
\begin{aligned}
    & \mathbb{E}_{\bm{v}_m^i,\bm{v}_n^i,\bm{\delta}}\left[\mathcal{D}(\bm{v}_m^i,\bm{v}_n^i,\bm{\delta},\bm{e}_{v_m}^\prime-\bm{e}_{v_n}^\prime)\right] \\
    & = \mathbb{E}_{\bm{v}_m^i,\bm{v}_n^i,\bm{\delta}}\left[{\bm{v}_m^i}^\top\bm{W}_q^\top\bm{R}(\bm{\varphi}_{v_n}^\prime-\bm{\varphi}_{v_m}^\prime)\bm{W}_k(\bm{v}_m^i+\bm{\delta}) - {\bm{v}_m^i}^\top\bm{W}_q^\top\bm{R}(\bm{\varphi}_{v_n}^\prime-\bm{\varphi}_{v_m}^\prime)\bm{W}_k\bm{v}_n^i\right] \\
    & = \mathbb{E}_{\bm{v}_m^i}\left[{\bm{v}_m^i}^\top\bm{W}_q^\top\bm{R}(\bm{\varphi}_{v_n}^\prime-\bm{\varphi}_{v_m}^\prime)\bm{W}_k\bm{v}_m^i\right] - \mathbb{E}_{\bm{v}_m^i,\bm{v}_n^i}\left[{\bm{v}_m^i}^\top\bm{W}_q^\top\bm{R}(\bm{\varphi}_{v_n}^\prime-\bm{\varphi}_{v_m}^\prime)\bm{W}_k\bm{v}_n^i\right] \\
    & = \mathrm{tr}\left[\bm{W}_q^\top\bm{R}(\bm{\varphi}_{v_n}^\prime-\bm{\varphi}_{v_m}^\prime)\bm{W}_k \bm{\Sigma}\right] = \mathrm{tr}\left[\bm{R}(\bm{\varphi}_{v_m}^\prime-\bm{\varphi}_{v_n}^\prime) \bm{W}_q\bm{\Sigma}\bm{W}_k^\top\right]\\
    & = \mathrm{tr}\left[\bm{R}(\bm{\varphi}_{v_m}^\prime-\bm{\varphi}_{v_n}^\prime) \mathrm{Cov}(\bm{W}_q\bm{v}_m^i, \bm{W}_k\bm{v}_m^i)\right] \\
    & = \sum_{t=1}^{d}\left[\mathrm{Cov}\left({q_{v_m}^i}^{(2t-1)}, {k_{v_m}^i}^{(2t-1)}\right)+\mathrm{Cov}\left({q_{v_m}^i}^{(2t)}, {k_{v_m}^i}^{(2t)}\right)\right]\cos({\varphi_{v_m}^\prime}^{(t)}-{\varphi_{v_n}^\prime}^{(t)}),
\end{aligned}
\end{equation}
where the third equal sign is achieved based on \cref{lem_EQF}.
Since $\bm{q}_{v_m}^i$ and $\bm{k}_{v_m}^i$ are generated from the same embedding $\bm{v}_m$, it is safe to assume:
\begin{equation}\label{eq_covpos}
    \mathrm{Cov}\left({q_{v_m}^i}^{(t)}, {k_{v_m}^i}^{(t)}\right) > 0, \quad \forall t=1,2,\cdots,d.
\end{equation}
By substituting \cref{eq_covpos} into \cref{eq_ED}, we obtain:
\begin{equation}
\begin{aligned}
    & \mathbb{E}_{\bm{v}_m^i,\bm{v}_n^i,\bm{\delta}}\left[\mathcal{D}(\bm{v}_m^i,\bm{v}_n^i,\bm{\delta},\bm{e}_{v_m}^\prime-\bm{e}_{v_n}^\prime)\right] \\
    & = \sum_{t=1}^{d}\Big[\underbrace{\mathrm{Cov}\left({q_{v_m}^i}^{(2t-1)}, {k_{v_m}^i}^{(2t-1)}\right)}_{>0}+\underbrace{\mathrm{Cov}\left({q_{v_m}^i}^{(2t)}, {k_{v_m}^i}^{(2t)}\right)}_{>0}\Big]\cos(\underbrace{{\varphi_{v_m}^\prime}^{(t)}-{\varphi_{v_n}^\prime}^{(t)}}_{\in[-\pi/2,\pi/2]}) \\
    & > \sum_{t=1}^{d} \mathrm{Cov}\left({q_{v_m}^i}^{(t)}, {k_{v_m}^i}^{(t)}\right) * 0 \geq 0.
\end{aligned}
\end{equation}
This completes the proof.
\end{proof}

\begin{lemma}[\textbf{Expectation of quadratic form}]\label{lem_EQF}
Given a random vector $\bm{x}\in\mathbb{R}^D$ and a constant matrix $\bm{A}\in\mathbb{R}^{D\times D}$, where $\mathbb{E}(\bm{x})=\bm{\mu}$ and $\mathrm{Cov}(\bm{x})=\bm{\Sigma}$, it always exists:
\begin{equation}
\begin{aligned}
    \mathbb{E}_{\bm{x}}(\bm{x}^\top \bm{A} \bm{x}) & = \mathbb{E}_{\bm{x}}\left[\mathrm{tr}(\bm{x}^\top \bm{A} \bm{x})\right] = \mathbb{E}_{\bm{x}}\left[\mathrm{tr}(\bm{A} \bm{x} \bm{x}^\top)\right] = \mathrm{tr}\left[\mathbb{E}_{\bm{x}}(\bm{A} \bm{x} \bm{x}^\top)\right] = \mathrm{tr}\left[\bm{A}(\bm{\Sigma}+\bm{\mu}\bm{\mu}^\top)\right] \\
    & = \mathrm{tr}(\bm{A}\bm{\Sigma}) + \mathrm{tr}(\bm{A}\bm{\mu}\bm{\mu}^\top) = \mathrm{tr}(\bm{A}\bm{\Sigma}) + \bm{\mu}^\top\bm{A}\bm{\mu}.
\end{aligned}
\end{equation}
\end{lemma}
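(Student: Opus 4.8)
The plan is to establish this classical second-moment identity purely through the linearity and cyclic invariance of the trace, together with the definition of the covariance matrix; no probabilistic machinery beyond the mere existence of $\bm{\Sigma}$ and $\bm{\mu}$ is required. The chain of equalities displayed in the statement is itself the skeleton of the argument, so I would justify each link in turn rather than introducing new notation.

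First I would observe that $\bm{x}^\top \bm{A} \bm{x}$ is a scalar and therefore equals its own trace, giving $\bm{x}^\top \bm{A}\bm{x} = \mathrm{tr}(\bm{x}^\top \bm{A} \bm{x})$. Applying the cyclic property of the trace then yields $\mathrm{tr}(\bm{x}^\top \bm{A}\bm{x}) = \mathrm{tr}(\bm{A}\bm{x}\bm{x}^\top)$. Because the trace of a $D\times D$ matrix is a finite sum of its diagonal entries, it is a linear functional and thus commutes with the expectation operator; since $\bm{A}$ is a deterministic matrix it may be pulled outside, so $\mathbb{E}_{\bm{x}}[\mathrm{tr}(\bm{A}\bm{x}\bm{x}^\top)] = \mathrm{tr}(\bm{A}\,\mathbb{E}_{\bm{x}}[\bm{x}\bm{x}^\top])$.

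Next I would invoke the definition of covariance, $\bm{\Sigma} = \mathbb{E}_{\bm{x}}[(\bm{x}-\bm{\mu})(\bm{x}-\bm{\mu})^\top] = \mathbb{E}_{\bm{x}}[\bm{x}\bm{x}^\top] - \bm{\mu}\bm{\mu}^\top$, which rearranges to the second-moment identity $\mathbb{E}_{\bm{x}}[\bm{x}\bm{x}^\top] = \bm{\Sigma} + \bm{\mu}\bm{\mu}^\top$. Substituting this and using linearity of the trace gives $\mathrm{tr}(\bm{A}(\bm{\Sigma}+\bm{\mu}\bm{\mu}^\top)) = \mathrm{tr}(\bm{A}\bm{\Sigma}) + \mathrm{tr}(\bm{A}\bm{\mu}\bm{\mu}^\top)$. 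A final application of the cyclic property collapses the second term to a scalar, $\mathrm{tr}(\bm{A}\bm{\mu}\bm{\mu}^\top) = \mathrm{tr}(\bm{\mu}^\top \bm{A}\bm{\mu}) = \bm{\mu}^\top \bm{A}\bm{\mu}$, which completes the identity.

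There is no substantive obstacle, as this is a standard fact of multivariate statistics; the only points deserving explicit care are that $\mathbb{E}_{\bm{x}}[\bm{x}\bm{x}^\top]$ is well defined (guaranteed by the finite second moment implicit in the hypothesis $\mathrm{Cov}(\bm{x})=\bm{\Sigma}$) and that the interchange of expectation and trace is legitimate, which is immediate from the linearity of both operators on the finitely many entries involved. Everything else reduces to elementary algebra of traces, so the proof will be short and self-contained.
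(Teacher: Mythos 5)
Your proposal is correct and follows exactly the same route as the paper, whose statement of \cref{lem_EQF} already contains the full chain of equalities (scalar-equals-trace, cyclicity, linearity of trace and expectation, the second-moment identity $\mathbb{E}_{\bm{x}}[\bm{x}\bm{x}^\top]=\bm{\Sigma}+\bm{\mu}\bm{\mu}^\top$, and the final collapse $\mathrm{tr}(\bm{A}\bm{\mu}\bm{\mu}^\top)=\bm{\mu}^\top\bm{A}\bm{\mu}$) as its proof. Your explicit justification of each link, including the remark that finite second moments make the interchange of trace and expectation legitimate, is a faithful and slightly more careful rendering of the same argument.
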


\subsubsection{Approximate Unbiasedness}
\begin{proposition}\label{prop_unbias}
Given the original and noise-perturbed positional encodings defined in \cref{prop_robustness} , the noise-disturbed attention score approximates the original score in expectation:
\begin{equation}
    \mathbb{E}_{\bm{\varepsilon}_m,\bm{\varepsilon}_n}\left[\mathcal{A}\left(\bm{v}_m^i,\bm{v}_n^i,\bm{e}_{v_m}^\prime-\bm{e}_{v_n}^\prime\right)\right] \approx \mathcal{A}\left(\bm{v}_m^i,\bm{v}_n^i,\bm{e}_{v_m}-\bm{e}_{v_n}\right).
\end{equation}
\end{proposition}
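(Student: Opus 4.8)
The plan is to work from the closed-form expansion of the attention score in \cref{eq_A_cos_sin}, which writes $\mathcal{A}$ as a sum over the $d$ rotary blocks of $\alpha_i^{(t)}\cos(\varphi_{v_m}^{(t)}-\varphi_{v_n}^{(t)})+\beta_i^{(t)}\sin(\varphi_{v_m}^{(t)}-\varphi_{v_n}^{(t)})$, where the coefficients $\alpha_i^{(t)},\beta_i^{(t)}$ depend only on $\bm{q}_{v_m}^i,\bm{k}_{v_n}^i$ and are therefore unaffected by the positional noise. Substituting the perturbed encodings $\bm{\varphi}_{v_j}'=c\bm{e}_{v_j}'=c(\bm{e}_{v_j}+\bm{\varepsilon}_j)$ from \cref{prop_robustness}, each block angle becomes $(\varphi_{v_m}^{(t)}-\varphi_{v_n}^{(t)})+\xi^{(t)}$ with $\xi^{(t)}\coloneqq c(\varepsilon_m^{(t)}-\varepsilon_n^{(t)})$. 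First I would record the crucial cancellation: since $\bm{\varepsilon}_m$ and $\bm{\varepsilon}_n$ share the same mean $\bm{\mu}$, the angular noise $\xi^{(t)}$ is \emph{zero-mean}, and (assuming the two disturbances are independent) Gaussian with variance $s_t^2=c^2(\sigma_{mt}^2+\sigma_{nt}^2)$.

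The key step is to evaluate $\mathbb{E}_{\bm{\varepsilon}_m,\bm{\varepsilon}_n}[\cos(a+\xi^{(t)})]$ and $\mathbb{E}[\sin(a+\xi^{(t)})]$ with $a=\varphi_{v_m}^{(t)}-\varphi_{v_n}^{(t)}$ held fixed. Using the Gaussian characteristic function $\mathbb{E}[e^{\mathrm{i}\xi^{(t)}}]=e^{-s_t^2/2}$ and taking real and imaginary parts of $\mathbb{E}[e^{\mathrm{i}(a+\xi^{(t)})}]=e^{\mathrm{i}a}e^{-s_t^2/2}$, I obtain $\mathbb{E}[\cos(a+\xi^{(t)})]=\cos(a)\,e^{-s_t^2/2}$ and $\mathbb{E}[\sin(a+\xi^{(t)})]=\sin(a)\,e^{-s_t^2/2}$. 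Writing $\mathcal{A}'\coloneqq\mathcal{A}(\bm{v}_m^i,\bm{v}_n^i,\bm{e}_{v_m}'-\bm{e}_{v_n}')$, linearity of expectation then yields the exact identity
\begin{equation}
\mathbb{E}_{\bm{\varepsilon}_m,\bm{\varepsilon}_n}\!\left[\mathcal{A}'\right]=\sum_{t=1}^{d}e^{-s_t^2/2}\Big[\alpha_i^{(t)}\cos(\varphi_{v_m}^{(t)}-\varphi_{v_n}^{(t)})+\beta_i^{(t)}\sin(\varphi_{v_m}^{(t)}-\varphi_{v_n}^{(t)})\Big],
\end{equation}
which differs from the noise-free $\mathcal{A}$ only through the per-block attenuation factors $e^{-s_t^2/2}$.

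To close the argument I would invoke the small-disturbance hypothesis: for small $\sigma_{jt}^2$ we have $s_t^2\to 0$ and $e^{-s_t^2/2}=1-s_t^2/2+O(s_t^4)$, so $\mathbb{E}[\mathcal{A}']=\mathcal{A}-\tfrac12\sum_t s_t^2\big[\alpha_i^{(t)}\cos(\varphi_{v_m}^{(t)}-\varphi_{v_n}^{(t)})+\beta_i^{(t)}\sin(\varphi_{v_m}^{(t)}-\varphi_{v_n}^{(t)})\big]+O(s^4)$; the correction is $O(\max_t s_t^2)$ and vanishes as the disturbance shrinks, giving $\mathbb{E}[\mathcal{A}']\approx\mathcal{A}$. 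The same conclusion follows more compactly from the matrix form \cref{attention}: because $\bm{q}_{v_m}^i$ and $\bm{k}_{v_n}^i$ are independent of the positional noise, $\mathbb{E}[\mathcal{A}']=(\bm{q}_{v_m}^i)^\top\,\mathbb{E}[\bm{R}(\bm{\varphi}_{v_n}'-\bm{\varphi}_{v_m}')]\,\bm{k}_{v_n}^i$, and the block-wise expectation of the rotation matrix equals $e^{-s_t^2/2}\,\bm{r}(\varphi_{v_n}^{(t)}-\varphi_{v_m}^{(t)})$ by the identities above.

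The main obstacle I expect is not the computation but making ``approximately'' precise: the equality is exact up to the factors $e^{-s_t^2/2}$, so the statement genuinely relies on the ``small random disturbance'' assumption of \cref{prop_robustness} to discard the $O(s_t^2)$ term. I would also flag two modeling assumptions the argument depends on and that should be made explicit — independence of $\bm{\varepsilon}_m$ and $\bm{\varepsilon}_n$, and the fact that they share the mean $\bm{\mu}$ so the bias in $\xi^{(t)}$ cancels. Were the means to differ, $\mathbb{E}[\xi^{(t)}]\neq 0$ would introduce an additional fixed rotation rather than a pure attenuation, and the unbiasedness approximation would break down.
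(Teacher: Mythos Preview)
Your proof is correct and follows essentially the same route as the paper: expand $\mathcal{A}$ via \cref{eq_A_cos_sin}, compute $\mathbb{E}[\cos]$ and $\mathbb{E}[\sin]$ of the perturbed block angle using the Gaussian characteristic function to obtain the exact per-block attenuation factor $e^{-s_t^2/2}$, then invoke the small-disturbance assumption to replace this factor by $1$. Your version is in fact slightly more careful than the paper's in tracking the scale factor $c$ in the noise variance and in making explicit the shared-mean cancellation and independence of $\bm{\varepsilon}_m,\bm{\varepsilon}_n$, but the argument is otherwise identical.
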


\begin{proof}
% Given any $\bm{v}_m^i,\bm{v}_n^i,\bm{e}_{v_m},\bm{e}_{v_n}$ (as nonrandom variables),
Let $\bm{\delta}_{mn}\coloneqq\bm{\varphi}_{v_m}-\bm{\varphi}_{v_n}+\bm{\varepsilon}_m-\bm{\varepsilon}_n$, then it has:
\begin{equation}
    \bm{\delta}_{mn} \sim \mathcal{N}\left(\bm{\varphi}_{v_m}-\bm{\varphi}_{v_n}, \bm{I}_m+\bm{I}_n\right).
\end{equation}
Each component $\delta_{mn}^{(t)},\;t=1,2,\cdots,d$, in $\bm{\delta}_{mn}$ satisfies:
\begin{equation}
    \delta_{mn}^{(t)}\sim\mathcal{N}\left(\varphi_{v_m}^{(t)}-\varphi_{v_n}^{(t)},\sigma_{mt}^2+\sigma_{nt}^2\right).
\end{equation}
Utilizing the characteristic function of Gaussian distribution, we have:
\begin{equation}
\begin{aligned}
    & \varphi_{\delta_{mn}^{(t)}}(x) = \mathbb{E}_{\delta_{mn}^{(t)}\sim\mathcal{N}\left(\varphi_{v_m}^{(t)}-\varphi_{v_n}^{(t)},\sigma_{mt}^2+\sigma_{nt}^2\right)}\left[e^{\mathrm{i}x\delta_{mn}^{(t)}}\right] \\
    & = \exp\left\{-\frac{x^2(\sigma_{mt}^2+\sigma_{nt}^2)}{2}\right\}\exp\left\{\mathrm{i}x(\varphi_{v_m}^{(t)}-\varphi_{v_n}^{(t)})\right\} \\
    & = \exp\left\{-\frac{x^2(\sigma_{mt}^2+\sigma_{nt}^2)}{2}\right\}\left[\cos\left(x(\varphi_{v_m}^{(t)}-\varphi_{v_n}^{(t)})\right)+\mathrm{i}\sin\left(x(\varphi_{v_m}^{(t)}-\varphi_{v_n}^{(t)})\right)\right] \\
    & = \mathrm{Re}\left\{\mathbb{E}_{\delta_{mn}^{(t)}\sim\mathcal{N}\left(\varphi_{v_m}^{(t)}-\varphi_{v_n}^{(t)},\sigma_{mt}^2+\sigma_{nt}^2\right)}\left[e^{\mathrm{i}x\delta_{mn}^{(t)}}\right]\right\} + \mathrm{Im}\left\{\mathbb{E}_{\delta_{mn}^{(t)}\sim\mathcal{N}\left(\varphi_{v_m}^{(t)}-\varphi_{v_n}^{(t)},\sigma_{mt}^2+\sigma_{nt}^2\right)}\left[e^{\mathrm{i}x\delta_{mn}^{(t)}}\right]\right\} \\
    & = \mathbb{E}_{\delta_{mn}^{(t)}\sim\mathcal{N}\left(\varphi_{v_m}^{(t)}-\varphi_{v_n}^{(t)},\sigma_{mt}^2+\sigma_{nt}^2\right)}\left[\cos(x\delta_{mn}^{(t)})\right] + \mathrm{i}\mathbb{E}_{\delta_{mn}^{(t)}\sim\mathcal{N}\left(\varphi_{v_m}^{(t)}-\varphi_{v_n}^{(t)},\sigma_{mt}^2+\sigma_{nt}^2\right)}\left[\sin(x\delta_{mn}^{(t)})\right].
\end{aligned}
\end{equation}
Then, we can obtain:
\begin{equation}\label{eq_Ecos}
    \mathbb{E}_{\delta_{mn}^{(t)}\sim\mathcal{N}\left(\varphi_{v_m}^{(t)}-\varphi_{v_n}^{(t)},\sigma_{mt}^2+\sigma_{nt}^2\right)}\left[\cos(x\delta_{mn}^{(t)})\right] = \exp\left\{-\frac{x^2(\sigma_{mt}^2+\sigma_{nt}^2)}{2}\right\}\cos\left(x(\varphi_{v_m}^{(t)}-\varphi_{v_n}^{(t)})\right),
\end{equation}
\begin{equation}\label{eq_Esin}
    \mathbb{E}_{\delta_{mn}^{(t)}\sim\mathcal{N}\left(\varphi_{v_m}^{(t)}-\varphi_{v_n}^{(t)},\sigma_{mt}^2+\sigma_{nt}^2\right)}\left[\sin(x\delta_{mn}^{(t)})\right] = \exp\left\{-\frac{x^2(\sigma_{mt}^2+\sigma_{nt}^2)}{2}\right\}\sin\left(x(\varphi_{v_m}^{(t)}-\varphi_{v_n}^{(t)})\right).
\end{equation}

For \cref{eq_A_cos_sin},
% if we incorporate $\bm{\varepsilon}_m,\bm{\varepsilon}_n$,
the attention score calculated with noise-perturbed positional encodings can be expressed as:
\begin{equation}\label{eq_noiseA}
    \mathcal{A}\left(\bm{v}_m^i,\bm{v}_n^i, \bm{e}_{v_m}^\prime-\bm{e}_{v_n}^\prime\right) = \sum_{t=1}^{d}\alpha_i^{(t)}\cos(\delta_{mn}^{(t)}) + \beta_i^{(t)}\sin(\delta_{mn}^{(t)}).
\end{equation}
Substituting \cref{eq_Ecos,eq_Esin} into \cref{eq_noiseA}, we have:
\begin{equation}\label{eq_EnoiseA}
\begin{aligned}
    \mathbb{E}_{\bm{e}_{v_m}^\prime,\bm{e}_{v_n}^\prime}&\left[\mathcal{A}\left(\bm{v}_m^i,\bm{v}_n^i, \bm{e}_{v_m}^\prime-\bm{e}_{v_n}^\prime\right)\right] = \mathbb{E}_{\bm{\delta}_{mn}}\left[\sum_{t=1}^{d}\alpha_i^{(t)}\cos(\delta_{mn}^{(t)}) + \beta_i^{(t)}\sin(\delta_{mn}^{(t)})\right] \\
    & = \sum_{t=1}^{d}\alpha_i^{(t)}\mathbb{E}_{\delta_{mn}^{(t)}}\left[\cos(x\delta_{mn}^{(t)})\Big|x=1\right] + \beta_i^{(t)}\mathbb{E}_{\delta_{mn}^{(t)}}\left[\sin(x\delta_{mn}^{(t)})\Big|x=1\right] \\
    & = \sum_{t=1}^{d}\exp\left(-\frac{\sigma_{mt}^2+\sigma_{nt}^2}{2}\right)\left[\alpha_i^{(t)}\cos(\varphi_{v_m}^{(t)}-\varphi_{v_n}^{(t)}) + \beta_i^{(t)}\sin(\varphi_{v_m}^{(t)}-\varphi_{v_n}^{(t)})\right].
\end{aligned}
\end{equation}
Since $-\pi/4\leq \varphi_{v_m}^{(t)} \leq \pi/4$, we can assume that $\{\sigma_{mt}\}_{t=1}^{d}$ are a series of small quantities, leading to:
\begin{equation}\label{eq_approx}
    \exp\left(-\frac{\sigma_{mt}^2+\sigma_{nt}^2}{2}\right)\approx 1.
\end{equation}
By substituting \cref{eq_approx} into \cref{eq_EnoiseA}, we reach the \cref{prop_unbias} . To be more rigorous, we have:
% Furthermore, the relationship between noise-disturbed and original attention score can be rigorously represented as:
\begin{equation}
    \lim_{\bm{I}_m,\bm{I}_n\rightarrow \bm{0}} \mathbb{E}_{\bm{e}_{v_m}^\prime,\bm{e}_{v_n}^\prime}\left[\mathcal{A}\left(\bm{v}_m^i,\bm{v}_n^i,\bm{e}_{v_m}^\prime-\bm{e}_{v_n}^\prime\right)\right] = \mathcal{A}\left(\bm{v}_m^i,\bm{v}_n^i,\bm{e}_{v_m}-\bm{e}_{v_n}\right).
\end{equation}
This completes the proof.
\end{proof}

\begin{remark}\label{remark_approx}
We perform numerical experiments to verify \cref{eq_approx}. The accuracy of this approximation is defined as:
\begin{equation}\label{eq_Acc}
    \mathrm{Acc}\coloneqq \exp\left(-\frac{\sigma_{mt}^2+\sigma_{nt}^2}{2}\right) \in (0,1].
\end{equation}
According to the three-sigma rule \cite{3sigma}, 99.73\% of the samples will be within three standard deviations of the mean, indicating that the vast majority of $\varphi_{v_m}^{(t)}+\varepsilon_m^{(t)}$ are within $[\varphi_{v_m}^{(t)}-3\sigma_{mt},\varphi_{v_m}^{(t)}+3\sigma_{mt}]\subset [-\pi/4,\pi/4]$. Therefore, the accuracy is computed with $\sigma_{mt},\sigma_{nt}\in[-\pi/12,\pi/12]$. The surface plot of $\mathrm{Acc}$ against $\sigma_{mt}$ and $\sigma_{nt}$ is shown in \cref{fig_approx}, demonstrating that \cref{eq_approx} guarantees at least 93.8\% accuracy.
\begin{figure}[htbp]
\centering
\includegraphics[width=0.6\textwidth]{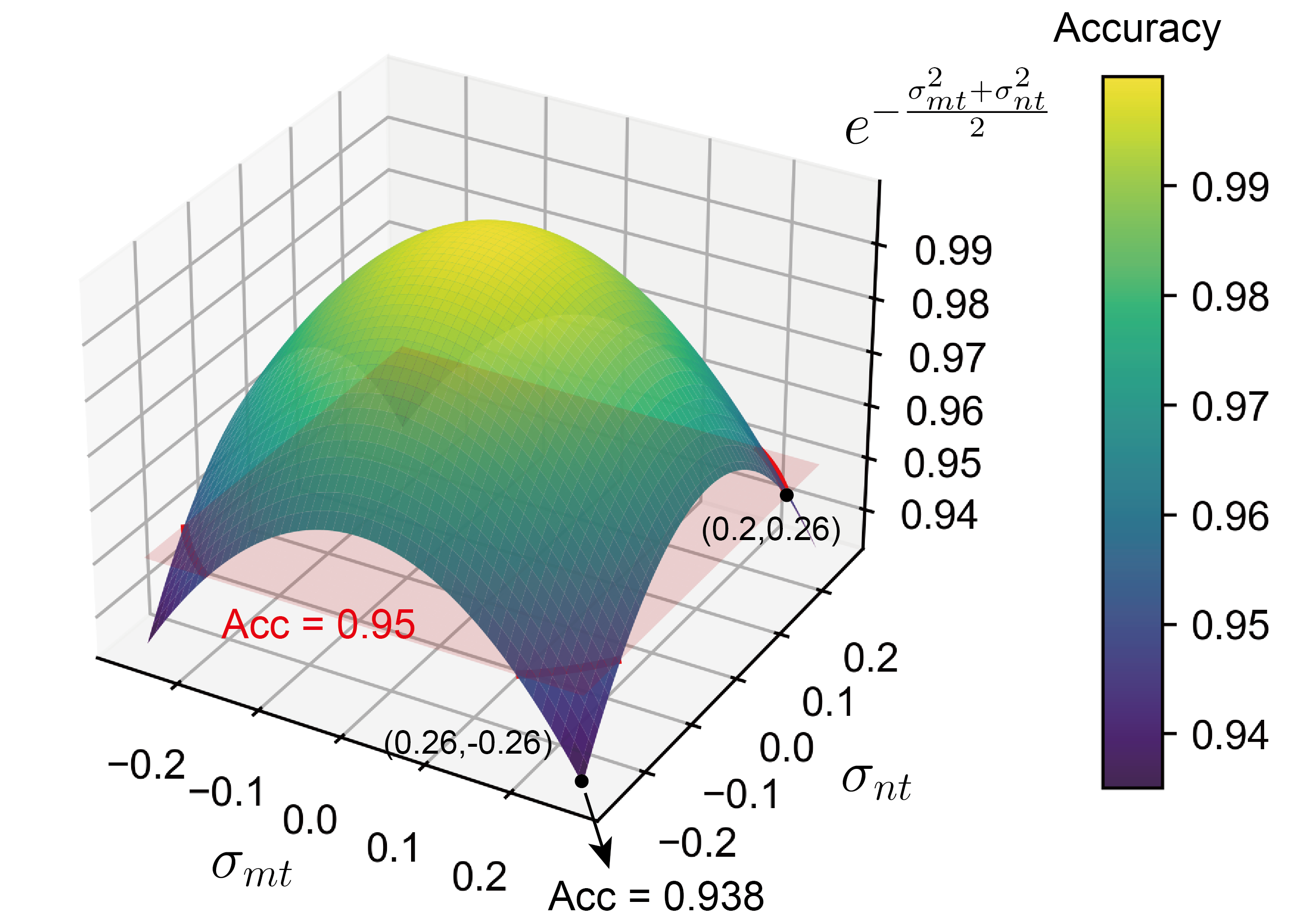}
\caption{Surface plot of accuracy of approximation as $\sigma_{mt},\sigma_{nt}$ change. The plane representing Acc=0.95 is marked in transparent red.}\label{fig_approx}
\end{figure}
\end{remark}

% \begin{remark}
% \cref{eq_EnoiseA} shows each term in the noise-disturbed attention approximates to the corresponding term in the original attention with accuracy \cref{eq_Acc}. Meanwhile, \cref{remark_approx} employs numerical experiments to demonstrate the accuracy of each term would not be less than 93.8\%. Furthermore, this approximation maintains high accuracy overall (rather than for each individual term), as demonstrated by the subsequent \cref{prop_asymptotic}. It also shows that the approximation error can be controlled by the sample size $N$ and the variance $\bm{I}_m,\bm{I}_n$.
% \end{remark}

\subsubsection{Approximate Asymptotic Convergence}

\begin{proposition}\label{prop_asymptotic}
We define the average effect of noise disturbance on attention score as:
\begin{equation}\label{average attention bias}
    \xi_N \coloneqq \frac{1}{{N}}\sum_{i=1}^{N}\mathcal{A}\left(\bm{v}_m^i,\bm{v}_n^i,\bm{e}_{v_m}^\prime-\bm{e}_{v_n}^\prime\right)-\mathcal{A}\left(\bm{v}_m^i,\bm{v}_n^i,\bm{e}_{v_m}-\bm{e}_{v_n}\right),
\end{equation}
where $N$ denotes the number of observations. 

Given an error term $\epsilon>0$, it approximately holds:
\begin{equation}
    \mathbb{P}\left(|\xi_N|\geq \epsilon \right) \leq 2\exp\left(-\frac{\epsilon^2{N}}{8S}\right),
\end{equation}
where $S=\frac{1}{N}\sum_{i=1}^{N}\left(\lVert\bm{q}_{v_m}^i\rVert\lVert\bm{k}_{v_n}^i\rVert\right)^2$ is a constant independent of $\bm{e}_{v_m},\bm{e}_{v_n},\bm{\varepsilon}_m$ and $\bm{\varepsilon}_n$. Furthermore,
\begin{equation}
    \lim_{N\rightarrow+\infty} \mathbb{P}\left(|\xi_N|\geq \epsilon \right) = \lim_{N\rightarrow+\infty} 2\exp\left(-\frac{\epsilon^2{N}}{8S}\right) = 0,\;\forall \epsilon > 0,
\end{equation}
which indicates the positional disturbance-induced bias on attention score asymptotically converges to 0 in probability, e.g., $\lim_{N\rightarrow+\infty}\xi_N\xrightarrow{P} 0$.
\end{proposition}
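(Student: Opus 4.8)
The plan is to write $\xi_N$ as a normalized sum of bounded summands and apply Hoeffding's inequality for bounded variables. For each observation $i$ define the per-observation deviation
\begin{equation}
Y_i \coloneqq \mathcal{A}\!\left(\bm{v}_m^i,\bm{v}_n^i,\bm{e}_{v_m}^\prime-\bm{e}_{v_n}^\prime\right)-\mathcal{A}\!\left(\bm{v}_m^i,\bm{v}_n^i,\bm{e}_{v_m}-\bm{e}_{v_n}\right),
\end{equation}
so that $\xi_N=\tfrac{1}{N}\sum_{i=1}^{N}Y_i$, with the $Y_i$ independent across $i$ since the observations are drawn independently. The first ingredient is centering: by the approximate unbiasedness of \cref{prop_unbias}, $\mathbb{E}[Y_i]\approx 0$, hence $\mathbb{E}[\xi_N]\approx 0$. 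This is precisely the source of the qualifier \emph{approximately} in the statement, and replacing $\mathbb{E}[\xi_N]$ by $0$ becomes exact in the limit $\bm{I}_m,\bm{I}_n\to\bm{0}$.

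The second ingredient is a uniform bound on each $Y_i$. Recalling from \cref{attention} that every attention score (perturbed or not) has the form $(\bm{q}_{v_m}^i)^\top\bm{R}(\cdot)\bm{k}_{v_n}^i$ with $\bm{R}$ an orthogonal rotation matrix, Cauchy--Schwarz together with norm preservation under rotation gives $\lvert(\bm{q}_{v_m}^i)^\top\bm{R}(\cdot)\bm{k}_{v_n}^i\rvert\le\lVert\bm{q}_{v_m}^i\rVert\,\lVert\bm{k}_{v_n}^i\rVert$. Subtracting the two scores then yields $\lvert Y_i\rvert\le 2\lVert\bm{q}_{v_m}^i\rVert\,\lVert\bm{k}_{v_n}^i\rVert$, so each $Y_i$ lies in an interval of width $b_i-a_i=4\lVert\bm{q}_{v_m}^i\rVert\,\lVert\bm{k}_{v_n}^i\rVert$.

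With centering and boundedness established, I would invoke Hoeffding's inequality,
\begin{equation}
\mathbb{P}\!\left(\bigl|\xi_N-\mathbb{E}[\xi_N]\bigr|\ge\epsilon\right)\le 2\exp\!\left(-\frac{2N^2\epsilon^2}{\sum_{i=1}^{N}(b_i-a_i)^2}\right),
\end{equation}
and substitute $\sum_{i=1}^{N}(b_i-a_i)^2=16\sum_{i=1}^{N}\bigl(\lVert\bm{q}_{v_m}^i\rVert\lVert\bm{k}_{v_n}^i\rVert\bigr)^2=16NS$, which collapses the exponent to $-\epsilon^2 N/(8S)$. Using $\mathbb{E}[\xi_N]\approx 0$ then delivers the claimed bound $\mathbb{P}(\lvert\xi_N\rvert\ge\epsilon)\lesssim 2\exp(-\epsilon^2 N/(8S))$. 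Finally, because $S$ is a fixed constant independent of $N$, letting $N\to\infty$ sends the exponent to $-\infty$, so the bound vanishes and $\xi_N\xrightarrow{P}0$.

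The step I expect to require the most care is the centering argument rather than the concentration bookkeeping: Hoeffding controls fluctuations around the true mean $\mathbb{E}[\xi_N]$, so the identification $\mathbb{E}[\xi_N]\approx 0$ must be imported from \cref{prop_unbias}, whose own approximation error is governed by $\exp\bigl(-(\sigma_{mt}^2+\sigma_{nt}^2)/2\bigr)\approx 1$ (see \cref{eq_approx} and \cref{remark_approx}). A fully rigorous variant would either assume exact unbiasedness or carry the $O(\sigma^2)$ bias term explicitly, at the cost of a shifted threshold $\epsilon$; the qualitative conclusion $\xi_N\xrightarrow{P}0$ is unaffected.
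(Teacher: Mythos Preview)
Your proposal is correct and follows essentially the same three-step structure as the paper: bound each summand, apply Hoeffding's inequality, and import the approximate centering $\mathbb{E}[\xi_N]\approx 0$ from \cref{prop_unbias}. The only difference is cosmetic: you obtain $|Y_i|\le 2\lVert\bm{q}_{v_m}^i\rVert\lVert\bm{k}_{v_n}^i\rVert$ directly from orthogonality of $\bm{R}$ plus Cauchy--Schwarz, whereas the paper first expands via the explicit cosine representation \cref{eq_A_cos_sin} and then applies Cauchy--Schwarz to the resulting block sums---your route is a bit more direct but lands on the identical constant and the same Hoeffding exponent.
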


\begin{proof}
The difference between noise-disturbed and original attention scores of the $i$-th observation is bounded as:
\begin{equation}
\begin{aligned}
    & \mathcal{A}\left(\bm{v}_m^i,\bm{v}_n^i,\bm{e}_{v_m}^\prime-\bm{e}_{v_n}^\prime\right)-\mathcal{A}\left(\bm{v}_m^i,\bm{v}_n^i,\bm{e}_{v_m}-\bm{e}_{v_n}\right) \\
    & = \sum_{t=1}^{d} \alpha_i^{(t)}\cos(\delta_{mn}^{(t)}) + \beta_i^{(t)}\sin(\delta_{mn}^{(t)}) - \alpha_i^{(t)}\cos(\varphi_{v_m}^{(t)}-\varphi_{v_n}^{(t)}) - \beta_i^{(t)}\sin(\varphi_{v_m}^{(t)}-\varphi_{v_n}^{(t)}) \\
    & = \sum_{t=1}^{d}\sqrt{{\alpha_i^{(t)}}^2+{\beta_i^{(t)}}^2}\left[\cos(\theta_i^{(t)}-\delta_{mn}^{(t)}) - \cos(\theta_i^{(t)}-\varphi_{v_m}^{(t)}+\varphi_{v_n}^{(t)})\right] \leq 2\sum_{t=1}^{d}\sqrt{{\alpha_i^{(t)}}^2+{\beta_i^{(t)}}^2} \\
    & =  2\sum_{t=1}^{d}\sqrt{\left[\left({q_{v_m}^i}^{(2t-1)}\right)^2+\left({q_{v_m}^i}^{(2t)}\right)^2\right] \left[\left({k_{v_n}^i}^{(2t-1)}\right)^2+\left({k_{v_n}^i}^{(2t)}\right)^2\right]} \\
    & \overset{\mathrm{C.S.}}{\leq} 2\sqrt{\left[\sum_{t=1}^{d}\left({q_{v_m}^i}^{(2t-1)}\right)^2+\left({q_{v_m}^i}^{(2t)}\right)^2\right] \left[\sum_{t=1}^{d}\left({k_{v_n}^i}^{(2t-1)}\right)^2+\left({k_{v_n}^i}^{(2t)}\right)^2\right]} \\
    & = 2\sqrt{\lVert\bm{q}_{v_m}^i\rVert^2\lVert\bm{k}_{v_n}^i\rVert^2} = 2\lVert\bm{q}_{v_m}^i\rVert\lVert\bm{k}_{v_n}^i\rVert,
\end{aligned}
\end{equation}
where $\mathrm{C.S.}$ denotes the Cauchy-Schwarz inequality, and $\theta_i^{(t)}\in[-\pi,\pi]$ is a directed angle defined as:
\begin{equation}
    \cos(\theta_i^{(t)}) \coloneqq \frac{\alpha_i^{(t)}}{\sqrt{{\alpha_i^{(t)}}^2+{\beta_i^{(t)}}^2}},\qquad\sin(\theta_i^{(t)}) \coloneqq \frac{\beta_i^{(t)}}{\sqrt{{\alpha_i^{(t)}}^2+{\beta_i^{(t)}}^2}}.
\end{equation}
Applying \cref{lemma_hoeffding}, we get:
\begin{equation}\label{eq_xiHoeffding}
\begin{aligned}
    \mathbb{P}\left(|\xi_N-\mathbb{E}(\xi_N)|\geq \epsilon\right)\leq 2\exp\left(-\frac{\epsilon^2{N}}{8S}\right),
\end{aligned}
\end{equation}
where $S=\frac{1}{N}\sum_{i=1}^{N}\left(\lVert\bm{q}_{v_m}^i\rVert\lVert\bm{k}_{v_n}^i\rVert\right)^2$. 

As shown by \cref{eq_EnoiseA}, the expectation of $\xi_N$ satisfies:
\begin{equation}\label{eq_Exi}
\begin{aligned}
    \mathbb{E}(\xi_N) & = \frac{1}{N}\sum_{i=1}^{N}\mathbb{E}_{\bm{e}_{v_m}^\prime,\bm{e}_{v_n}^\prime}\left[\mathcal{A}\left(\bm{v}_m^i,\bm{v}_n^i,\bm{e}_{v_m}^\prime-\bm{e}_{v_n}^\prime\right)-\mathcal{A}\left(\bm{v}_m^i,\bm{v}_n^i,\bm{e}_{v_m}-\bm{e}_{v_n}\right)\right] \\
    & = \frac{1}{N}\sum_{i=1}^{N} \sum_{t=1}^{d}\left[\exp\left(-\frac{\sigma_{mt}^2+\sigma_{nt}^2}{2}\right)-1\right] \left[\alpha_i^{(t)}\cos(\varphi_{v_m}^{(t)}-\varphi_{v_n}^{(t)}) + \beta_i^{(t)}\sin(\varphi_{v_m}^{(t)}-\varphi_{v_n}^{(t)})\right].
\end{aligned}
\end{equation}
As analyzed in \cref{prop_unbias} and \cref{remark_approx}, we have:
\begin{equation}
    \exp\left(-\frac{\sigma_{mt}^2+\sigma_{nt}^2}{2}\right)-1 \approx 0,
\end{equation}
which leads to:
\begin{equation}
    \mathbb{P}\left(|\xi_N-\mathbb{E}(\xi_N)|\geq \epsilon\right)\approx\mathbb{P}\left(|\xi_N|\geq \epsilon \right) \leq 2\exp\left(-\frac{\epsilon^2{N}}{8S}\right).
\end{equation}
To be more rigorous, we have:
\begin{equation}
    \lim_{\bm{I}_m,\bm{I}_n\rightarrow \bm{0}} \mathbb{P}\left(|\xi_N-\mathbb{E}(\xi_N)|\geq \epsilon\right)=\mathbb{P}\left(|\xi_N|\geq \epsilon \right) \leq 2\exp\left(-\frac{\epsilon^2{N}}{8S}\right).
\end{equation}
Since $S$ is the second moment of $\lVert\bm{q}_{v_m}^i\rVert\lVert\bm{k}_{v_n}^i\rVert$, which can be assumed to be finite empirically, when $N\rightarrow \infty$, we have $\exp\left(-\frac{\epsilon^2{N}}{8S}\right)\rightarrow 0$. Thus $\mathbb{P}\left(|\xi_N|\geq \epsilon \right)$ asymptotically converges to zero. This completes the proof.
\end{proof}

\begin{lemma}[\textbf{Hoeffding's inequality}]\label{lemma_hoeffding}
Given a series of i.i.d. random variables $\{X_i\}_{i=1}^n$, with  $\mathbb{P}(X_i\in[a_i, b_i])\approx 1$, $\forall i\in [1\cdots n]$, the sum $S_n\coloneqq\sum_{i=1}^{n}X_i$ satisfies:
\begin{equation}
    \mathbb{P}\left(|S_n-\mathbb{E}(S_n)|\geq \epsilon\right) \leq 2\exp\left(-\frac{2\epsilon^2}{\sum_{i=1}^{n}(b_i-a_i)^2}\right),
\end{equation}
where $\epsilon>0$ denotes the error term.
\end{lemma}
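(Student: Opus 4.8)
The plan is to prove Hoeffding's inequality (\cref{lemma_hoeffding}) by the exponential Markov (Chernoff) method combined with a moment-generating-function bound for bounded random variables. Writing $Y_i := X_i - \mathbb{E}[X_i]$, so that each $Y_i$ is centered and (essentially) confined to an interval of width $b_i - a_i$, I would first note $S_n - \mathbb{E}[S_n] = \sum_{i=1}^n Y_i$ and apply the Chernoff trick: for any $s > 0$, $\mathbb{P}(S_n - \mathbb{E}[S_n] \geq \epsilon) = \mathbb{P}(e^{s\sum_i Y_i} \geq e^{s\epsilon}) \leq e^{-s\epsilon}\,\mathbb{E}[e^{s\sum_i Y_i}]$ by Markov's inequality. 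Independence of the $\{X_i\}$ then factorizes the expectation as $\prod_{i=1}^n \mathbb{E}[e^{sY_i}]$.

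The crux is to bound each factor $\mathbb{E}[e^{sY_i}]$, which is Hoeffding's lemma: for a zero-mean variable supported on $[a_i - \mathbb{E}X_i,\, b_i - \mathbb{E}X_i]$ one has $\mathbb{E}[e^{sY_i}] \leq \exp\!\big(s^2 (b_i - a_i)^2/8\big)$. I would establish this by setting the cumulant generating function $\psi(s) := \log \mathbb{E}[e^{sY_i}]$, checking $\psi(0)=0$ and $\psi'(0)=\mathbb{E}[Y_i]=0$, and proving the uniform bound $\psi''(s) \leq (b_i - a_i)^2/4$: the second derivative equals the variance of $Y_i$ under the exponentially tilted measure $\mathrm{d}\mathbb{Q} \propto e^{sY_i}\,\mathrm{d}\mathbb{P}$, and the variance of any variable confined to an interval of length $b_i - a_i$ is at most $(b_i - a_i)^2/4$ by Popoviciu's inequality. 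A second-order Taylor expansion of $\psi$ around $0$ then yields the claimed MGF bound.

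Combining the two ingredients gives $\mathbb{P}(S_n - \mathbb{E}[S_n] \geq \epsilon) \leq \exp\!\big(-s\epsilon + \tfrac{s^2}{8}\sum_i (b_i - a_i)^2\big)$ for every $s>0$. Minimizing the quadratic exponent over $s$ (the optimizer is $s^\star = 4\epsilon / \sum_i (b_i - a_i)^2$) produces $\exp\!\big(-2\epsilon^2 / \sum_i (b_i - a_i)^2\big)$. Running the identical argument on $-Y_i$ controls the lower tail $\mathbb{P}(S_n - \mathbb{E}[S_n] \leq -\epsilon)$, and a union bound over the two tails supplies the factor $2$, giving the stated two-sided bound.

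The main obstacle is Hoeffding's lemma, specifically the uniform estimate $\psi''(s) \leq (b_i-a_i)^2/4$; the Chernoff step and the final optimization over $s$ are routine. A secondary subtlety is the paper's use of $\mathbb{P}(X_i \in [a_i,b_i]) \approx 1$ rather than almost-sure boundedness. In the intended application the $X_i$ are differences of attention scores, which are \emph{exactly} bounded by $2\lVert \bm{q}_{v_m}^i\rVert \lVert \bm{k}_{v_n}^i\rVert$ via Cauchy--Schwarz, so the ``$\approx$'' only reflects the Gaussian disturbance being effectively truncated at three standard deviations (cf.\ \cref{remark_approx}); I would therefore invoke the inequality in the idealized regime where the supports are genuine, which is precisely why the resulting concentration bound is itself stated as approximate.
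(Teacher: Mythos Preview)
Your proposal is correct and is the standard textbook proof of Hoeffding's inequality via the Chernoff method and Hoeffding's lemma. However, the paper does not actually supply a proof of \cref{lemma_hoeffding}: it is stated as a named classical result and invoked directly in the proof of \cref{prop_asymptotic}, so there is nothing to compare against. Your treatment of the ``$\approx 1$'' hypothesis is also apt---the paper uses the lemma only in the regime where the attention-score differences are genuinely bounded by $\pm 2\lVert\bm{q}_{v_m}^i\rVert\lVert\bm{k}_{v_n}^i\rVert$, so the approximate boundedness is a non-issue for the application.
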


%%%%%%%%%%%%%%%%%%%%%%%%%%%%%%%%%%%%%%%%%%%%%%%%%%%%%%%%%%%%%%%%%%%%%%%%%%%%%%%

\section{Measuring Causality-Generality of Nodes within Directed Causal Graph}\label{supp_omega}
We define the gregariousness ($\bm{\pi}_{v}$) of a node $v$ in a DAG  $G(\mathcal{V},\mathcal{E})$ as its propensity to establish outgoing connections to other nodes, which is equivalent to the degree of causal generality when $G$ represents a causal graph. While a node's out-degree can serve as a simple proxy for gregariousness, it only captures 1-hop local connectivity. For example, a node might connect to many immediate neighbors that themselves have no further outgoing connections. To capture the global connectivity, we adopt a PageRank-like approach. As shown in  \cref{eq_gregariousness}, we first compute a probability transition matrix $\bm{P}$ by transposing the absolute adjacency matrix $\bm{A}$ of $G$ normalized by its in-degrees, with $\bm{P}_{i,j}$ representing the probability that an incoming connection of $v_i$ comes from $v_j$. Note that $\bm{P}$ differs from a conventional transition matrix normalized by out-degrees, which instead represents outgoing connection probabilities. Next, a restart probability matrix $\frac{\bm{1}}{M}$ is added to $\bm{P}$ to ensure that the resulting Markov chain is strongly connected and ergodic \cite{restart}. In this random walk, nodes with higher global influence (i.e., more gregarious nodes) will accumulate larger steady-state probabilities, reflecting their broader reach across the graph and higher gregariousness. This steady-state distribution, given by the left eigenvector of $\bm{P}$ corresponding to the largest eigenvalue$\lambda_{max}=1$, defines the PageRank vector $\pi$, as shown in \cref{eq_gregariousness}.     

%%%%%%%%%%%%%%%%%%%%%%%%%%%%%%%%%%%%%%%%%%%%%%%%%%%%%%%%%%%%%%%%%%%%%%%%%%%%%%%

\section{Related Work}\label{supp_related}

\subsection{Position Encoding}
For sequential data, position encoding methods broadly fall into two paradigms: Absolute Positional Encoding (APE) and Relative Positional Encoding (RPE). The canonical APE approach \cite{transformer} employs fixed sinusoidal functions of varying frequencies to encode each token’s absolute position. Beyond this fixed design, various trainable absolute positional encoding schemes have been proposed to enhance performance \cite{radford2018improving, radford2019language,clark2020electra}. However, these approaches often fails to generalize to sequences longer than those seen during training. To address this limitation, RPE methods \cite{RPR,Alibi,rope,FoPE,BERT,ALBERT,huang2020improve,Deberta,ke2020rethinking} modulate attention scores based on the relative distance between tokens. Among these, rotary positional encoding (RoPE) \cite{rope} applies position-dependent rotations to Query and Key vectors, using angles proportional to their absolute positions. These rotated vectors are directly involved in the computation of Query-Key attention scores. RoPE offers several key benefits, including that long-term decaying attention scores, compatibility with linear self-attention \cite{rope}, and enhanced understanding of contextual knowledge \cite{MassiveValues}. Most recently, several RoPE-based methods have been proposed to improve the extrapolation ability of Transformers to longer contexts by modifying the frequency-domain representation of RoPE \cite{FoPE}, incorporating decay-aware embeddings \cite{xPos}, or employing interpolation-based techniques \cite{chen2023extending}. However, these methods all assume a predefined sequential order among tokens, making them unsuitable for data without inherent ordering even when such data exhibit an implicit causal structure. 

Positional encoding methods are limited and primarily developed for specialized domains. In the context of single-cell RNA sequencing (scRNA-seq), two main strategies have emerged for generating positional encodings for genes, which inherently lack a natural ordering. The first strategy \cite{scbert,scgpt,scfoundation} utilizes static pseudo-positional encodings derived from large scale gene co-expression data, capturing association patterns between genes. This approach is analogous to static word embeddings generated using models like CBOW, where positional encodings are assigned based on gene “proximity” in expression space. However, these encodings are not contextualized and fail to represent causal relationships between genes. The second strategy \cite{genecompass,geneformer,tgpt} generates contextualized pseudo-positional encodings by assigning gene orderings based on ranked expression levels within the dataset. Positional encodings, either static or trainable, are then constructed according to these induced pseudo-orders. While more adaptive, this approach still primarily reflects superficial relationships based on relative expression and cannot capture more complex dependencies, such as causal interactions among genes.

\subsection{Causal Structure Learning}
% Causal structure learning involves causal discovery, which uncovers the causal graph over variables,  and causal inference, which infers the strength of causal effects among variables. 
Generally, there are four families of casual structure learning methods, including constraint-based methods, score-based methods, functional causal discovery, and gradient-based causal learning \cite{glymour2019review,molak2023causal}. Constraint-based methods \cite{spirtes2000causation, colombo2014order,le2016fast} 
typically start with a fully-connected graph from which they 
learn causal graph structure by leveraging the independence between graphical structures (e.g., chains, forks, and colliders). Score-based methods\cite{huang2018generalized,chickering2020statistically}, on the other hand, start with an empty graph and iteratively add or prune edges to maximize a scoring function (e.g., BIC) that measures how well the graph explains the observed distribution. 
A common drawback of both approaches is their computational inefficiency and limited ability to estimate causal effect strength. Functional causal discovery methods \cite{hoyer2008nonlinear,shimizu2006linear,shimizu2011directlingam} assume explicit functional forms (e.g., spline regression) and distributional properties (e.g. non-Gaussianity) to recover both the causal structure and the strength of causal relationships. More recently, gradient-based causal discovery methods have advanced causal structure learning by formulating the task as a continuous optimization problem. For example, NOTEARS \cite{notears}  enforces acyclicity through a smooth constraint embedded in a data reconstruction loss, allowing efficient gradient-based optimization without combinatorial search or independence testing. GOLEM \cite{GOLEM} extends NOTEARS by incorporating a likelihood-based objective with sparsity regularization, while retaining acyclicity constraints. However, both methods are limited to modeling linear causal dependencies. In contrast, neural network-based approaches  \cite{Grad-DAG,geffner2022deep,DAG-GNN}  introduce deep learning architectures to model complex nonlinear causal mechanisms, enabling scalable and flexible estimation of both structure and effect strengths in high-dimensional settings.

\section{Dataset Description}\label{supp_dataset}

\subsection{Pre-Training Datasets}
\subsubsection{Single-Cell Sequencing Data}
We collect a wide variety of single-cell multi-omics datasets from \textit{Homo sapiens} and \textit{Mus musculus}, which are sourced from the CELLxGENE database \cite{cellxgene} at \url{https://cellxgene.cziscience.com/}. This collection includes 1,465 datasets, encompassing around 91.5 million cells and covering approximately 900 different cell types, with data spanning several sequencing methods and omics modalities.

The datasets are primarily divided into two broad categories: single-cell transcriptomics and single-cell epigenomics, depending on the type of molecular feature being analyzed, such as RNA expression or chromatin modifications. All datasets are organized into a standardized high-dimensional matrix $ \bm{X} \in \mathbb{R}^{N \times n} $, where each element $ x_{j,g}$ represents the gene expression values of gene $g$ in cell $j$. Here, $N$ denotes the total number of cells, and $n$ refers to the number of genes. It is noteworthy that the format of spatial transcriptomics data (e.g., Slide-seq) is processed consistently and does not take into account spatial coordinates and H\&E images.

\subsubsection{DNA Methylation Data}

We adopt the pretraining dataset released by MethylGPT \cite{Methylgpt}, which consists of DNA methylation data collected from 154,063 human samples through the EWAS Data Hub \cite{EWASHub} and Clockbase \cite{clockbase}. The dataset includes approximately 300,000 patients, with low-quality entries filtered. The cleaned data was deduplicated, ensuring no repetitions in the training set, and randomly sampled to cover 20 distinct tissue types. We specifically focus on 49,156 CpG sites selected for their biological relevance and array format compatibility, as detailed by the EWAS catalog. The data is structured into a matrix $ X \in \mathbb{R}^{N \times M} $, where each element $ X_{i,j} $ denotes the methylation level of CpG site $ j $ in sample $ i $. Here, $ N $ is the number of samples and $ M $ corresponds to the number of CpG sites.

\subsection{Held-Out Datasets}\label{supp_heldout}
\subsubsection{Gene Perturbation Prediction}

\paragraph{Human Leukemia Cell Dataset} The human leukemia cell dataset consists of three distinct datasets. We use the Norman dataset \cite{Norman} for GPP expriment. The Norman perturbation dataset provides gene expression profiles from the K562 leukemia cell line treated with Perturb-seq. This dataset includes 131 dual-gene perturbations and 105 single-gene perturbations, with each perturbation represented by approximately 300 to 700 cells.

\subsubsection{Cell Type Annotation}
\paragraph{hPBMC} The hPBMC \cite{dataset-PBMC} dataset, sourced from a healthy donor, contains gene expression profiles for 68,450 peripheral blood mononuclear cells (PBMCs). It includes eleven distinct cell types: CD4+ T cells, CD8+ T cells, B cells, natural killer (NK) cells, CD14+ monocytes, FCGR3A+ monocytes, dendritic cells, memory cells, helper2 cells, and megakaryocytes. These cells were processed using the 10x platform with scRNA-seq technology.

\paragraph{hPancreas} The hPancreas \cite{dataset-Pancreas} dataset comprises 2,209 single cells from human pancreatic islets, collected from six healthy donors and four type 2 diabetes (T2D) donors. It includes both endocrine and exocrine cells, representing eight cell types: alpha, beta, gamma, delta, and epsilon endocrine cells, as well as acinar, ductal, and pancreatic stellate cells (PSCs). The cells were dissociated into single-cell suspensions, sorted via fluorescence-activated cell sorting (FACS), and subjected to RNA sequencing using the Smart-seq2 protocol.

\paragraph{hBMMC} The hBMMC dataset \cite{dataset-BMMC} includes 35,882 bone marrow mononuclear cells (BMMCs) from healthy donors, containing six distinct cell types: progenitor cells, B cells, T cells, NK cells, monocytes, and dendritic cells. These cells were profiled using single-cell assay for transposase-accessible chromatin sequencing (scATAC-seq) technology on the 10x platform.

\paragraph{mOP} The mOP \cite{dataset-MOP} dataset provides a spatially resolved cell atlas of the mouse brain, containing molecular profiles for 338 major cell types from more than ten million cells, spanning eleven brain regions. It was generated using Multiplexed Error-Robust Fluorescence In Situ Hybridization (MERFISH), a spatial transcriptomic technique that enables gene expression profiling while preserving the spatial organization of cells within tissue sections.

\subsubsection{Cell Clustering}

\paragraph{SCoPE2\_Specht} SCoPE2\_Specht \cite{proteomic_data} is a representative single-cell proteomic dataset that quantifies 3,042 proteins in 1,490 cells using the SCoPE2 method. It includes two cell types: monocytes and macrophages. Notably, without polarizing cytokines, monocytes may adopt macrophage-like traits, increasing cell clustering difficulty due to their similarity.

\paragraph{SCoPE2\_Montalvo} SCoPE2\_Montalvo \cite{dataset_SCoPE2_Montalvo} quantifies 843 proteins in 508 cells using the SCoPE2 method. It contains five cell types: Vasculature, Beta 1 cells, Beta 2 cells, Delta cells, Alpha cells.

\paragraph{pSCoPE\_Leduc} pSCoPE\_Leduc \cite{pSCoPE_Leduc} was generated by the pSCoPE technique. It quantifies 2,844 different proteins in 1,543 cells, comprising two cell types: melanoma cells and U-93 cells.

\subsubsection{Age Prediction}
We use a widely used DNA methylation dataset for age prediction, collected by \cite{de2022pan}, which includes 13,505 samples (21,368 CpG sites) from multiple tissues. The dataset covers ages from 0 to 100 years, with the majority of samples derived from whole blood (47.2\%) and brain tissue (34.5\%).

%%%%%%%%%%%%%%%%%%%%%%%%%%%%%%%%%%%%%%%%%%%%%%%%%%%%%%%%%%%%%%%%%%%%%%%%%%%%%%%

\section{Evaluation Protocols}\label{supp_evaluation}
\subsection{Gene Perturbation Prediction}

The Gene Perturbation Prediction (GPP) experiment leverages learned gene representations to predict the effects of targeted perturbations.  These embeddings, generated by foundational models, serve as input to downstream classification heads that predict perturbation status, thereby elucidating gene function and regulatory network dependencies. For each foundational model incorporating distinct positional encodings, we computed the mean squared error (MSE) across the top 20 differentially expressed genes between pre- and post-perturbation expression profiles as the evaluation metric.

\subsection{Cell Type Annotation}
As a standard classification task, we adopt the evaluation framework established in prior studies \cite{scbert,scgpt,geneformer,genecompass,langcell}. Under the fine-tuning setting, we append an additional classifier to the cell embeddings generated by each model and perform supervised fine-tuning on the model parameters to optimize task-specific performance. Then, we employ accuracy and macro F1 score as the evaluation metrics.

\subsection{Cell Clustering}
To assess the quality of the cell embeddings generated by our proposed method, we conducted a cell clustering experiment, which is a standard practice in single-cell proteomics \cite{scprotein,gatto2023initial}. We employed the k-means algorithm to obtain the cell clusters and subsequently evaluated the clustering performance using three commonly used metrics, including ARI \cite{ARI}, NMI \cite{NMI} and ASW \cite{ASW}.

\subsection{Age Prediction}
Following established DNA methylation foundational models \cite{Methylgpt}, we fine-tuned both our model and MethylGPT using a ResNet1D prediction head. During joint optimization, both the pre-trained MethylGPT and the downstream ResNet1D were trained end-to-end, with mean squared error (MSE) as the objective function. Other non-pre-trained models were also trained and evaluated on the same data splits. To robustly assess model performance, we employed the median absolute error (MedAE) as the evaluation metric.

% The ResNet1D consists of six residual blocks, each containing two sequential 1D convolutional layers, followed by batch normalization and ReLU activation. 
% \subsection{Evaluation Tasks}

% \subsection{Evaluation Metrics}

%%%%%%%%%%%%%%%%%%%%%%%%%%%%%%%%%%%%%%%%%%%%%%%%%%%%%%%%%%%%%%%%%%%%%%%%%%%%%%%

\section{Implementation Details}\label{supp_imple}
\subsection{Data Preprocessing}\label{supp_preprocess}
To ensure methodological consistency, we adopt a unified data processing workflow for each omics included in this study. 

\subsubsection{Single-Cell RNA Sequencing}

\paragraph{Gene List Mapping.} After collecting the single-cell datasets, we standardize their gene symbols to the HUGO Gene Nomenclature. Technological discrepancies between sequencing platforms occasionally result in absent gene annotations within specific datasets. To address this, unmapped genes are assigned zero expression values, thereby enforcing uniform gene symbol compatibility across all processed matrices.

\paragraph{Quality Control and Normalization.} Quality control is performed using Scanpy \cite{scanpy} to eliminate low-information cells, defined as those with fewer than 200 detected genes. To mitigate technical variability, raw expression counts are normalized by scaling each cell's total transcript count to 10,000 (library size normalization). Subsequently, non-zero expression values undergo log1p transformation to stabilize variance and reduce skewness in the data distribution.

\paragraph{Dataset Splitting.} We split the dataset in a similar way to previous studies \cite{langcell}. Specifically, we collect the large-scale datasets for pre-training and several held-out datasets for evaluation. The former does not need to be split, while the latter needs to be split into a fine-tuning dataset and a test dataset in a 3:7 ratio according to different uses. Furthermore, these held-out datasets usually come from different experimental conditions, donors, and due to the common batch effects in single-cell data \cite{ACSleuth}, they can be regarded as new data that differ from pre-training datasets. 

\subsubsection{Single-Cell Proteomics}
The single-cell proteomic data were processed according to the SCoPE2 pipeline \cite{proteomic_data}. Raw MS files were analyzed in MaxQuant using the UniProt human proteome database \cite{uniprot}, with TMT labeling modifications and 1\% FDR filtering. Cells with <500 peptides or >20\% mitochondrial proteins were excluded. Proteins detected in <10\% of cells were removed. Missing values were imputed via k-nearest neighbors, and batch effects were corrected using LOESS normalization and ComBat. Analyses used R with SCoPE2 \cite{proteomic_data} and Seurat packages \cite{hao2021integrated}.

\subsubsection{DNA Methylomics}
The DNA methylation data preprocessing followed the MethylGPT pipeline \cite{Methylgpt}. Initially, stringent quality control was applied to exclude samples with missing values exceeding 40\% of CpG sites and remove duplicate entries. Subsequently, CpG sites were selected based on their biological relevance (associated with $\geq$5 EWAS traits) and cross-platform compatibility (detected in $\geq$95\% of samples). The methylation $\beta$-values were standardized, with missing values intentionally preserved for downstream masked modeling tasks. The processed data were structured into a matrix $ X \in \mathbb{R}^{N \times M}$, where $N$ and $M$ denote the number of samples and CpG sites, respectively, enabling systematic analysis of methylation patterns.

\subsection{Transformer-Based Backbone Models}\label{supp_model}
We use the following two transformer backbones with CAPE-generated positional encodings to learn both feature and observation-level representations for tasks in \cref{sec_real} and \cref{supp_omics}. For both backbones, CAPE-generated positional encodings are used in place of the original positional encodings as described below.

\paragraph{scBERT} scBERT \cite{scbert} discretizes continuous gene expression values via binning, mapping each to a learnable token embedding. To encode positional information, it assigns each gene a fixed embedding, which remains static during training. Finally, the expression embeddings and positional encodings are directly added and input into the transformer backbone (Performer \cite{Performers}), and the expression embeddings are updated with masked reconstruction learning. Specifically, given the sample matrix $\bm{X}\in\mathbb{R}^{N\times M}$ where $N$ is the number of observations (cells), and $M$ is the number of features (genes). For each non-zero expression count $x_{ij}$ in each cell, it calculates the raw absolute values and divide them into $B$ consecutive intervals $[b_k,b_{k+1}]$, where $k=1,2,\cdots,B$, and each interval is assigned a feature embedding in the code book $\mathcal{C}$ with $B$ items. Then, $\mathcal{G}$ in \cref{sec_pre}, which is a function to generate contextualized, causality-agonistic intermediate feature embeddings, is defined as:
\begin{equation}\label{G function 1}
    \bm{v}_j^i = \mathcal{G}(\bm{v}_j, \bm{x}_i) = \mathcal{C}(\mathrm{bin}(x_{ij})),\quad \mathrm{bin}(x_{ij})=\begin{cases}
        k, & \text{if} \; x_{ij}>0 \; \text{and} \; x_{ij} \in [b_k,b_{k+1}],\\
        0, & \text{otherwise.}
    \end{cases}
\end{equation} 
In the \textbf{original study}, the fusion function $\mathcal{F}$ to integrate feature embeddings and positional encodings is simply defined as:
\begin{equation}
    \mathcal{F}(\bm{v}_j^i, \bm{\varphi}_{v_j}) = \bm{v}_j^i + \bm{\varphi}_{v_j},
\end{equation}
where $\bm{\varphi}_{v_j}\in\mathbb{R}^D$ denotes the gene embedding of gene $j$ generated by pretrained gene2vec \cite{G2V}. \textbf{ In our study}, we instead use the CAPE-generated positional encodings $\bm{\varphi}_{v_j}\in\mathbb{R}^{d}$ and modify the $ \mathcal{F}$ function as:
% However, when applying our \textbf{CAPE}, we leverage causal structure learning to obtain $\bm{\varphi}_{v_j}\in\mathbb{R}^{d}$, where $d=D/2$. And CAPE use rotary mechanism to integrate $\bm{v}_j^i$ and $\bm{\varphi}_{v_j}$ as:
\begin{equation}\label{CAPE transformer}
    \mathcal{F}(\bm{v}_j^i, \bm{\varphi}_{v_j}) = \bm{R}(\bm{\varphi}_{v_j})\bm{v}_j^i,
\end{equation}
where $\bm{R}$ is the rotary matrix define as \cref{eq_R}.
Additionally, at the beginning of the input sequence $\bm{v}_1^i,\bm{v}_2^i,\cdots,\bm{v}_M^i$ of cell $i$, scBERT sets a special \texttt{<cls>} token, which uses the attention module to extract the cell-level embedding from $\{\bm{v}_j^i\}_{j=1}^M$. 

\paragraph{scGPT} scGPT \cite{scgpt} uses a similar architecture to scBERT, with the main differences being: (1) it uses a different positional encoding; (2) it is pre-trained on a wider range of datasets, making it suitable for multi-omics; and (3) it adopts a multi-task pre-training paradigm. In particular, in terms of positional encoding, scGPT sets a learnable gene embeddings for each gene $j$ and updates it during the training process. Therefore, scGPT maintains two different codebooks, $\mathcal{C}_{\mathrm{bin}}$ with $B$ items and $\mathcal{C}_{\mathrm{gene}}$ with $M$ items, one for assigning $\bm{v}_j^i$ and one for $\bm{\varphi}_{v_j}$, as:
\begin{equation}\label{scGPT G function}
    \bm{v}_j^i = \mathcal{G}(\bm{v}_j, \bm{x}_i) = \mathcal{C}_{\mathrm{bin}}(\mathrm{bin}(x_{ij})),\quad \bm{\varphi}_{v_j} = \mathcal{C}_{\mathrm{gene}}(j).
\end{equation}
Note that we use the CAPE-generated positional encodings as $\bm{\varphi}_{v_j}$ for scGPT in our study, as described in  \Cref{CAPE transformer}.

In summary, scBERT is equivalent to using static absolute position encodings, while scGPT uses trainable absolute position encodings. When we practice CAPE on these models, we replace the positional encodings $\bm{\varphi}_{v_j}$ and fusion function $\mathcal{F}$ set by CAPE with the native ones, while keeping the other model architectures unchanged.

\paragraph{General case} When measurement values of features are not on a comparable scale, the binning-based $\mathcal{G}$ functions in \cref{G function 1} and \Cref{scGPT G function} are no longer applicable for generating contextualized, causality-agonistic intermediate feature embeddings (e.g., $\bm{v}_j^i$ ). In such cases, the canonical transformer without positional encodings is used as the  $\mathcal{G}$  function in \cref{preliminary} to generate these intermediate feature embeddings via a self-supervised reconstruction-based training objective. 

\subsection{Benchmark Methods}\label{supp_pos_enc}
\subsubsection{Positional Encoding}

\paragraph{Trainable Relative Position Encoding} We use the trainable relative position encoding proposed by \cite{transformerxl} as our benchmark. Instead of relying on absolute position embeddings, this method represents the distance between query position $m$ and key position $n$ using a sinusoidal-based vector $\tilde{\bm{p}}_{m-n}$. Content vectors $\bm{x}_n$ and these relative encodings are projected separately (via $\bm{W}_k$ and $\hat{\bm{W}}_k$) and combined with two global bias vectors $u$ (content bias) and $v$ (position bias). The resulting attention score
\begin{equation}
\bm{q}_m^\top \bm{k}_n 
\;=\; 
\bm{x}_m^\top \bm{W}_q^\top \bm{W}_k\,\bm{x}_n 
\;+\; 
\bm{x}_m^\top \bm{W}_q^\top \hat{\bm{W}}_k\,\tilde{\bm{p}}_{m-n} 
\;+\; 
\bm{u}^\top \bm{W}_q^\top \bm{W}_k\,x_n 
\;+\; 
\bm{v}^\top \bm{W}_q^\top \hat{\bm{W}}_k\,\tilde{\bm{p}}_{m-n}
\end{equation}
ensures that attention depends only on relative distances, giving the model translation invariance and better generalization to longer sequences.

\subsubsection{Multi-Omics Analysis Benchmark Models}

\paragraph{KNN-ComBat} KNN-ComBat is a standard method in the existing single-cell proteomics data analysis pipeline \cite{scp-pipeline}, which combines KNN-based imputation with ComBat-based batch correction for routine data preprocessing.

\paragraph{MAGIC} MAGIC \cite{MAGIC} is a diffusion-based method for data cleaning in single-cell RNA sequencing, effectively imputing missing data and recovering gene interactions by sharing information across similar cells.

\paragraph{AutoClass} AutoClass \cite{Autoclass} is a deep neural network for cleaning single-cell RNA-seq data, using an autoencoder and classifier to remove noise and recover missing data, improving downstream analysis.

\paragraph{Harmony} Harmony \cite{Harmony} effectively corrects batch effects by iteratively clustering cells and adjusting their positions in PCA space, ensuring the integration reflects biological rather than technical variation.

\paragraph{Scanorama} Scanorama \cite{Scanorama} is a tool for integrating single-cell RNA-seq data across multiple datasets while correcting for batch effects. It uses a fast, alignment-based method that projects data into a shared low-dimensional space, ensuring that the biological variation is preserved while mitigating technical variability.

\paragraph{scPROTEIN} scPROTEIN \cite{scprotein} framework addresses peptide uncertainty, missing data, batch effects, and noise in single-cell proteomics. It uses multitask heteroscedastic regression for peptide uncertainty and graph contrastive learning for cell embedding, enhancing clustering, batch correction, and annotation.

\paragraph{MethylGPT} MethylGPT \cite{Methylgpt} is a transformer-based foundation model for DNA methylation analysis, demonstrates superior performance across key tasks including age prediction, disease risk prediction and missing data imputation.

\paragraph{AltumAge} AltumAge \cite{de2022pan} is a deep learning-based epigenetic clock designed to predict human age using DNA methylation data from multiple tissues. It outperforms traditional linear models by leveraging a neural network architecture capable of capturing complex interactions between CpG sites.

\paragraph{ElasticNet} ElasticNet \cite{Elastic} is a linear regression model widely used in the construction of epigenetic clocks. By applying regularization to DNA methylation data, it effectively selects CpG sites related to age prediction in high-dimensional data.

\paragraph{Horvath's clock} Horvath's clock \cite{horvath2018epigenetic} is a DNA methylation-based biomarker developed by Steve Horvath to estimate the biological age of skin and blood cells.

\subsection{Training Details}

\paragraph{Causal Structure Learning (Step I)} Given a preprocessed matrix $\bm{X} \in \mathbb{R}^{N \times M}$, we parameterize the causal graph as a learnable matrix $\bm{A} \in \mathbb{R}^{M \times M}$. Both encoder and decoder are 1–64–1 MLPs. We train $\bm{A}$ via \cref{eq_OptDAG} with regularization coefficient $\lambda_{\mathrm{s}} = 1$, where we use AdamW with a batch size of 128, a learning rate of 3e-3, and 100 epochs for optimization. After training, we apply a pruning threshold of $\tau = 0.2$ to obtain the final adjacency matrix.

\paragraph{Mapping Causal Structure to Hyperbolic Space (Step II)} Given the trained $\bm{A}\in\mathbb{R}^{M\times M}$ from Step I, we map each variable into a $d$-dimensional hyperbolic space, where $d=D/2$, and the dimensionality of variable embeddings $D$ is determined by the selected transformer backbones (e.g., $D=200$ for scBERT and $D=512$ for scGPT). Then, $k$ hop in the graph contrastive learning \cref{eq_losscon} is set as 2, while the regularization weight $\lambda_{\mathrm{g}}$ is set as 0.1, and the relative weight for the restart matrix $w$ is set as 0.15. Finally, we also choose the AdamW optimizer with a batch size of 32, a learning rate of 1e-3, and 1000 epochs for optimization.

\paragraph{Transforming Hyperbolic Positional Encoding to Rotary Form (Step III)} For scBERT, we set the dimension of feature embeddings to 200 and the backbone network adopts the performer architecture. The pre-training process is consistent with the values in the original scBERT study, that is, epochs is set to 100, batch size is 3, learning rate is 1e-4, and Adam is used for optimization. For scGPT, we set the dimension of feature embeddings to 512. The backbone network has 4 transformer blocks, each with 8 attention heads. The pre-training process is consistent with the values in the original scBERT study, that is, epochs is set to 60, batch size is 5, learning rate is 1e-4, and Adam is used for optimization.

%%%%%%%%%%%%%%%%%%%%%%%%%%%%%%%%%%%%%%%%%%%%%%%%%%%%%%%%%%%%%%%%%%%%%%%%%%%%%%%

\section{Additional Experiments}\label{supp_experiments}
\subsection{Empirical Evaluation of CAPE's Properties}\label{supp_empirical}
In this empirical analysis, we evaluate the effectiveness of CAPE in enhancing both the causal awareness and robustness of the self-attention mechanism. Across all experiments, the query and key vectors are fixed and generated as $128$-dimensional random vectors: $\bm{q}_{v_m},\bm{k}_{v_n}\in\mathbb{R}^D\sim\mathcal{N}(0, \bm{1}_D)$, where $D=128$. The dimensionality of the Poincar{\'e} ball positional encodings $\bm{e}_{v_m},\bm{e}_{v_n}$ is set to $d=D/2=64$.
% but its sampling strategy varies depending on the task, which we will introduce in detail in each of the following experiments.

\subsubsection{Attention Attenuation Induced by Causal Distance and Causal Generality}\label{supp_distance_empirical}
In this analysis, pairs of $\{\bm{e}_{v_m}, \bm{e}_{v_n}\}$ are sampled from an isotropic Gaussian in $\mathbb{R}^d$ and subsequently normalized to lie within the unit Poincar{\'e} ball. As a result, the norm $r=\lVert\bm{e}_{v_m}\rVert =\lVert\bm{e}_{v_n}\rVert$ varies within the open interval $(0, 1)$, and the Poincar{\'e} distance $d_p(\bm{e}_{v_m}, \bm{e}_{v_n})$ spans the range $[1,5]$. The upper bound of the attention score, $\mathcal{A}^+$, is computed for various combinations of $d_p(\bm{e}_{v_m}, \bm{e}_{v_n})$ and $r$, and visualized as a 3D surface in \cref{fig_prop1}. 
On one hand, for fixed values of $r$, $\mathcal{A}^+$ monotonically decreases as the Poincar{\'e} distance (causal distance) increases, consistent with the causal distance-induced attention attenuation stated in \cref{prop_distance}. On the other hand, for fixed values of $d_p(\bm{e}_{v_m}, \bm{e}_{v_n})$, $\mathcal{A}^+$ also monotonically decreases as the causal generality ($1-r$) increases, aligning with the causal generality-induced attention attenuation stated in \cref{prop_specificity}.
% To generate Poincar{\'e} encodings $\bm{e}_{v_m}, \bm{e}_{v_n}$, we sample $N$ directional vectors from an isotropic Gaussian in $\mathbb{R}^d$, 
\begin{figure}[!t]
\centering
\includegraphics[width=0.6\linewidth]{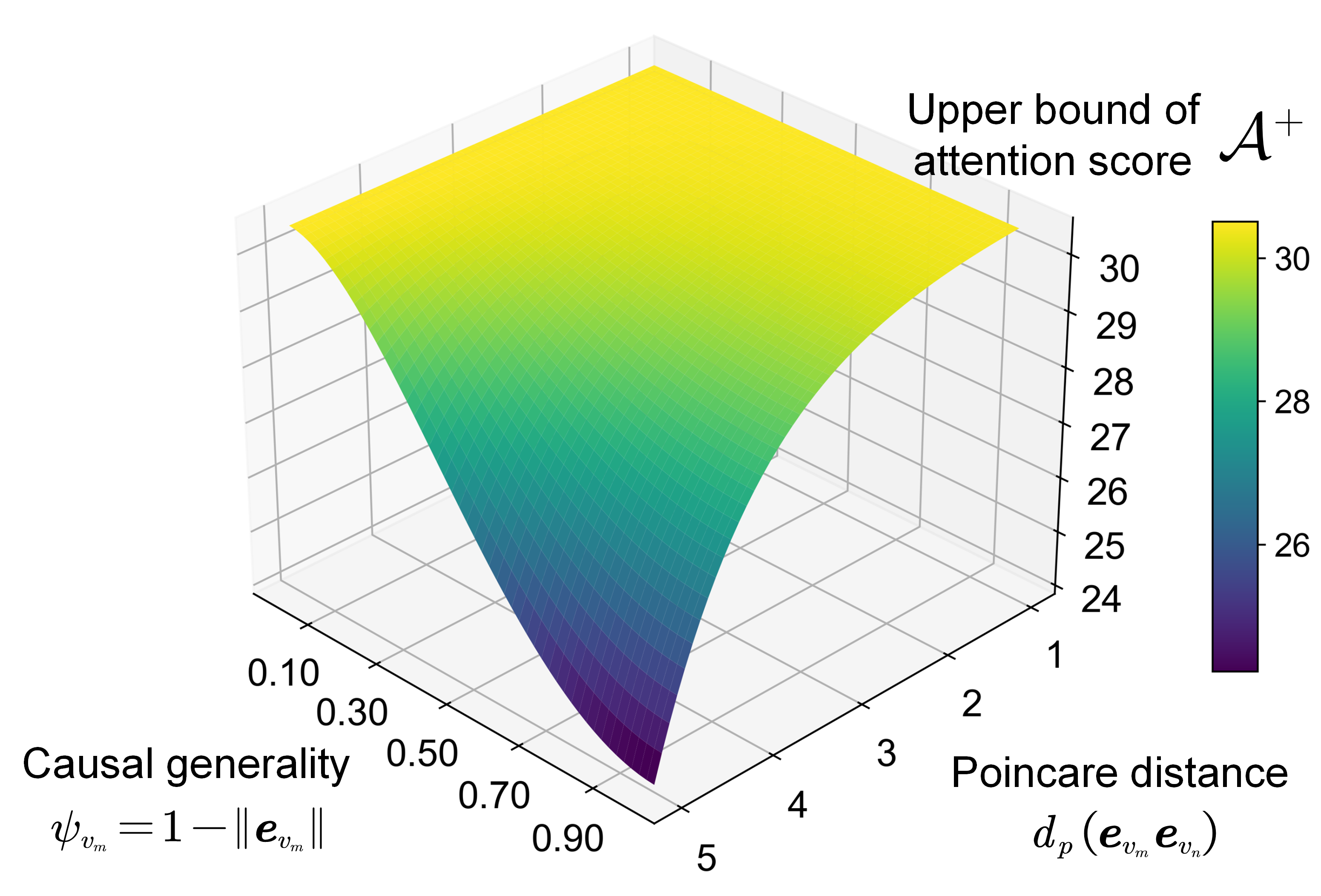}
\caption{3D surface showing the effect of Poincar{\'e} distance (causal distance) and causal generality on the upper bound of attention score $\mathcal{A}^+$. As Poincar{\'e} distance and causal generality increase, attention attenuation decreases.}
\label{fig_prop1}
\end{figure}

\subsubsection{Robustness to Positional Disturbances}
Here, we sample a single pair of $\{\bm{e}_{v_m}, \bm{e}_{v_n}\}$ as described in  \cref{supp_distance_empirical}. To simulate perturbations, we generate a varying number ($N\in[1,100]^{\mathbb{Z}}$) of Gaussian noise pairs $\{\bm{\varepsilon}_{v_m},\bm{\varepsilon}_{v_n}\}\sim\mathcal{N}(0,Diag(\sigma))$, which are added to $\bm{e}_{v_m}$ and $ \bm{e}_{v_n}$ to obtain perturbed positional encodings $\{\bm{e}_{v_m}^\prime,\bm{e}_{v_n}^\prime\}$, from which we compute the average attention bias $ \xi_N$ as defined in \cref{average attention bias}. For each value of $N$, the experiment is repeated for $T=100$ times, and the results are visualized as scattered plots in \cref{fig_prop3}.  Each panel in \cref{fig_prop3} corresponds to a different noise level, with standard deviations $\sigma=0.1,0.2, 0.3$. We observe that as $N$ grows from $1$ to $100$, the distribution of $ \xi_N$ becomes increasingly concentrated around zero across all noise levels. This empirical trend aligns with the theoretical result $\lim_{N\rightarrow+\infty}\xi_N\xrightarrow{P} 0$ stated in \cref{prop_asymptotic},  confirming the asymptotic robustness of CAPE-derived attention scores to random perturbations.

% To assess the robustness of the attention mechanism under noise, we investigate how perturbations in positional encoding affect the resulting attention scores. Specifically, we begin by sampling a single pair of Poincar{'e} embeddings $\bm{e}_{v_m}, \bm{e}_{v_n}$ using the the same strategy in \cref{supp_distance_empirical}. These embeddings and query \& key vectors remain constant across all trials. To evaluate sensitivity, we apply zero-mean Gaussian noise with varying standard deviations (std = 0.1, 0.2, 0.3) to both $\bm{e}_{v_m}$ and $\bm{e}_{v_n}$, producing disturbed versions. For each noise level, we vary the number of noisy samples $N\in[1,100]$ to simulate batch-level averaging or variance in hyperbolic representations. Each $(\bm{e}_{v_m}^\prime, \bm{e}_{v_n}^\prime)$ pair is independently sampled across $T=100$ repetitions, and the average noise-disturbed attention scores across $N$ samples are plotted as a scatter plot in \cref{fig_prop3}. We observe that, regardless of the noise std setting, the average noise-disturbed attention scores always converge to the original attention scores. Even under the setting of std = 0.3, which has a great impact on Poincar{\'e} positional encodings with a radius of 1, the magnitude of the shift is still not significant enough.

\begin{figure}[!t]
\centering
\includegraphics[width=\linewidth]{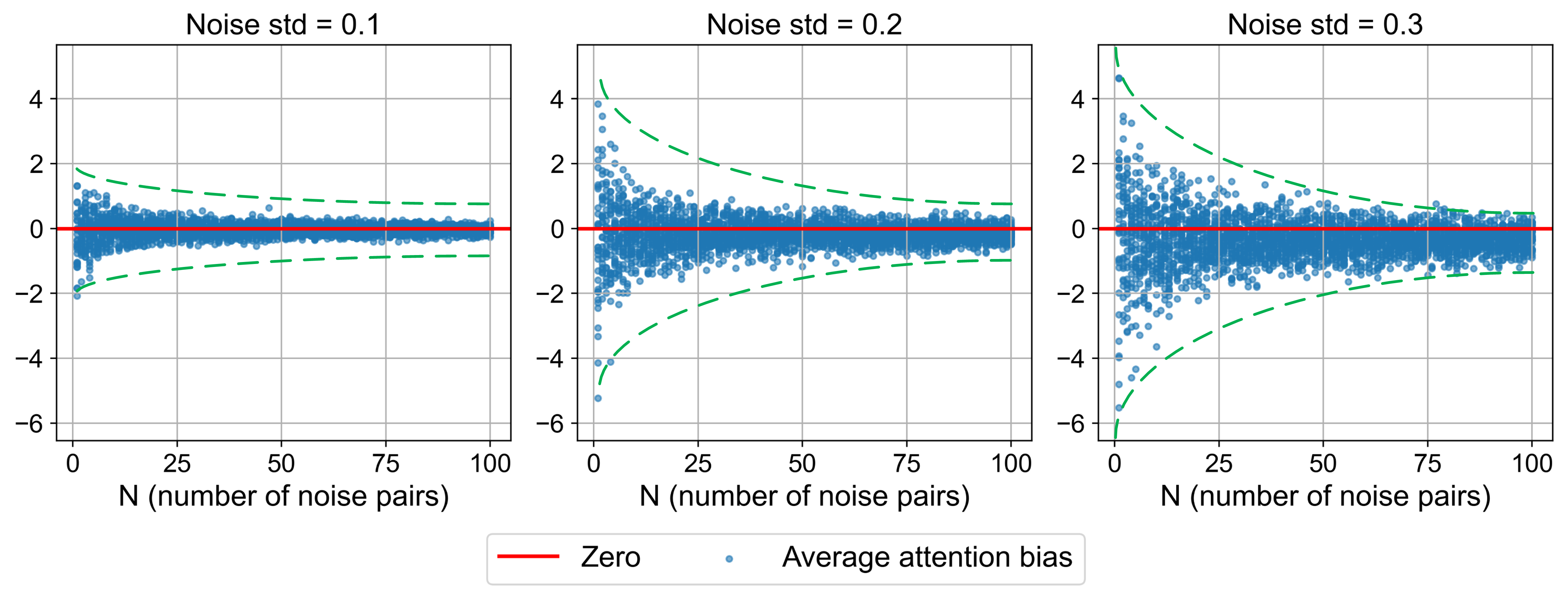}
\caption{Robustness of CAPE-derived attention scores to positional noise. Each subplot shows the average attention bias against the number of the noise pairs $N$ under three Gaussian noise levels($\sigma=0.1,0.2, 0.3$). The red horizontal line marks the zero bias.}
\label{fig_prop3}
\end{figure}

\subsection{Multi-Omics Analysis}\label{supp_omics}

\begin{table*}[!h]
\centering
\newcommand{\upbetter}{\textcolor[RGB]{0,114,206}{$\uparrow$}}
\caption{Performance comparison of cell type annotation on scRNA-seq datasets. Acc and MF1 denote accuracy and macro F1-score (\%), respectively.}\label{tab_CTA}
\resizebox{\linewidth}{!}{
\begin{tabular}{L{1.6cm}L{3cm}C{1.2cm}C{1.2cm}C{1.2cm}C{1.2cm}C{1.2cm}C{1.2cm}C{1.2cm}C{1.2cm}}
\toprule
\multirow{2}{*}{\raisebox{-0.5\height}{\textbf{Methods}}} & \multirow{2}{*}{\raisebox{-0.5\height}{\textbf{Pos Encoding}}} & \multicolumn{2}{c}{\textbf{hPBMC}} & \multicolumn{2}{c}{\textbf{hPancreas}} & \multicolumn{2}{c}{\textbf{hBMMC}} & \multicolumn{2}{c}{\textbf{mOP}} \\
\cmidrule(lr){3-4} \cmidrule(lr){5-6} \cmidrule(lr){7-8} \cmidrule(lr){9-10}
& & Acc\upbetter & MF1\upbetter & Acc\upbetter & MF1\upbetter & Acc\upbetter & MF1\upbetter & Acc\upbetter & MF1\upbetter \\
\midrule 
\multirow{3}{*}{scBERT} & Static absolute\textsuperscript{$\dagger$} & 75.74 & 67.34 & 69.21 & 67.03 & 67.09 & 59.25 & 74.37 & 70.22 \\
& Trainable relative & 77.51 & 70.66 & 73.48 & 71.14 & 69.79 & 66.90 & 77.64 & 71.79 \\
& CAPE & 80.71 & 72.32 & 78.07 & 74.31 & 74.49 & \textbf{71.55} & 85.27 & 80.41 \\
\cmidrule{1-10}
\multirow{3}{*}{scGPT} & Trainable absolute\textsuperscript{$\dagger$} & 84.48 & 75.39 & 70.76 & 68.03 & 67.18 & 60.93 & 80.14 & 77.03\\
& Trainable relative & 84.47 & 77.01 & 74.87 & 72.42 & 75.65 & 73.91 & 85.37 & 79.40 \\
& CAPE & \textbf{85.14} & \textbf{77.09} & \textbf{82.27} & \textbf{75.10} & \textbf{78.14} & 70.76 & \textbf{87.62} & \textbf{82.07} \\
\bottomrule
\end{tabular}
}
\end{table*}

In non-sequential data (e.g., single-cell multi-omics), learning high quality feature embeddings is critical for improving observation-level representations. For instance, in single-cell analysis, robust embeddings of genes or proteins inherently capture latent biological states (e.g., cell types or developmental trajectories), which directly enhance downstream tasks like clustering or classification. This aligns with practices in natural language processing (NLP): models such as BERT leverage a [CLS] token to aggregate sequence-level semantics for sentence classification. To validate CAPE's capability in bridging feature embeddings and observation-level semantics, we further applied CAPE to three representative observation-level tasks spanning multiple omics: (1) cell type annotation in scRNA-seq data, (2) cell clustering in single cell proteomics, and (3) age prediction in DNA methylomics.

We begin with cell type annotation, the most common task in single-cell foundational model to evaluate the cell embeddings generated by models. Following a similar experimental setup as described in \cref{sec_real}, cell embeddings are learned using scGPT and scBERT with three types of positional encoding across three human datasets (hPBMC, hPancreas, and hBMMC) and one mouse dataset (mOP) (See \cref{supp_heldout} for details). We find that both models, scGPT and scBERT, when combined with CAPE, achieve the best performance.

For single cell proteomics, we evaluate the cell embeddings in the cell clustering task, applying it to two datasets: SCoPE2\_Specht and SCoPE2\_Montalvo (see \cref{supp_heldout} for details). Given the absence of transformer-based models designed for single cell proteomics, we leveraged scGPT to generate cell embeddings. Although scGPT is originally designed for scRNA-seq data, we fine-tuned it on the pSCoPE\_Leduc dataset to adapt it for proteomics data. After fine-tuning, we used scGPT to obtain the cell embeddings for clustering, respectively using its default position encoding and CAPE for comparison.

Due to the lack of established foundational models and the scarcity of single cell proteomic computational methods, current work in single cell proteomics often uses methods originally developed for scRNA-seq as baseline. To ensure comprehensive benchmarking, we evaluated our method against: (1) scPROTEIN, a state-of-the-art representation learning framework specifically designed for single-cell proteomics \cite{scprotein}; (2) the common proteomics analysis pipeline (KNN-ComBat) combining KNN-based imputation with ComBat batch correction \cite{scp-pipeline}; and (3) established scRNA-seq computational methods adapted for proteomic data, including \cite{MAGIC, Autoclass, Harmony,Scanorama} (See \cref{supp_pos_enc} for details). As shown in \cref{tab_clustering}, scGPT with CAPE significantly outperforms both the original scGPT and other baseline methods across all evaluated metrics in clustering. This demonstrates that CAPE-derived cell embeddings preserve substantially richer biological information.

We further assess CAPE's performance in predicting age from DNA methylation patterns. Similar to the experimental setup in GPP (\cref{sec_real}), we utilize a transformer-based DNA methylation foundational model, MethylGPT \cite{Methylgpt}, to generate cell embeddings, using its default position encoding strategy along with CAPE. Similar to the position encoding used in single-cell foundational models like scGPT \cite{scgpt} and scBERT \cite{scbert}, MethylGPT assigns embeddings to each CpG site, similar to how genes are represented in scGPT. Additionally, methylation values, much like gene expression counts in scRNA-seq, are also assigned embeddings, which are then summed and subsequently used as input to the transformer blocks. For a comprehensive assessment, we benchmark CAPE's age prediction performance against three widely use methods including AltumAge \cite{de2022pan}, ElasticNet \cite{Elastic} and Horvath's clock \cite{horvath2018epigenetic}. 

After fine-tuning for age prediction, MethylGPT, enhanced with CAPE, achieved superior accuracy and exhibited the lowest median absolute error among all methods(\cref{tab_AgePred}). This demonstrates CAPE's capacity to capture the hidden causal structure between CpG sites, effectively learning biologically meaningful age-related patterns.

\begin{table*}[!t]
\centering
\newcommand{\upbetter}{\textcolor[RGB]{0,114,206}{$\uparrow$}}
\caption{Performance comparison of cell clustering on single-cell proteomics.}
\label{tab_clustering}
\begin{tabular}{L{3.2cm}C{1.2cm}C{2.2cm}C{1.2cm}C{1.2cm}C{1.2cm}C{1.2cm}}
\toprule
\multirow{2}{*}{\textbf{Methods}} & \multicolumn{3}{c}{\textbf{SCoPE2\_Specht}} & \multicolumn{3}{c}{\textbf{SCoPE2\_Montalvo}} \\ 
\cmidrule(lr){2-4} \cmidrule(lr){5-7} 
& ARI  & NMI & ASW & ARI & NMI  & ASW \\ 
\midrule
KNN-ComBat & 0.317 & 0.066 & 0.375 & 0.274 & 0.053 & 0.536 \\ 
MAGIC & 0.245 & 0.375 & 0.339 & 0.452 & 0.389 & 0.693    \\ 
AutoClass & 0.02  & 0.313 & 0.211 & 0.316 & 0.253 & 0.421  \\
Harmony & 0.406 & 0.230 & 0.422 & 0.443 & 0.132 & 0.682 \\
Scanorama & 0.013 & 0.215 & 0.003 & 0.001 & 0.138 & 0.006 \\
scPROTEIN  & 0.435 & 0.428 & 0.469 & 0.502 & 0.465 & \textbf{0.689}  \\
scGPT & 0.396 & 0.405 & 0.156 & 0.482 & 0.377 & 0.383 \\
\cmidrule{1-7}
scGPT w/ CAPE & \textbf{0.513} & \textbf{0.497} & \textbf{0.475} & \textbf{0.516} & \textbf{0.572} & 0.631 \\
\bottomrule
\end{tabular}
\end{table*}

\begin{table*}[!t]
\centering
\caption{Performance comparison of age prediction on DNA methylomics datasets. MedAE denotes Median Absolute Error.}\label{tab_AgePred}
\vspace{0.2em}
\resizebox{0.9\linewidth}{!}{
\begin{tabular}{lccccc}
\toprule
\textbf{Methods} & MethylGPT & AltumAge & ElasticNet & Horvath's clock & MethylGPT w/ CAPE \\
\midrule 
MedAE & 4.59 & 6.53 & 5.16 & 6.88 & \textbf{4.07} \\
\bottomrule
\end{tabular}
}
\end{table*}

\subsection{Sensitivity Analysis}\label{supp_sensitivity}
\begin{figure}[!t]
\centering
\includegraphics[width=\linewidth]{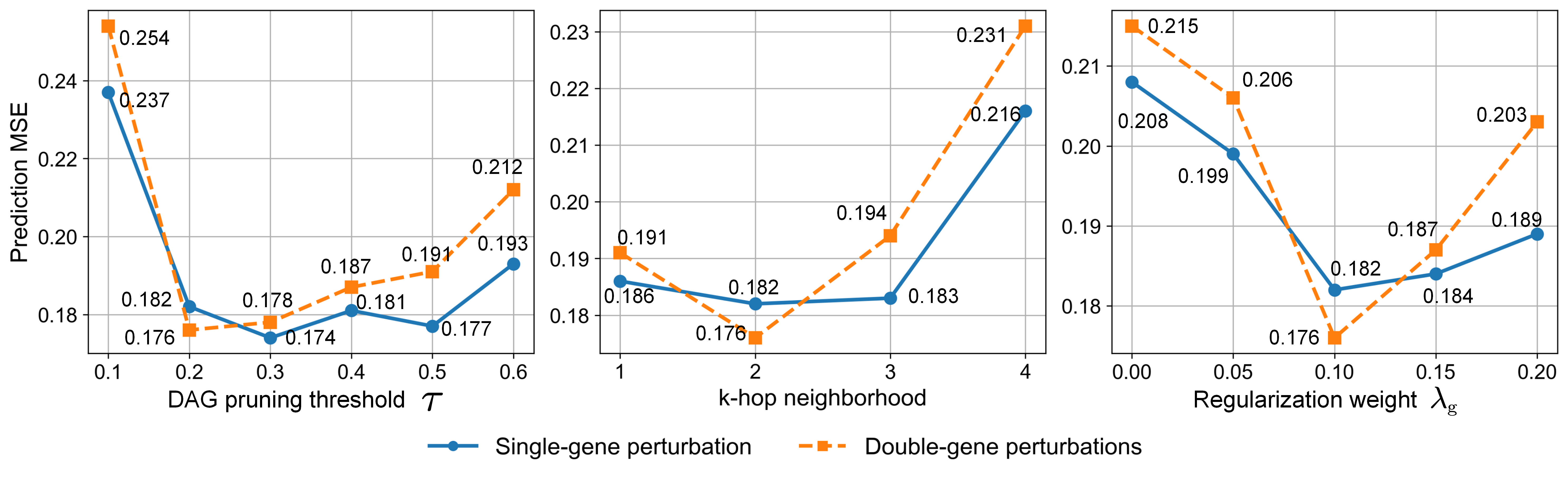}
\caption{Sensitivity analysis about DAG pruning threshold, k-hop neighborhood, and regularization weight.}\label{fig_sensitive}
\end{figure}
In this section, we conduct sensitivity analysis for three key hyperparameters of CAPE's training objective, including the DAG pruning threshold $\tau$ (see \cref{sec_DAG}), the $k$-hop neighborhood size for graph contrastive learning (\cref{eq_losscon}), and the weight $\lambda_g$ of the causal generality penalty term $\Omega$ in \cref{eq_losscon}.  \cref{fig_sensitive}  illustrates the Prediction Mean Squared Error (MSE) on single-gene and double-gene perturbation tasks as these hyperparameters are varied. 
% evaluated through both single-gene and double-gene perturbation prediction tasks. The hyperparameters include: (1) the DAG pruning threshold $\tau$ used in \cref{sec_DAG}; (2) the number of hops ($k$) used to identify positive pairs of features on the DAG for graph contrastive learning (\cref{eq_losscon}); (3) the weight $\lambda_g$ assigned to the causal generality penalty term $\Omega$ in \cref{eq_losscon}. As shown in \cref{fig_sensitive},  we empirically assess how prediction accuracy varies with changes in these hyperparameters. 

In the leftmost panel, we observe that increasing $\tau$ from $0.1$ to $0.2$ significantly improves prediction accuracy. However, as $\tau$ continues to increase beyond 0.2, performance gradually declines. This pattern reflects a trade-off: a low threshold ($\tau=0.1$) fails to sufficiently eliminate noisy, false-positive causal edges, whereas a high threshold ($\tau>0.3$) may excessively prune true causal edges, thus degrading the quality of the learned causal structure. 

The middle panel shows that performance peaks at $k=2$. A small $k$ (e.g., $ k=1$) may incorrectly designate features with a strong 2-hop causal relationship as negative causal pairs. Conversely, a large $k$ (e.g., $k\ge4$) risks misclassifying weakly or non-causally related features as positive causal pairs. Both extremes undermine the effectiveness of the graph contrastive loss in preserving salient causal relationships, leading to suboptimal positional encodings.

The rightmost panel displays a V-shaped trend with respect to $\lambda_g$, with optimal accuracy achieved at $\lambda_g=0.1$. A diminutive $\lambda_g$ may not sufficiently regularize causally general features towards the origin, thereby weakening the encoding of causal specificity. Conversely, an excessively large $\lambda_g$ could force all features towards the origin, collapsing causal distances and diminishing the model's capacity to discern varying causal strengths. Empirically, setting $\lambda_g$ in the range of 0.1 to 0.15 appear to offer a favorable balance, preserving both causal specificity and relative causal distances in the positional encodings.

\subsection{Complexity Analysis}\label{supp_complexity}
We begin by analyzing the computational complexity of CAPE. In Step I, given a data matrix $\bm{X} \in \mathbb{R}^{N \times M}$ with $N$ observations and $M$ features, CAPE implements the encoder and decoder functions (\cref{eq_enc,eq_dec}) using MLPs, and solves the nonlinear SEM through a trainable adjacency matrix $\bm{A} \in \mathbb{R}^{M \times M}$. This step is analogous to training a graph neural network \cite{DAG-GNN}, with a time complexity of $\mathcal{O}(NM^2)$ \cite{wu2020comprehensive}. To enforce acyclicity, CAPE introduces a smooth constraint term $h(\bm{A})$, which involves computing a matrix exponential and incurs a time complexity of $\mathcal{O}(M^3)$ \cite{fang2023low}. 
To mitigate this computational bottleneck, we adopt the low-rank approximation strategy proposed by Dong, et al. \cite{LoRAM} in our implementation. Specifically, $\bm{A}$ is approximated as $\bm{U}\bm{V}^\top$ with $\bm{U}, \bm{V} \in \mathbb{R}^{M \times r}$ and rank $r = 40$. This approximation reduces the computation complexity of acyclicity constraint to $\mathcal{O}(M^2r)$, yielding an overall complexity of $\mathcal{O}((N + r)M^2)$ for Step I. Thereby, CAPE achieves significantly improved scalability while maintaining its expressiveness. In Step II, CAPE maps each feature in the DAG to a $d$-dimensional hyperbolic embedding. The dominant cost in this step arises from graph constrastive learning, which has a time complexity of $\mathcal{O}(dM^2)$ \cite{cherti2023reproducible}. 
% computing the pairwise hyperbolic distances in the contrastive loss \cref{eq_losscon}, with overall complexity $\mathcal{O}(dM^2)$ \cite{cherti2023reproducible}. 

We empirically assess CAPE’s scalability with respect to the number of samples ($N$) and features ($M$). To evaluate the impact of $N$, we subsample $N = {1, 3, 5, 7, 9} \times 10^4$ instances from the GPP dataset (\cref{tab_GPP}) with $M = 100$ fixed. To assess the effect of $M$, we vary $M = {100, 300, 500, 700, 900}$ while fixing $N = 10^4$. For each configuration, we learn a separate adjacency matrix $\bm{A} \in \mathbb{R}^{M \times M}$ in Step I and reuse it in Step II. We repeat each experiment 10 times and report the average runtime in \cref{fig_complex}. As expected, runtime scales linearly with $N$, and approximately quadratically with $M$, reflecting the theoretical complexities of $\mathcal{O}(NM^2)$ and $\mathcal{O}(dM^2)$ in Steps I and II, respectively.

\begin{figure}[!t]
\centering
\includegraphics[width=\linewidth]{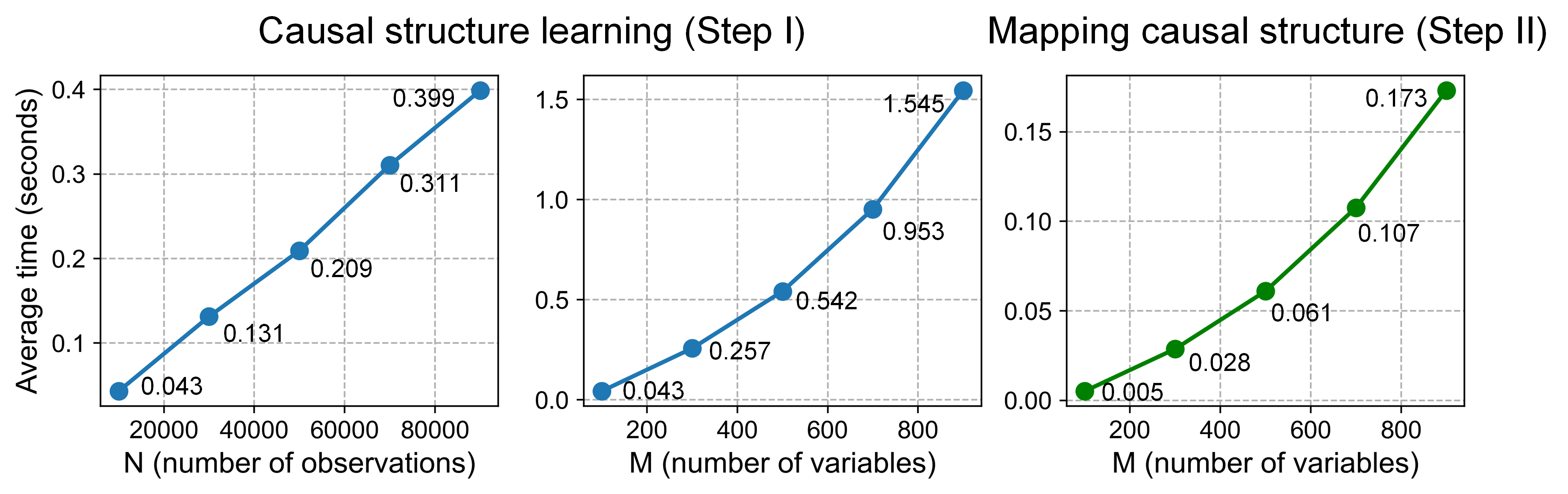}
\caption{Runtime per epoch during training Step I and Step II on sampled sub-datasets.}\label{fig_complex}
\end{figure}

\section{Limitations}\label{supp_limitations}
While CAPE offers a general and theoretically grounded solution for encoding causal structure in non-sequential data, its effectiveness currently relies on the quality of the inferred causal graph. Although we adopt a robust variational formulation for causal discovery, inaccuracies may arise in extremely noisy or undersampled settings. Additionally, our current implementation assumes feature-wise causal structure to be static across samples, which may not fully capture sample-specific heterogeneity in highly dynamic systems. These limitations point to promising directions for future work, such as incorporating uncertainty-aware causal discovery or adapting CAPE to sample-dependent causal structures.

\section{Broader Impacts}\label{supp_broader}
By enabling transformers to model non-sequential yet causally related features, CAPE has the potential to advance representation learning in a wide range of scientific domains where causal structure is key—such as biomedicine, economics, and environmental science. In particular, our method may assist researchers in uncovering interpretable, causally grounded representations from high-dimensional biological data, potentially informing therapeutic target discovery or precision medicine. As with any causal inference technique, misuse or overinterpretation of inferred relationships remains a risk, especially in domains where observational biases are strong. We encourage responsible use of CAPE in conjunction with domain expertise, and highlight the importance of open datasets, reproducible code, and transparent evaluation to mitigate unintended consequences.

\end{document}